\newcommand\encircle[1]{%
  \tikz[baseline=(X.base)] 
    \node (X) [draw, shape=circle, inner sep=0] {\strut #1};}
\newcommand{\titleName}{A Fast and Scalable Joint Estimator for Integrating Additional Knowledge in Learning Multiple Related Sparse Gaussian Graphical Models}
\newcommand{\titleNameShort}{JEEK: Joint Elementary Estimator for Learning Multi-sGGM Using Additional Knowledge}
\newcommand{\sref}[1]{Section~(\ref{#1})} 
\newcommand{\eref}[1]{Eq.~(\ref{#1})} 
\newcommand{\rref}[1]{Theorem~(\ref{#1})} 
\newcommand{\lref}[1]{Lemma~(\ref{#1})} 
\newcommand{\cref}[1]{Condition~(\ref{#1})} 
\newcommand{\coref}[1]{Corollary~(\ref{#1})}
\def\R{{\mathbb R}}        
\def\P{{\mathbb P}}        
\def\E{{\mathbb E}}        
\def\1{{\mathbf 1}}        
\def\x{{\mathbf x}}
\titlespacing\section{0pt}{3pt plus 0pt minus 1pt}{0pt plus 1pt minus 1pt}
\titlespacing\subsection{0pt}{1pt plus 0pt minus 1pt}{0pt plus 0pt minus 0pt}
\titlespacing\subsubsection{0pt}{1pt plus 0pt minus 1pt}{0pt plus 0pt minus 0pt}
\titlespacing{\paragraph}{0pt}{1pt}{0pt}[0pt]  
\preto{\@tabular}{\parskip=5pt}
\setlist[itemize]{leftmargin=*}
\definecolor{colora}{rgb}{.7, .1, .1}
\icmltitlerunning{\titleNameShort}
\begin{document}

\twocolumn[
\icmltitle{\titleName}



\icmlsetsymbol{equal}{*}

\begin{icmlauthorlist}
\icmlauthor{Beilun Wang}{uva}
\icmlauthor{Arshdeep Sekhon}{uva}
\icmlauthor{Yanjun Qi}{uva}
\end{icmlauthorlist}

\icmlaffiliation{uva}{Department of Computer Science, University of Virginia, 
\emph{http://www.jointnets.org/}}

\icmlcorrespondingauthor{Beilun Wang}{bw4mw@virginia.edu}
\icmlcorrespondingauthor{Yanjun Qi}{yanjun@virginia.edu}

\icmlkeywords{Machine Learning, ICML}

\vskip 0.3in
]



\printAffiliationsAndNotice{}  

\begin{abstract}

We consider the problem of including additional knowledge in estimating sparse Gaussian graphical models (sGGMs) from aggregated samples, arising often in bioinformatics and neuroimaging applications. Previous joint sGGM estimators either fail to use existing knowledge or cannot scale-up to many tasks (large $K$) under a high-dimensional (large $p$) situation.  In this paper, we propose a  novel \underline{J}oint \underline{E}lementary \underline{E}stimator incorporating additional \underline{K}nowledge (JEEK) to infer multiple related sparse Gaussian Graphical models from large-scale heterogeneous data. Using domain knowledge as weights, we design a novel hybrid norm as the minimization objective to enforce the superposition of two weighted sparsity constraints, one on the shared interactions and the other on the task-specific structural patterns. This enables JEEK to elegantly consider various forms of existing knowledge based on the domain at hand and avoid the need to design knowledge-specific optimization. JEEK is solved through a fast and entry-wise parallelizable solution that largely improves the computational efficiency of the state-of-the-art  $O(p^5K^4)$ to $O(p^2K^4)$. We conduct a rigorous statistical analysis showing that JEEK achieves the same  convergence rate $O(\log(Kp)/n_{tot})$ as the state-of-the-art estimators that are much harder to compute. 
Empirically, on multiple synthetic datasets and two real-world data, JEEK outperforms the speed of the state-of-arts significantly while achieving the same level of prediction accuracy. \footnote{In this updated version, we correct one equation error we had before about kw norm's dual form.  Our code implementation was correct, therefore no change in our toolbox and empirical results.}

\end{abstract}

\section{Introduction}
\label{sec:intro}

Technology revolutions in the past decade have collected large-scale heterogeneous samples from many scientific domains. For instance, genomic technologies have delivered petabytes of molecular measurements across more than hundreds of types of cells and tissues from national projects like ENCODE~\cite{encode2012integrated} and TCGA~\cite{cancer2011integrated}. Neuroimaging technologies have generated petabytes of functional magnetic resonance imaging (fMRI) datasets across thousands of human subjects (shared publicly through projects like openfMRI~\cite{poldrack2013toward}. Given such data, understanding and quantifying variable graphs from heterogeneous samples about multiple contexts is a fundamental analysis task.

Such variable graphs can significantly simplify network-driven studies about diseases \cite{ideker2012differential}, can help understand the neural characteristics underlying clinical disorders \cite{uddin2013salience} and can allow for understanding genetic or neural pathways and systems. 
The number of contexts (denoted as $K$) that those applications need to consider grows extremely fast, ranging from tens (e.g., cancer types in TCGA~\cite{cancer2011integrated}) to thousands (e.g., number of subjects in openfMRI~\cite{poldrack2013toward}). The number of variables (denoted as $p$) ranges from hundreds (e.g., number of brain regions) to tens of thousands (e.g., number of human genes).

The above data analysis problem can be formulated as jointly estimating $K$ conditional dependency graphs $G^{(1)}, G^{(2)},\dots,G^{(K)}$ on a single set of $p$ variables based on heterogeneous samples accumulated from $K$ distinct contexts. For homogeneous data samples from a given $i$-th context, one typical approach is the sparse Gaussian Graphical Model (sGGM)~\cite{lauritzen1996graphical,yuan2007model}. sGGM assumes  samples are independently and identically drawn from $N_p(\mu^{(i)}, \Sigma^{(i)})$, a multivariate
Gaussian distribution with  mean vector $\mu^{(i)}$ and covariance matrix $\Sigma^{(i)}$. 
The graph structure $G^{(i)}$ is encoded by the sparsity
pattern of the inverse covariance matrix, also named precision
matrix, $\Omega^{(i)}$. $\Omega^{(i)} := ({\Sigma^{(i)}})^{-1}$. $\Omega^{(i)}_{jk} = 0$ if and only if in $G^{(i)}$ an edge does not connect $j$-th node and $k$-th node (i.e., conditional independent). sGGM imposes an $\ell_1$ penalty on the parameter $\Omega^{(i)}$ to achieve a consistent estimation under high-dimensional situations.
When handling heterogeneous data samples, rather than estimating sGGM of each condition separately,
a multi-task formulation that jointly estimates $K$ different but related sGGMs can lead to a better generalization\cite{caruana1997multitask}.

Previous studies for joint estimation of multiple sGGMs roughly fall into four categories:  \cite{danaher2013joint,mohan2013node,chiquet2011inferring,honorio2010multi,guo2011joint,zhang2012learning,zhang2010learning,zhu2014structural}:
(1) The first group seeks to optimize a sparsity regularized data likelihood function plus an extra penalty function $\mathcal{R}'$ to enforce structural similarity among multiple estimated networks. Joint graphical lasso (JGL)
\cite{danaher2013joint} proposed an alternating direction method of multipliers (ADMM) based optimization algorithm to work with two regularization functions ($\ell_1 + \mathcal{R}'$). (2) The second category tries to recover the support of $\Omega^{(i)}$ using sparsity penalized regressions in a column by column fashion. Recently \cite{monti2015learning} proposed to learn population and subject-specific brain connectivity networks via a so-called ``Mixed Neighborhood Selection'' (MSN) method in this category. (3) The third type of methods seeks to minimize the joint sparsity of the target precision matrices under matrix inversion constraints. One recent study, named SIMULE (Shared and Individual parts of MULtiple graphs Explicitly) \cite{wang2017constrained}, automatically infers both specific edge patterns that are unique to each context and shared interactions preserved among all the contexts (i.e. by modeling each precision matrix as $\Omega^{(i)} = \Omega^{(i)}_I + \Omega_S$) via the constrained $\ell_1$ minimization. Following the CLIME estimator \cite{JMLR:v15:pang14a}, the constrained $\ell_1$ convex formulation can also be solved column by column via linear programming. However, all three categories of aforementioned estimators are difficult to scale up when the dimension $p$ or the number of tasks $K$ are large because they cannot avoid expensive steps like SVD \cite{danaher2013joint} for JGL, linear programming for SIMULE or running multiple iterations of $p$ expensive penalized regressions in MNS. (4) The last category extends the so-called "Elementary Estimator" graphical model (EE-GM) formulation \cite{yang2014elementary} to revise JGL's penalized likelihood into a constrained convex program that minimizes ($\ell_1 + \mathcal{R}'$).  One proposed estimator FASJEM \cite{wang2017fast} is solved in an entry-wise manner and group-entry-wise manner that largely outperforms the speed of its JGL counterparts.  More details of the related works are in \sref{sec:rel}.

One significant caveat of state-of-the-art joint sGGM estimators is the fact that little attention has been paid to incorporating existing knowledge of the nodes or knowledge of the relationships among nodes in the models. 
In addition to the samples themselves, additional information is widely available in real-world applications. In fact, incorporating the knowledge is of great scientific interest. A prime example is when estimating the functional brain connectivity networks among brain regions based on fMRI samples, the spatial position of the regions are readily available. Neuroscientists have gathered considerable knowledge regarding the spatial and anatomical evidence underlying brain connectivity (e.g., short edges and certain anatomical regions are more likely to be connected \cite{watts1998collective}). Another important example is the problem of identifying gene-gene interactions from patients' gene expression profiles across multiple cancer types. Learning the statistical dependencies among genes from such heterogeneous datasets can help to understand how such dependencies vary from normal to abnormal and help to discover contributing markers that influence or cause the diseases. Besides the patient samples, state-of-the-art bio-databases like HPRD \cite{prasad2009human} have collected a significant amount of information about direct physical interactions among corresponding proteins, regulatory gene pairs or signaling relationships collected from high-qualify bio-experiments.

Although being strong evidence of structural patterns we aim to discover, this type of information has rarely been considered in the joint sGGM formulation of such samples. To the authors' best knowledge, only one study named as W-SIMULE tried to extend the constrained $\ell_1$ minimization in SIMULE into weighted $\ell_1$ for considering spatial information of brain regions in the joint discovery of heterogeneous neural connectivity graphs \cite{singh2017constrained}. This method was designed just for the neuroimaging samples and has $O(p^5K^4)$ time cost, making it not scalable for large-scale settings (more details in Section~\ref{sec:meth}).

This paper aims to fill this gap by adding additional knowledge most effectively into scalable and fast joint sGGM estimations. We propose a novel model, namely \underline{J}oint \underline{E}lementary \underline{E}stimator incorporating additional \underline{K}nowledge (JEEK), that presents a principled and scalable strategy to include additional knowledge when estimating multiple related sGGMs jointly.  Briefly speaking, this paper makes the following contributions:

    \setlist{nolistsep}
\begin{itemize}[noitemsep]
    \item \textbf{Novel approach:} JEEK presents a new way of integrating additional knowledge in learning multi-task sGGMs in a scalable way. (Section~\ref{sec:meth})
    \item \textbf{Fast optimization:} 
  We optimize JEEK through an entry-wise and group-entry-wise manner that can dramatically improve the time complexity to $O(p^2K^4)$. (Section~\ref{sec:optm})
    \item \textbf{Convergence rate:} We theoretically prove the convergence rate of JEEK as $O(\log(Kp)/n_{tot})$. This rate shows the benefit of joint estimation and achieves the same convergence rate as the state-of-the-art that are much harder to compute. 
    (Section~\ref{sec:theory})
    \item \textbf{Evaluation:} We evaluate JEEK using several synthetic datasets and two real-world data, one from neuroscience and one from genomics. It outperforms state-of-the-art baselines significantly regarding the speed. (Section~\ref{sec:exp})
\end{itemize}

JEEK provides the flexibility of using ($K+1$) different weight matrices representing the extra knowledge.
We try to showcase a few possible designs of the weight matrices in Section~\ref{sec:designW}, including (but not limited to): 
\begin{itemize}
\item Spatial or anatomy knowledge about brain regions;
\item Knowledge of known co-hub nodes or perturbed  nodes; 
\item Known group information about nodes, such as genes belonging to the same biological pathway or cellular location; 
\item Using existing known edges as the knowledge, like the known protein interaction databases for discovering gene networks (a semi-supervised setting for such estimations).
\end{itemize}
We sincerely believe the scalability and flexibility provided by JEEK can make structure learning of joint sGGM feasible in many real-world tasks.

\textit{Att:} Due to space limitations, we have put details of certain contents (e.g., proofs)  in the appendix. Notations with ``S:'' as the prefix in the numbering mean the corresponding contents are in the appendix. For example, full proofs are in \sref{sec:proof}.

\textit{Notations: } math notations we use are described in \sref{sec:moremeth}. $n_{tot} =\sum\limits_{i=1}^{K}n_i$ is the total number of data samples. The Hadamard product $\circ$ is the element-wise product between two matrices. Also to simplify the notations, we abuse the notation $\dfrac{1}{W}$ to represent a new matrix being generated by element wise division of each entry in $W$ by $1$.

\section{Background}
\label{sec:background}

\paragraph{Sparse Gaussian graphical model (sGGM): }
The classic formulation of estimating sparse Gaussian Graphical model \cite{yuan2007model} from a single given condition (single sGGM) is the ``graphical lasso'' estimator (GLasso) \cite{yuan2007model,banerjee2008model}. It solves the following $\ell_1$ penalized maximum likelihood estimation (MLE) problem:
\begin{equation}
  \label{eq:ggm}
  \argmin\limits_{\Omega>0} -\log\text{det}(\Omega) + <\Omega,\hat{\Sigma}> + \lambda_n ||\Omega||_1
\end{equation}

\paragraph{M-Estimator with Decomposable Regularizer in High-Dimensional Situations:}~Recently the seminal study~\cite{negahban2009unified} proposed a unified framework for high-dimensional analysis of the following general formulation: M-estimators with decomposable regularizers:
\begin{equation}
\label{eq:m-estimator}
  \argmin\limits_{\theta} \mathcal{L}(\theta) + \lambda_n \mathcal{R}(\theta)
\end{equation}
where $\mathcal{R}(\cdot)$ represents a decomposable regularization function and $\mathcal{L}(\cdot)$ represents a loss function (e.g., the negative log-likelihood function in sGGM $\mathcal{L}(\Omega)=-\log\text{det}(\Omega) + <\Omega,\hat{\Sigma}>$). Here $\lambda_n > 0$ is the tuning parameter. 
\\

\paragraph{Elementary Estimators (EE):}~
Using the analysis framework from ~\cite{negahban2009unified}, recent studies~\cite{yang2014elementary1,yang2014elementary,yang2014elementary2} propose a new category of estimators named ``Elementary estimator'' (EE) with the following general formulation: 
\begin{equation}
\label{eq:ee}
  \begin{split}
    &\argmin\limits_{\theta} \mathcal{R}(\theta)\\
    &\text{subject to:} \mathcal{R}^*(\theta -\hat{\theta}_n) \le \lambda_n 
    \end{split}
\end{equation}
Where $\mathcal{R}^*(\cdot)$ is the dual norm of $\mathcal{R}(\cdot)$,  
\begin{equation}
\mathcal{R}^*(v) := \sup\limits_{u \ne 0}\frac{<u,v>}{\mathcal{R}(u)} = \sup\limits_{\mathcal{R}(u) \le 1}<u,v>.
\end{equation}

The solution of \eref{eq:ee} achieves the near optimal convergence rate as \eref{eq:m-estimator} when satisfying certain conditions.
$\mathcal{R}(\cdot)$ represents a decomposable regularization function (e.g., $\ell_1$-norm) and $\mathcal{R}^*(\cdot)$ is the dual norm of $\mathcal{R}(\cdot)$ (e.g., $\ell_{\infty}$-norm is the dual norm of $\ell_1$-norm). $\lambda_n$ is a regularization parameter.

The basic motivation of \eref{eq:ee} is to build simpler and possibly fast estimators, that yet come with statistical guarantees that are nonetheless comparable to regularized MLE. $\hat{\theta}_n$  needs to be carefully constructed, well-defined and closed-form for the purpose of simpler computations.  The formulation defined by \eref{eq:ee} is to ensure its solution having the desired structure defined by $\mathcal{R}(\cdot)$. For cases of high-dimensional estimation of linear regression models, $\hat{\theta}_n$ can be the classical ridge estimator that itself is closed-form and with strong statistical convergence guarantees in high-dimensional situations.

\paragraph{EE-sGGM:}  \cite{yang2014elementary} proposed elementary estimators for graphical models (GM) of exponential families, in which 
$\hat{\theta}_n$ represents so-called proxy of backward mapping for the target GM (more details in Section~\ref{seca:backward}). 
The key idea  (summarized in the upper row of Figure~\ref{fig:digram}) is to investigate the vanilla MLE and where it “breaks down” for estimating a graphical model of exponential families in the case of high-dimensions \cite{yang2014elementary}. Essentially the vanilla graphical model MLE can be expressed as a backward mapping that computes the model parameters from some given moments in an exponential family distribution. For instance, in the case of learning Gaussian GM (GGM) with vanilla MLE, the backward mapping is $\hat{\Sigma}^{-1}$ that estimates $\Omega$ from the sample covariance matrix (moment) $\hat{\Sigma}$. We introduce the details of backward mapping in  Section~\ref{seca:backward}.

However, even though this backward mapping has a simple closed form for GGM, the backward mapping is normally not well-defined in high-dimensional settings. When given the sample covariance $\hat{\Sigma}$, we cannot just compute the vanilla MLE solution as $[\hat{\Sigma}]^{-1}$ for GGM since $\hat{\Sigma}$ is rank-deficient when $p>n$. Therefore Yang et al. \cite{yang2014elementary} used carefully constructed proxy backward maps as $\hat{\theta}_n = [T_v(\hat{\Sigma})]^{-1}$  that is both available in closed-form, and well-defined in high-dimensional settings for GGMs. 
We introduce the details of $[T_v(\hat{\Sigma})]^{-1}$ and its statistical property in  Section~\ref{seca:backward}. Now \eref{eq:ee} becomes the following closed-form estimator for learning sparse Gaussian graphical models ~\cite{yang2014elementary}: 
\begin{equation}
  \begin{split}
    \argmin_{\Omega} & ||\Omega||_{1,,\text{off}}\\
    \text{subject to:} &||\Omega - [T_v(\hat{\Sigma})]^{-1}||_{\infty,\text{off}} \le \lambda_n
    \label{eq:eeggm}
    \end{split}
    \vspace{-5mm}
\end{equation}
\eref{eq:eeggm} is a special case of \eref{eq:ee}, in which $\mathcal{R}(\cdot)$ is the off-diagonal $\ell_1$-norm and the precision matrix $\Omega$ is the $\theta$ we search for. 
When $\mathcal{R}(\cdot)$ is the  $\ell_1$-norm, the solution of \eref{eq:ee} (and \eref{eq:eeggm}) just needs to  perform entry-wise thresholding operations on $\hat{\theta}_n$  to ensure the desired sparsity structure of its final solution.

\begin{figure}
    \centering
    \includegraphics[width=\linewidth]{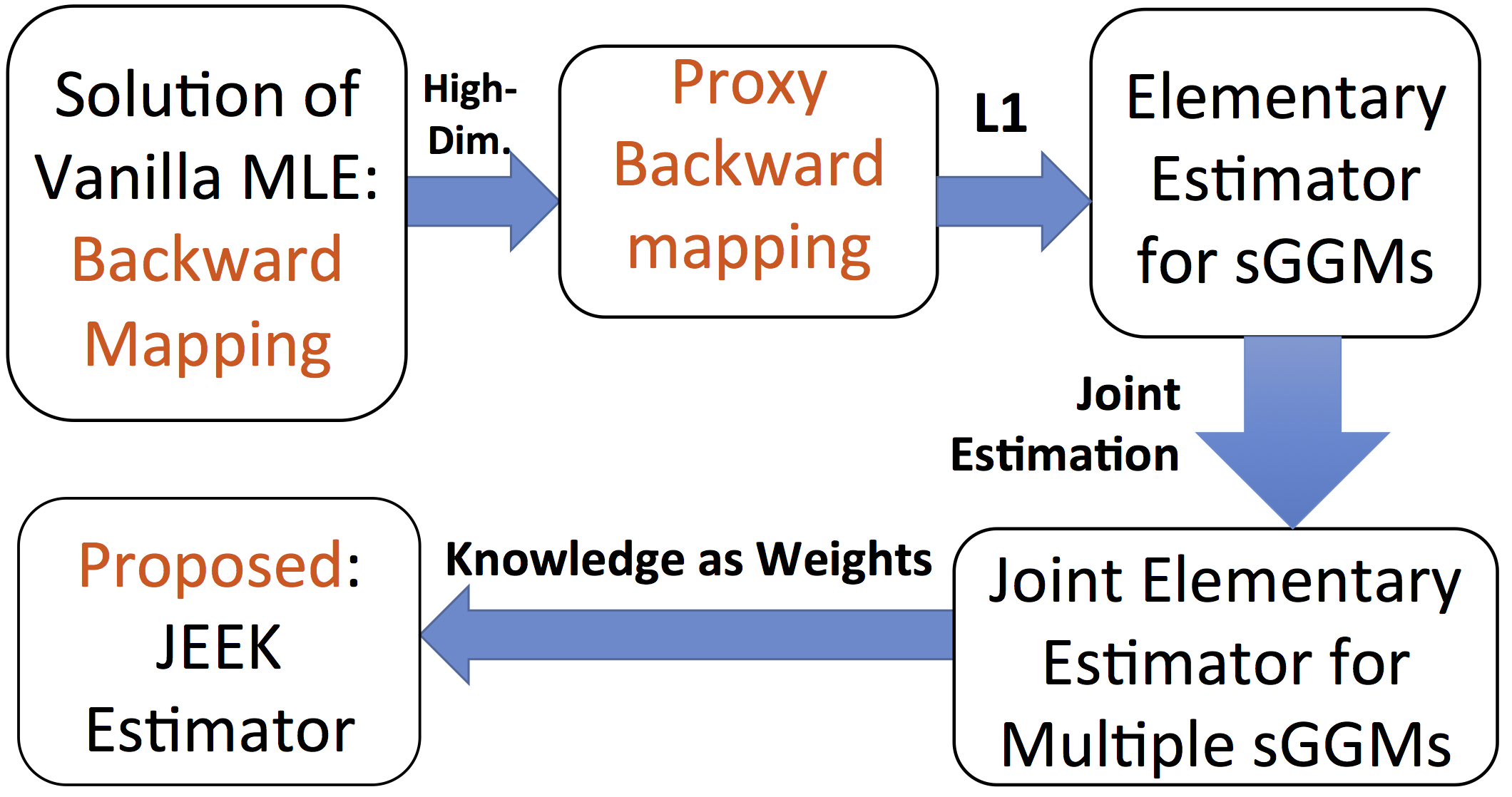}
\label{fig:digram}
\caption{Basic idea of JEEK.}
\vspace{-5mm}
\end{figure}

\section{Proposed Method: JEEK}
\label{sec:meth}

In applications of Gaussian graphical models, we typically have more information than just the data samples themselves. 
This paper aims to propose a simple, scalable and theoretically-guaranteed joint estimator for estimating multiple sGGMs with additional knowledge in large-scale situations.

\subsection{A Joint EE (JEE) Formulation}
We first propose to jointly estimate multiple related sGGMs from $K$ data blocks using the following  formulation:
\begin{equation}
\label{eq:sumsggm}
\begin{split}
\argmin\limits_{\Omega^{(1)}, \Omega^{(2)},
\dots,\Omega^{(K)}} &\sum\limits_{i = 1}^{K} \mathcal{L}(\Omega^{(i)}) + 
\lambda_n  \mathcal{R}(\Omega^{(1)}, \Omega^{(2)},
\dots,\Omega^{(K)})
\end{split}
\end{equation}
where $\Omega^{(i)}$ denotes the precision matrix for $i$-th task. $\mathcal{L}(\Omega)=-\log\text{det}(\Omega) + <\Omega,\hat{\Sigma}>$ describes the negative log-likelihood function in sGGM. $\Omega^{(i)} \succ 0$ means that $\Omega^{(i)}$ needs to be a positive
definite matrix.  $\mathcal{R}(\cdot)$ represents a decomposable regularization function enforcing sparsity and structure assumptions (details in \sref{sec:wkn}).

For ease of notation,  we denote that $\Omega^{tot} = (\Omega^{(1)},\Omega^{(2)},\dots,\Omega^{(K)})$ and $\Sigma^{tot} = (\Sigma^{(1)},\Sigma^{(2)},\dots,\Sigma^{(K)})$. $\Omega^{tot}$ and $\Sigma^{tot}$ are both $p \times Kp$ matrices (i.e., $Kp^2$ parameters to estimate). Now define an inverse function as $\text{inv}(A^{tot}) := ({A^{(1)}}^{-1},{A^{(2)}}^{-1},\dots,{A^{(K)}}^{-1})$, where $A_{tot}$ is a given $p \times Kp$ matrix with the same structure as $\Sigma_{tot}$. Then we rewrite  \eref{eq:sumsggm} into the following form: 
\begin{equation}
\label{eq:sumsggmu}
\begin{split}
\argmin\limits_{\Omega^{tot} } & \mathcal{L}(\Omega^{tot}) + 
\lambda_n  \mathcal{R}(\Omega^{tot})
\end{split}
\end{equation}

Now  connecting \eref{eq:sumsggmu} to \eref{eq:m-estimator} and \eref{eq:ee}, we propose the following joint elementary estimator (JEE) for learning multiple sGGMs: 
\begin{equation}
\label{eq:eetot}
  \begin{split}
    &\argmin\limits_{\Omega^{tot}} \mathcal{R}(\Omega^{tot})\\
    &\text{subject to: } \mathcal{R}^*(\Omega^{tot} -\hat{\Omega}^{tot}_{n_{tot}}) \le \lambda_n 
    \end{split}
\end{equation}

 The fundamental component in~\eref{eq:ee} for the single context sGGM was to use a well-defined proxy function  to approximate the vanilla MLE solution (named as the backward mapping for exponential family distributions)~\cite{yang2014elementary}. The proposed proxy $\hat{\theta}_n=[T_v(\hat{\Sigma})]^{-1}$ is both well-defined under high-dimensional situations and also has a simple closed-form. Following a similar idea, when learning multiple sGGMs, we propose to use $inv(T_v(\hat{\Sigma}^{tot}))$ for $\hat{\Omega}^{tot}_{n_{tot}}$ and get the following joint elementary estimator:  
\begin{equation}
\begin{split}
    \label{eq:JEEK-gen}
     \argmin\limits_{\Omega^{tot}} & \mathcal{R}(\Omega^{tot}) \\
    \text{Subject to: } & \mathcal{R}^*( \Omega^{tot} - inv(T_v(\hat{\Sigma}^{tot})) ) \le \lambda_n
\end{split}
\end{equation}

\subsection{Knowledge as Weight (KW-Norm)} 
\label{sec:wkn}

The main goal of this paper is to design a principled strategy to incorporate existing knowledge (other than samples or structured assumptions) into the multi-sGGM formulation. We consider two factors in such a design: 

(1) When learning multiple sGGMs jointly from real-world applications, it is often of great scientific interests to model and learn context-specific graph variations explicitly, because such variations can ``fingerprint'' important markers in domains like cognition \cite{ideker2012differential} or pathology \cite{kelly2012characterizing}. Therefore we design to share parameters between different contexts. Mathematically, we model $\Omega^{(i)}$ as two parts:
\begin{equation}
\label{eq:omega_i}
\Omega^{(i)} = \Omega_I^{(i)} + \Omega_S
\end{equation}
where $\Omega_I^{(i)}$ is the individual precision matrix for context $i$ and $\Omega_S$ is the shared precision matrix between contexts. Again, for ease of notation we denote $\Omega_I^{tot} = (\Omega_I^{(1)},\Omega_I^{(2)},\dots,\Omega_I^{(K)})$ and $\Omega_S^{tot}=(\Omega_S,\Omega_S,\dots,\Omega_S)$.

(2) We represent additional knowledge as positive weight matrices from $\R^{p \times p}$. More specifically, we represent the knowledge of the task-specific graph as weight matrix $\{W^{(i)}\}$ and $W_S$ representing existing knowledge of the shared network. The positive matrix-based representation is a powerful and flexible strategy that can describe many possible forms of existing knowledge. In \sref{sec:designW}, we provide a few different designs of $\{W^{(i)}\}$ and $W_S$ for real-world applications. In total, we have weight matrices $\{ W^{(1)}_I, W^{(2)}_I, \dots, W^{(K)}_I, W_S \}$ to represent additional knowledge. To simplify notations, we denote $W^{tot}_I = (W^{(1)}_I, W^{(2)}, \dots,  W^{(K)}_I)$ and $W^{tot}_S = (W_S, W_S,\dots, W_S)$.

Now we propose the following \underline{k}nowledge as \underline{w}eight norm (kw-norm) combining the above two: 
\begin{equation} 
    \label{eq:kw-norm}
    \mathcal{R}(\Omega^{tot}) = ||W^{tot}_I\circ\Omega^{tot}_I||_1+ ||W^{tot}_S \circ \Omega^{tot}_S||_1
\end{equation}
Here the Hadamard product $\circ$ is the element-wise product between two matrices i.e. $[A\circ B]_{ij} = A_{ij}B_{ij}$. 

The kw-norm(~\eref{eq:kw-norm}) has the following three properties:
\begin{itemize}
    \item (i) kw-norm is a norm function if and only if any entries in $W^{tot}_I$ and $W^{tot}_S$ do not equal to $0$.
    \item (ii) If the condition in (i) holds, kw-norm is a decomposable norm.
    \item (iii) If the condition (i) holds, the dual norm of kw-norm is $\mathcal{R}^*(u) = \max(|| \dfrac{1}{W^{tot}_I}  \circ     u||_{\infty}, ||\dfrac{1}{W^{tot}_S}  \circ u||_{\infty})$ \footnote{In our previous version, we mistakenly wrote $\mathcal{R}^*(u) = \max(|| W^{tot}_I  \circ     u||_{\infty}, ||W^{tot}_S  \circ u||_{\infty})$. We correct relevant equations here. Our code implementation was correct, thus not change.\label{xx}} 
\end{itemize}
Section~\ref{sec:proofkw} provides proofs of the above claims.

\subsection{JEE with Knowledge (JEEK)} 

Plugging \eref{eq:kw-norm} to \eref{eq:JEEK-gen}, we obtain the following formulation of JEEK for learning multiple related sGGMs from heterogeneous samples:
\begin{equation}
\begin{split}
    \label{eq:JEEK}
     \argmin\limits_{\Omega^{tot}_I, \Omega^{tot}_S} & ||W^{tot}_I \circ \Omega^{tot}_I||_1 + ||W^{tot}_S\circ \Omega^{tot}_S|| \\
    \text{\textsuperscript{\ref{xx}} Subject to: } & ||\dfrac{1}{W^{tot}_I} \circ (\Omega^{tot} - inv(T_v(\hat{\Sigma}^{tot}))) ||_{\infty} \le \lambda_n\\
    & ||\dfrac{1}{W^{tot}_S} \circ (\Omega^{tot} - inv(T_v(\hat{\Sigma}^{tot}))) ||_{\infty} \le \lambda_n \\
    & \Omega^{tot} = \Omega^{tot}_S + \Omega^{tot}_I 
\end{split}
\end{equation}

In Section~\ref{sec:theory}, we theoretically prove that the statistical convergence rate of JEEK achieves the same sharp convergence rate as the state-of-the-art estimators for multi-task sGGMs. Our proofs are inspired by the unified framework of the high-dimensional statistics~\cite{negahban2009unified}.

\subsection{Solution of JEEK:}~
\label{sec:optm}

A huge computational advantage of JEEK (\eref{eq:JEEK}) is that it can be decomposed into $p \times p$ independent small linear programming problems. To simplify notations, we denote ${\Omega^{(i)}_{I}}_{j,k}$ (the $\{j,k\}$-th entry of $\Omega^{(i)}$) as $a_i$. Similarly we denote  ${\Omega_{S}}_{j,k}$ as $b$ and $[T_v(\hat{\Sigma}^{(i)})]^{-1}_{j,k}$ be $c_i$. Similarly we denote $W^{(i)}_{j,k} = w_i$ and $W^S_{j,k} = w_s$. "A group of entries" means a set of parameters $\{a_1,\dots, a_K, b\}$ for  certain $j,k$.

In order to estimate $\{a_1,\dots, a_K, b\}$, JEEK (\eref{eq:JEEK}) can be decomposed into the following formulation for a certain $j,k$  :
\begin{equation}
\label{eq:JEEK-par}
    \begin{split}
     \argmin\limits_{a_i,b} &\sum\limits_i |w_i a_i| + K|w_s b|  \\
    \text{ \textsuperscript{\ref{xx}} Subject to:} \; & |a_i+b - c_i| \le \lambda_{n}\min(w_i,w_s), \\
    & i = 1,\dots,K 
    \end{split}
\end{equation}

\eref{eq:JEEK-par} can be easily converted into a linear programming form of \eref{eq:JEEK-lp} with only $K+1$ variables. The time complexity of~\eref{eq:JEEK-par} is $O(K^4)$. Considering JEEK has a total $p(p-1)/2$ of such subproblems to solve, the computational complexity of JEEK (\eref{eq:JEEK}) is therefore $O(p^2K^4)$. 
We summarize the optimization algorithm of JEEK in Algorithm~\ref{alg:JEEK} (details in \sref{sec:parallel}).

\begin{algorithm}[t]
\vspace{-1mm}
   \caption{Joint Elementary Estimator with additional knowledge (JEEK) for Multi-task sGGMs}
   \label{alg:JEEK}
{\footnotesize
\begin{flushleft}
 \textbf{Input:} Data sample matrix $\Xb^{(i)}$ ( $i = 1$ {\bfseries to} $K$), regularization hyperparameter 
  $\lambda_n$, Knowledge weight matrices $\{ W_I^{(i)}, W_S\}$ and \textbf{LP(.)} (a linear programming solver)\\
 \textbf{Output:} $\{ \Omega^{(i)} \}$  ( $i = 1$ {\bfseries to} $K$)
\end{flushleft}
\begin{algorithmic}[1]
    \FOR{$i = 1$ {\bfseries to} $K$}
        \STATE Initialize $\hat{\Sigma}^{(i)} = \frac{1}{n_i-1} \sum_{s=1}^{n_i} (\Xb^{(i)}_{s,}-\hat{\mu}^{(i)}) 
        (\Xb^{(i)}_{s,} -\hat{\mu}^{(i)})^T$ (the sample covariance matrix of $\Xb^{(i)}$)
    \STATE Initialize $\Omega^{(i)} = {\bf 0}_{p\times p}$
    \STATE Calculate the proxy backward mapping $[T_v(\hat{\Sigma}^{(i)})]^{-1}$
    \ENDFOR
   \FOR{$j=1$ {\bfseries to} $p$}
   \FOR{$k=1$ {\bfseries to} $j$}
    \STATE $c_i = [T_v(\hat{\Sigma}^{(i)})]^{-1}_{j,k}$
    \STATE $w_i = W^{(i)}_{j,k}$
    \STATE $w_s = {W_S}_{j,k}$
    \STATE $a_i,b = \textbf{LP}(w_i, w_s, c_i, \lambda_n)$ where $i = 1,\dots,K$ and \textbf{LP(.)} solves \eref{eq:JEEK-par}
    \FOR {$i=1$ {\bfseries to} $K$}
    \STATE ${\Omega^{(i)}}_{j,k} = {\Omega^{(i)}}_{k,j} = a_i + b$
    \STATE ${\Omega^{(i)}_{I}}_{j,k} =a_i$
    \STATE ${\Omega_{S}}_{j,k} = b$
    \ENDFOR
    \ENDFOR
    \ENDFOR
\end{algorithmic}
}
\vspace{-1mm}
\end{algorithm}
\section{Theoretical Analysis}
\label{sec:theory}

\paragraph{KW-Norm:} We presented the three properties of kw-norm in Section~\ref{sec:wkn}. 
The proofs of these three properties are included in \sref{sec:proofkw}.

\paragraph{Theoretical error bounds of Proxy Backward Mapping: }  \cite{yang2014elementary} proved that when ($p\ge n$), the proxy backward mapping $[T_v(\hat{\Sigma})]^{-1}$ used 
by EE-sGGM achieves the sharp convergence rate to its truth (i.e., by proving $||T_v(\hat{\Sigma}))^{-1} - {\Sigma^*}^{-1}||_{\infty} = O(\sqrt{\frac{\log p}{n}})$). The proof was extended from the previous study~\cite{rothman2009generalized} that devised $T_v(\hat{\Sigma})$ for estimating covariance matrix consistently in high-dimensional situations. See detailed proofs in  Section~\ref{sec:proofbm}. To derive the statistical error bound of JEEK, we need to assume that $inv(T_v(\hat{\Sigma}^{tot}))$ are well-defined. This is ensured by  
assuming that the true ${\Omega^{(i)}}^*$ satisfy the conditions defined in \sref{sec:proofkw}.  

\paragraph{Theoretical error bounds of JEEK: } 
We now use the high-dimensional analysis framework from \cite{negahban2009unified},  three properties of kw-norm, and error bounds of backward mapping from \cite{rothman2009generalized,yang2014elementary} to derive the statistical convergence rates of JEEK. Detailed proofs of the following theorems are in  Section~\ref{sec:theory} .

Before providing the theorem, we need to define the structural assumption, the IS-Sparsity, we assume for the parameter truth. \\
\textbf{(IS-Sparsity):} The 'true' parameter of  ${\Omega^{tot}}^*$ can be decomposed into two clear structures--$\{ {\Omega^{tot}_I}^*$ and ${\Omega^{tot}_S}^* \}$. ${\Omega^{tot}_I}^*$ is exactly sparse with $k_i$ non-zero entries indexed by a support set $S_I$ and ${\Omega^{tot}_S}^*$ is exactly sparse with $k_s$ non-zero entries indexed by a support set $S_S$. $S_I\bigcap S_S = \emptyset$. All other elements  equal to $0$ (in $(S_I\bigcup S_S)^c$). \\

\begin{theorem}
\vspace{-3mm}
\label{theo:jeek}
Consider  $\Omega^{tot}$ whose true parameter ${\Omega^{tot}}^*$ satisfies the \textbf{(IS-Sparsity)} assumption. Suppose we compute the solution of~\eref{eq:JEEK} with a bounded $\lambda_n$ such that $\lambda_n \ge \max(||\dfrac{1}{{W^{tot}_I}}\circ({\Omega^{tot}}^* - inv(T_v(\hat{\Sigma}^{tot})))||_{\infty}, ||\dfrac{1}{W^{tot}_S} \circ({\Omega^{tot}}^* - inv(T_v(\hat{\Sigma}^{tot})))||_{\infty})$, then the estimated solution $\hat{\Omega}^{tot}$ satisfies the following error bounds:

{\footnotesize
\begin{equation}
\label{eq:eerate}
\begin{split}
&||\hat{\Omega}^{tot} - {\Omega^{tot}}^*||_{F} \le 4\sqrt{k_i+k_s}\lambda_n \\
&\max(||\dfrac{1}{W^{tot}_I} \circ( \hat{\Omega}^{tot} - {\Omega^{tot}}^*)||_{\infty}, ||\dfrac{1}{W^{tot}_S}\circ (\hat{\Omega}^{tot}-{\Omega^{tot}}^*||_{\infty})\\
&\qquad\qquad\qquad\qquad\qquad\qquad\qquad\qquad\qquad\qquad\qquad
\le 2\lambda_n \\
&||\dfrac{1}{W^{tot}_I} \circ( \hat{\Omega}^{tot}_I - {\Omega^{tot}_I}^*)||_1 + ||\dfrac{1}{W^{tot}_S}\circ (\hat{\Omega}^{tot}_S-{\Omega^{tot}_S}^*)||_1\\
&\qquad\qquad\qquad\qquad\qquad\qquad\qquad\qquad
\le 8(k_i+k_s)\lambda_n 
\end{split}
\end{equation}
}
\vspace{-2mm}
\end{theorem}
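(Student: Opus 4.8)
The plan is to instantiate the unified high-dimensional analysis of \cite{negahban2009unified} in its constrained (elementary-estimator) form, viewing \eref{eq:JEEK} as a superposition-structured estimator whose regularizer is the kw-norm \eref{eq:kw-norm}. I would use the three properties of the kw-norm (it is a norm, it is decomposable, and its dual is $\mathcal{R}^*(u)=\max(\|\tfrac{1}{W^{tot}_I}\circ u\|_\infty,\|\tfrac{1}{W^{tot}_S}\circ u\|_\infty)$) together with the feasibility guaranteed by the stated lower bound on $\lambda_n$. Throughout, write $\Delta=\hat\Omega^{tot}-{\Omega^{tot}}^*$, $\Delta_I=\hat\Omega^{tot}_I-{\Omega^{tot}_I}^*$, $\Delta_S=\hat\Omega^{tot}_S-{\Omega^{tot}_S}^*$, so that $\Delta=\Delta_I+\Delta_S$, and let $\bar S=S_I\cup S_S$ with $|\bar S|=k_i+k_s$ be the (disjoint) support supplied by the \textbf{(IS-Sparsity)} assumption.

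First I would establish feasibility: the hypothesis $\lambda_n\ge\max(\|\tfrac{1}{W^{tot}_I}\circ({\Omega^{tot}}^*-inv(T_v(\hat\Sigma^{tot})))\|_\infty,\|\tfrac{1}{W^{tot}_S}\circ(\cdots)\|_\infty)$ is exactly the statement that ${\Omega^{tot}}^*$, with its decomposition ${\Omega^{tot}_I}^*+{\Omega^{tot}_S}^*$, satisfies both constraints of \eref{eq:JEEK}; hence the truth is feasible and $\mathcal{R}(\hat\Omega^{tot})\le\mathcal{R}({\Omega^{tot}}^*)$ by optimality. The second error bound then follows immediately: since both $\hat\Omega^{tot}$ and ${\Omega^{tot}}^*$ lie within $\mathcal{R}^*$-radius $\lambda_n$ of the proxy $inv(T_v(\hat\Sigma^{tot}))$, the triangle inequality for the dual norm (property iii) gives $\mathcal{R}^*(\Delta)\le 2\lambda_n$, which is precisely the claimed bound on the maximum of the two inverse-weighted $\ell_\infty$ quantities.

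Next I would derive the cone/compatibility condition. Decomposability of the kw-norm with respect to the subspace supported on $\bar S$, combined with $\mathcal{R}(\hat\Omega^{tot})\le\mathcal{R}({\Omega^{tot}}^*)$ and ${\Omega^{tot}}^*$ being supported on $\bar S$, yields the standard cone inequality $\mathcal{R}(\Delta_{\bar S^c})\le\mathcal{R}(\Delta_{\bar S})$, hence $\mathcal{R}(\Delta)\le 2\mathcal{R}(\Delta_{\bar S})$ (reading $\mathcal{R}$ on the component pair $(\Delta_I,\Delta_S)$). For the Frobenius bound I would expand $\|\Delta\|_F^2=\langle\Delta_I,\Delta\rangle+\langle\Delta_S,\Delta\rangle$ and apply the generalized H\"older inequality to each term, pairing the $W$-weighted $\ell_1$ against the $1/W$-weighted $\ell_\infty$: $\langle\Delta_I,\Delta\rangle\le\|W^{tot}_I\circ\Delta_I\|_1\,\|\tfrac1{W^{tot}_I}\circ\Delta\|_\infty$, and likewise for the $S$ part. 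Substituting $\|\tfrac1{W}\circ\Delta\|_\infty\le 2\lambda_n$ from the previous step, the cone inequality, and the compatibility estimate $\mathcal{R}(\Delta_{\bar S})\le\sqrt{k_i+k_s}\,\|\Delta\|_F$ gives $\|\Delta\|_F^2\le 4\sqrt{k_i+k_s}\,\lambda_n\,\|\Delta\|_F$, i.e. the first bound; the third (inverse-weighted $\ell_1$) bound then follows by combining the cone inequality with Cauchy--Schwarz on $\bar S$ and the Frobenius bound, producing the factor $8(k_i+k_s)\lambda_n$ exactly as the unweighted chain $\|\cdot\|_1\le 2\sqrt{k}\,\|\cdot\|_2$ produces its constant.

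The hard part is the interplay between the superposition structure and the weighting. Unlike a vanilla M-estimator, the objective of \eref{eq:JEEK} is a weighted $\ell_1$ on the \emph{components} $(\Omega^{tot}_I,\Omega^{tot}_S)$ while the feasibility constraint controls only the \emph{sum} $\Omega^{tot}$; transferring the summed-error control of the second bound onto the component errors that appear in the third bound is exactly where the disjointness $S_I\cap S_S=\emptyset$ is essential. The second subtlety is forcing the final constants to be weight-free: the crucial device is the H\"older pairing above, in which the $W$-weights of the regularizer and the $1/W$-weights of the dual norm multiply to one, so that the subspace-compatibility constant must be shown to collapse to $\sqrt{k_i+k_s}$ as in the unweighted case. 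Finally, although the theorem is stated deterministically for any admissible $\lambda_n$, plugging the proxy-backward-mapping bound $\|(T_v(\hat\Sigma))^{-1}-{\Sigma^*}^{-1}\|_\infty=O(\sqrt{\log p/n})$ of \cite{rothman2009generalized,yang2014elementary} across all $K$ tasks certifies an admissible $\lambda_n=O(\sqrt{\log(Kp)/n_{tot}})$ and converts these bounds into the advertised $O(\log(Kp)/n_{tot})$ rate.
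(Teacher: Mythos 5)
Your proposal is correct and follows essentially the same route as the paper: the paper also instantiates the constrained-estimator version of the Negahban et al.\ framework, proving a generic result (feasibility of the truth plus the dual-norm triangle inequality for the $2\lambda_n$ bound, decomposability yielding the cone inequality $\mathcal{R}[\Pi_{\bar{\mathcal{M}}^{\perp}}(\delta)] \le \mathcal{R}[\Pi_{\bar{\mathcal{M}}}(\delta)]$, then H\"older's inequality $\|\delta\|_2^2 \le \mathcal{R}^*(\delta)\mathcal{R}(\delta)$ with the compatibility constant $\Psi(\bar{\mathcal{M}})=\sqrt{k_i+k_s}$) and then specializing it to the kw-norm; your component-wise expansion $\|\Delta\|_F^2=\langle\Delta_I,\Delta\rangle+\langle\Delta_S,\Delta\rangle$ is just the unpacked form of that single H\"older step for this particular norm pair. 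The subtleties you flag (the weighted compatibility constant collapsing to $\sqrt{k_i+k_s}$, and transferring sum-error control to component errors via $S_I\cap S_S=\emptyset$) are likewise passed over lightly in the paper's own proof, so they do not distinguish your argument from theirs.
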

\begin{proof}
\vspace{-5mm}
See detailed proof in Section~\ref{seca:proof}
\end{proof}

\rref{theo:jeek} provides a general bound for any selection of $\lambda_n$. The bound of $\lambda_n$ is controlled by the distance between ${\Omega^{tot}}^*$ and $inv(T_v(\hat{\Sigma}^{tot}))$. We then extend \rref{theo:jeek} to derive the statistical convergence rate of JEEK. This gives us the following corollary:
\begin{corollary}
\label{cor:1}
    Suppose the high-dimensional setting, i.e., $p > \max(n_i)$. Let $v:= a\sqrt{\frac{\log (Kp)}{n_{tot}}}$. Then for $\lambda_n := \frac{8\kappa_1 a}{\kappa_2}\sqrt{\frac{\log (Kp)}{n_{tot}}}$ and $n_{tot} > c \log Kp$, with a probability of at least $1-2C_1\exp(-C_2Kp\log (Kp))$, the estimated optimal solution $\hat{\Omega}^{tot}$ has the following error bound:\\
{\small    
 \begin{equation}
\label{eq:eerate}
\begin{split}
||\hat{\Omega}^{tot} -& {\Omega^{tot}}^*||_F 
\\& \le  \frac{16\kappa_1 a\max\limits_{j,k}({W^{tot}_I}_{j,k},{W^{tot}_S}_{j,k})}{\kappa_2}\sqrt{\frac{(k_i+k_s) \log (Kp) }{n_{tot}}}
\end{split}
\end{equation}}
where $a$, $c$, $\kappa_1$ and $\kappa_2$ are constants. 
\end{corollary}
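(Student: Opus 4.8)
The plan is to obtain Corollary~\ref{cor:1} as a direct specialization of Theorem~\ref{theo:jeek}: the theorem already supplies the deterministic bound $\|\hat{\Omega}^{tot} - {\Omega^{tot}}^*\|_F \le 4\sqrt{k_i+k_s}\,\lambda_n$ on the event that $\lambda_n$ dominates the two weighted $\ell_\infty$ gaps between the truth and the proxy, so the whole task reduces to (a) showing that the concrete choice $\lambda_n = \frac{8\kappa_1 a}{\kappa_2}\sqrt{\log(Kp)/n_{tot}}$ meets that precondition with the claimed high probability, and (b) substituting this $\lambda_n$ back and simplifying. First I would fix the soft-threshold level at $v := a\sqrt{\log(Kp)/n_{tot}}$ (as the statement does), so that the proxy backward mapping $\mathrm{inv}(T_v(\hat{\Sigma}^{tot}))$ is well-defined under \textbf{(IS-Sparsity)} and each task is controlled by the single-task bound $\|[T_v(\hat{\Sigma})]^{-1} - {\Sigma^*}^{-1}\|_\infty = O(\sqrt{\log p / n})$ proved in \cite{rothman2009generalized, yang2014elementary}.

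The core probabilistic step is to lift that single-task bound to the joint estimator. Since $\Omega^{tot}$ stacks $K$ blocks with $Kp^2$ total entries, I would apply a union bound over all tasks and entries to the per-entry sub-Gaussian deviation inequality underlying the backward-mapping result; pooling the $K$ tasks replaces $\log p$ by $\log(Kp)$ and the per-task $n$ by $n_{tot}$, and the combined tail gives a failure probability of the stated form $2C_1\exp(-C_2 Kp\log(Kp))$. The requirement $n_{tot} > c\log(Kp)$ is exactly what keeps the deviation in the small-deviation regime where the concentration inequality applies. This yields, on the good event, $\|{\Omega^{tot}}^* - \mathrm{inv}(T_v(\hat{\Sigma}^{tot}))\|_\infty \le \kappa_1 a \sqrt{\log(Kp)/n_{tot}}$, with $\kappa_1$ absorbing the variance proxy and the curvature of the backward map.

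Next I would fold in the weight matrices. Because $\|\tfrac{1}{W}\circ X\|_\infty \le \tfrac{1}{\min_{j,k} W_{j,k}}\|X\|_\infty$, letting $\kappa_2$ be a uniform lower bound on the entries of $W^{tot}_I$ and $W^{tot}_S$ converts the previous display into a bound on \emph{both} weighted gaps $\|\tfrac{1}{W^{tot}_I}\circ({\Omega^{tot}}^*-\mathrm{inv}(T_v(\hat{\Sigma}^{tot})))\|_\infty$ and its $S$-counterpart by $\tfrac{\kappa_1 a}{\kappa_2}\sqrt{\log(Kp)/n_{tot}}$. Taking $\lambda_n$ equal to eight times this quantity (the slack factor $8$) makes the hypothesis of Theorem~\ref{theo:jeek} hold on the good event. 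Substituting into $4\sqrt{k_i+k_s}\,\lambda_n$ and converting the weighted control of $\hat{\Omega}^{tot}-{\Omega^{tot}}^*$ back to the unweighted Frobenius norm — which is where the factor $\max_{j,k}({W^{tot}_I}_{j,k}, {W^{tot}_S}_{j,k})$ enters — produces the stated rate up to the leading constant.

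The main obstacle I anticipate is the joint concentration step: one must check that the single-task backward-mapping tail of \cite{rothman2009generalized, yang2014elementary} can be union-bounded across the heterogeneous sample sizes $n_i$ so that the \emph{total} $n_{tot}$, rather than $\min_i n_i$, governs the rate, and that the combined tail telescopes into the clean $\exp(-C_2 Kp\log(Kp))$ form. Keeping explicit track of how the weight extrema ($\kappa_2$ below, $\max_{j,k} W$ above) propagate through both the choice of $\lambda_n$ and the final unweighting is the bookkeeping that makes the advertised constants come out, and is the part most likely to require care.
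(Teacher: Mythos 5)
Your high-level skeleton is the same as the paper's: fix $v$, verify that the stated $\lambda_n$ dominates the (weighted) dual-norm distance between ${\Omega^{tot}}^*$ and $inv(T_v(\hat{\Sigma}^{tot}))$ with the stated probability, then plug into the Frobenius bound $4\sqrt{k_i+k_s}\,\lambda_n$ of \rref{theo:jeek}. However, your central probabilistic step is mischaracterized in a way that would fail if executed as written. You propose to obtain $||{\Omega^{tot}}^* - inv(T_v(\hat{\Sigma}^{tot}))||_{\infty} \lesssim \sqrt{\log(Kp)/n_{tot}}$ by a union bound over tasks and entries applied to ``the per-entry sub-Gaussian deviation inequality underlying the backward-mapping result.'' But the entrywise concentration (the Ravikumar-et-al.\ lemma in \sref{sec:proofbm}) applies to $\hat{\Sigma}$, not to $[T_v(\hat{\Sigma})]^{-1}$: each entry of the inverse is a global function of the whole matrix, so there is no per-entry tail to union over. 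The paper instead passes through the deterministic chain \eref{eq:proof2_19}, $||\Omega_{tot}^* - [T_{\nu}(\hat{\Sigma}_{tot})]^{-1}||_{\infty} \le |||[T_\nu(\hat{\Sigma}_{tot})]^{-1}|||_\infty\, |||\Omega_{tot}^*|||_\infty\, ||T_\nu(\hat{\Sigma}_{tot})-\Sigma_{tot}^*||_\infty$, and this is exactly where $\kappa_1$ and $\kappa_2$ live: $\kappa_1$ bounds $|||{\Omega^{(i)}}^*|||_\infty$ via \textbf{(C-MinInf$-\Sigma$)}, while $\kappa_2 \le \inf_{w\ne 0} ||{\Omega^{(i)}}^* w||_\infty/||w||_\infty$ is combined with the spectral-norm bound \eref{eq:proof2_18} to prove $|||[T_\nu(\hat{\Sigma}_{tot})]^{-1}|||_\infty \le 2/\kappa_2$; the hypothesis $n_{tot} > c\log(Kp)$ is used precisely to force $|||T_\nu(\hat{\Sigma}_{tot})-\Sigma|||_\infty \le \kappa_2/2$ so that this inversion is legitimate, not (as you suggest) to keep a concentration inequality in its small-deviation regime. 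Consequently your identification of $\kappa_2$ as a uniform lower bound on the entries of $W^{tot}_I, W^{tot}_S$ is wrong --- in the corollary $\kappa_2$ is a property of the true model, not of the weights --- and $\kappa_1$ is not a variance proxy.

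Your weight bookkeeping is also off at the last step: in \rref{theo:jeek} the Frobenius bound $4\sqrt{k_i+k_s}\,\lambda_n$ is already unweighted, so no ``conversion back from weighted control'' occurs; the weight extremum enters only in verifying the $\lambda_n$ hypothesis, i.e., in bounding the weighted $\ell_\infty$ distance by the unweighted one as in \eref{eq:bound}. On the credit side, your union bound over the $K$ tasks producing $\log(Kp)$, the aggregation of failure probabilities, and especially your flagged concern about heterogeneous sample sizes ($n_{tot}$ versus $\min_i n_i$) are all sound --- the paper itself treats the $n_i$-versus-$n_{tot}$ issue loosely, so your caution there is warranted. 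To repair the proposal, keep your outline (a)/(b), but replace the per-entry union bound on the inverse with the chain \eref{eq:proof2_19} under \textbf{(C-Sparse-$\Sigma$)} and \textbf{(C-MinInf$-\Sigma$)}, using the two lemmas of \sref{sec:proofbm} to control $|||[T_\nu(\hat{\Sigma}_{tot})]^{-1}|||_\infty$ and $||T_\nu(\hat{\Sigma}_{tot})-\Sigma_{tot}^*||_\infty \le 2\nu$ respectively.
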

\begin{proof}
See detailed proof in Section~\ref{proof:l2} (especially from~ \eref{eq:proof18} to \eref{eq:bound}).
\end{proof}

\paragraph{Bayesian View of JEEK:}
In \sref{sec:Bayes} we provide a direct Bayesian interpretation of JEEK through the perspective of hierarchical Bayesian modeling. Our hierarchical Bayesian interpretation nicely explains the assumptions we make in JEEK. 


\section{Connecting to Relevant Studies}
\label{sec:rel}

JEEK is closely related to a few state-of-the-art studies summarized in Table~\ref{tab:comcom}. We compare the time complexity and functional properties of JEEK versus these studies.

\begin{table*}[t]
\centering
\caption{Compare JEEK versus baselines.  Here $T$ is the number of iterations.}
\label{tab:comcom}
{\footnotesize 
\begin{tabular}{||p{43mm}|p{25mm}||c|c|c|c|c||}
\hline \hline 
Method& JEEK  & W-SIMULE & JGL & FASJEM & NAK (run $K$ times)\\ \hline \hline 
Time Complexity & $O(K^4p^2)$ {\tiny ($\Rightarrow O(K^4)$ if parallelizing completely)} & $O(K^4p^5)$ & $O(T\times K p^3)$  & $O(T\times Kp^2)$ &  $O(Knp^3+Kp^4)$ \\ \hline
Additional  Knowledge  & YES & YES & NO & NO & YES \\ \hline
\end{tabular}
}
\vspace{-5mm}
\end{table*}

\vspace{1pt}

\paragraph{NAK: \cite{bu2017integrating}}
For the single task sGGM, one recent study~\cite{bu2017integrating} (following ideas from \cite{shimamura2007weighted}) proposed to integrating Additional Knowledge  (NAK)into
estimation of graphical models through a weighted Neighbourhood selection formulation  (NAK) as: $\hat{\beta}^j = \argmin\limits_{\beta,\beta_j = 0} \frac{1}{2}||X^j - X\beta||_2^2 + ||\mathbf{r}_j \circ \beta||_1$. 
 NAK is designed for estimating brain connectivity networks from homogeneous samples and incorporate distance knowledge as weight vectors. \footnote{Here $\hat{\beta}^j$ indicates the sparsity of $j$-th column of a single $\hat{\Omega}$. Namely, $\hat{\beta}^j_k = 0$ if and only if $\hat{\Omega}_{k,j} = 0$. $\mathbf{r}_j$ is a weight vector as the additional knowledge The NAK formulation can be solved by a classic Lasso solver like glmnet.}  In experiments, we compare JEEK to NAK (by running NAK R package  $K$ times) on multiple synthetic datasets of simulated samples about brain regions. The data simulation strategy was   suggested by ~\cite{bu2017integrating}. Same as the NAK~\cite{bu2017integrating}, we use the spatial distance among brain regions as additional knowledge in JEEK. 

\vspace{1pt}

\paragraph{W-SIMULE: ~\cite{singh2017constrained}}
Like JEEK, one recent study ~\cite{singh2017constrained} of multi-sGGMs (following ideas from \cite{wang2017constrained}) also assumed that $\Omega^{(i)} = \Omega^{(i)}_I + \Omega_S$ and incorporated spatial distance knowledge in their convex formulation for joint discovery of heterogeneous neural connectivity graphs. This study, with name W-SIMULE (Weighted model for Shared and Individual parts of MULtiple graphs Explicitly) uses a weighted constrained $\ell_1$ minimization:  
{\scriptsize 
\begin{align}
\label{eq:wsimule}
\argmin\limits_{\Omega^{(i)}_I,\Omega_S}&\sum\limits_i ||W\circ\Omega^{(i)}_I||_1+ \epsilon K||W\circ\Omega_S||_1  \\
\text{Subject to:} \; & ||\Sigma^{(i)}(\Omega^{(i)}_I + \Omega_S) - I||_{\infty} \le \lambda_{n}, \; i =
1,\dots,K \nonumber
\end{align}}
 W-SIMULE simply includes the additional knowledge as a weight matrix $W$. \footnote{It can be  solved by any linear programming solver and can be column-wise paralleled. 
 However, it is very slow when $p > 200$ due to the expensive computation cost $O(K^4p^5)$.}

Different from W-SIMULE, JEEK separates the knowledge of individual context and the shared using different weight matrices. While W-SIMULE also minimizes a weighted $\ell$1 norm, its constraint optimization term is entirely different from JEEK. The formulation difference makes the optimization of JEEK much faster and more scalable than W-SIMULE (\sref{sec:exp}). We have provided a complete theoretical analysis of error bounds of JEEK, while W-SIMULE provided no theoretical results. Empirically, we compare JEEK with W-SIMULE R package from ~\cite{singh2017constrained} in the experiments. 

\vspace{1pt}

\paragraph{JGL: \cite{danaher2013joint}:}~Regularized MLE based multi-sGGMs Studies mostly follow the so called joint graphical lasso (JGL) formulation as ~\eref{eq:ll}:
{\scriptsize
\begin{equation}
\label{eq:ll}
\begin{split}
\argmin\limits_{\Omega^{(i)} \succ 0} &\sum\limits_{i = 1}^{K} (-L(\Omega^{(i)}) + 
\lambda_n \sum\limits_{i = 1}^K ||\Omega^{(i)}||_1 \\
& + \lambda_n' \mathcal{R}'(\Omega^{(1)}, \Omega^{(2)},
\dots,\Omega^{(K)})
\end{split}
\end{equation}}
$\mathcal{R}'(\cdot)$ is the second penalty function for enforcing some structural assumption of group property among the multiple graphs. One caveat of JGL is that 
$\mathcal{R}'(\cdot)$ cannot model explicit additional knowledge. For instance,it can not incorporate the information of a few known hub nodes shared by the contexts. In experiments, we compare JEEK to JGL-co-hub and JGL-perturb-hub toolbox provided by ~\cite{mohan2013node}.

\vspace{1pt}
\paragraph{FASJEM: \cite{wang2017fast}}~One very recent study extended JGL using so-called Elementary superposition-structured moment estimator formulation as~\eref{eq:fasjem}:
{\scriptsize
\begin{equation}
    \label{eq:fasjem}
    \begin{split}
        &\argmin\limits_{\Omega_{tot}} ||\Omega_{tot}||_1 + \epsilon \mathcal{R}'(\Omega_{tot})\\
        & s.t. ||\Omega_{tot} - \text{inv}(T_v(\hat{\Sigma}_{tot}))||_{\infty} \le \lambda_n\\
        & \mathcal{R}'^*(\Omega_{tot} - \text{inv}(T_v(\hat{\Sigma}_{tot}))) \le \epsilon\lambda_n
    \end{split}
\end{equation}
}
FASJEM is much faster and more scalable than the JGL estimators.  However like JGL estimators it can not model additional knowledge and its optimization needs to be carefully re-designed for different $\mathcal{R}'(\cdot)$. \footnote{FASJEM extends JGL into multiple independent group-entry wise optimization just like JEEK. Here $\mathcal{R}^{'*}(\cdot)$ is the dual norm of $\mathcal{R}'(\cdot)$. Because \cite{wang2017fast} only designs the optimization of two cases (group,2 and group,inf), we can not use it as a baseline. }

\vspace{3pt}

Both NAK and W-SIMULE only explored the formulation for estimating neural connectivity graphs using spatial information as additional knowledge. Differently our experiments (\sref{sec:exp}) extend the weight-as-knowledge formulation on weights as distance, as shared hub knowledge, as perturbed hub knowledge, and as nodes' grouping information  (e.g., multiple genes are known to be in the same pathway). This has largely extends the previous studies in showing the real-world adaptivity of the proposed formulation. JEEK elegantly formulates existing knowledge based on the problem at hand and avoid the need to design knowledge-specific optimization.

\section{Experiments}
\label{sec:exp}
We empirically evaluate JEEK and baselines on four types of datasets, including two groups of synthetic data, one real-world fMRI dataset for brain connectivity estimation  and one real-world genomics
dataset for estimating interaction among regulatory genes (results in \sref{subsec:exp4}). 
In order to incorporating various types of knowledge, we provide five different designs of the weight matrices in Section~\ref{sec:designW}. Details of experimental setup, metrics and hyper-parameter tuning are included in \sref{sec:expsetMore}. Baselines used in our experiments have been explained in details by \sref{sec:rel}. We also use JEEK with no additional knowledge (JEEK-NK) as a baseline.

JEEK is available as the R package 'jeek' in CRAN. 

\begin{figure*}[ht]
    \centering
    \includegraphics[width=0.99\textwidth]{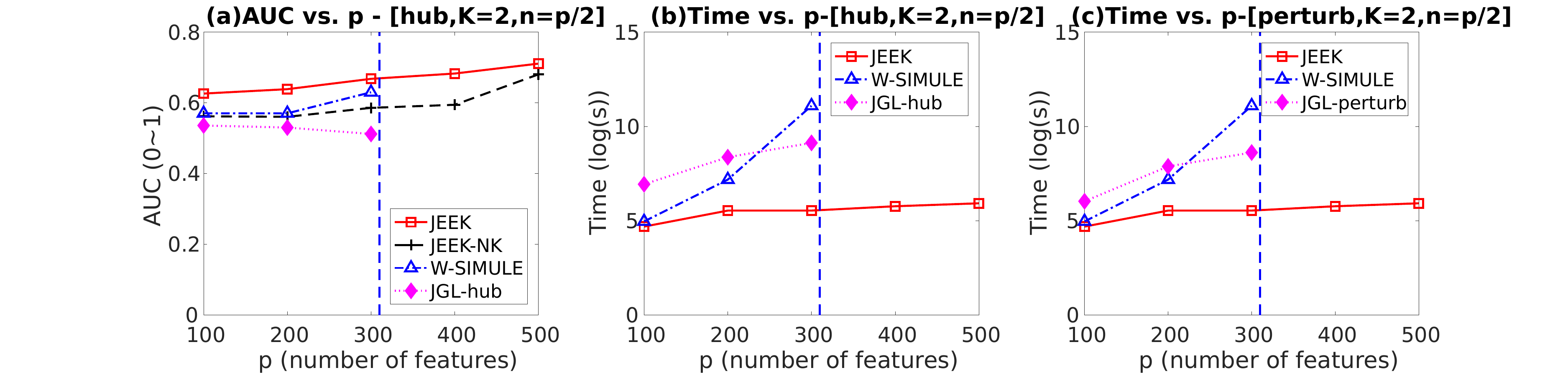}
    \caption{Performance comparison on simulation Datasets using co-Hub Knowledge: AUC vs. 
    Time when varying number of nodes $p$.}
    \label{fig:hubSumP}
\vspace{-5mm}
\end{figure*}

\subsection{Experiment: Simulated Samples with Known Hubs as Knowledge}
\label{subsec:exp3}

Inspired the JGL-co-hub and JGL-perturb-hub toolbox (JGL-node) provided by ~\cite{mohan2013node}, we empirically show JEEK's ability to model known co-hub or perturbed-hub nodes as knowledge when estimating multiple sGGMs. We generate multiple simulated Gaussian datasets through the random graph model \cite{rothman2008sparse} to simulate both the co-hub and perturbed-hub graph structures (details in \ref{subsec:exp3more}). We use JGL-node package, W-SIMULE and JEEK-NK as baselines for this set of experiments. The weights in $\{W_I^{tot}, W_S^{tot}\}$ are designed using the strategy proposed in~\sref{sec:designW}.

We use AUC score (to reflect the consistency and variance of a method's performance when varying its important hyper-parameter) and computational time cost to compare JEEK with baselines. We compare all methods on many simulated cases by varying $p$ from  the set $\{100,200,300,400,500\}$ and the number of tasks $K$ from the set $\{2,3,4\}$. In Figure~\ref{fig:hubSumP} and Figure~\ref{fig:sim_hubPPI}(a)(b), JEEK consistently achieves higher AUC-scores than the baselines JGL, JEEK-NK and W-SIMULE for all cases.  JEEK is more than $~10$ times faster than the baselines on average. In Figure~\ref{fig:hubSumP}, for each $p>300$ case (with $n=p/2$), W-SIMULE takes more than one month and JGL takes more than one day. Therefore we can not show them with $p>300$.

\subsection{Experiment: Gene Interaction Network from Real-World Genomics Data}
\label{subsec:exp4}

Next, we apply JEEK and the baselines on one real-world  biomedical data about gene expression profiles across two different cell types. We explored two different types of knowledge:  (1) Known edges and  (2) Known group about genes. Figure~\ref{fig:sim_hubPPI}(c) shows that JEEK has lower time cost and recovers more interactions than baselines 
(higher number of matched edges to the existing bio-databases.). More results are in Appendix~\sref{subsec:exp4more} and the design of weight matrices for this case is in \sref{sec:designW}.

\subsection{Experiment: Simulated Data about Brain Connectivity with Distance as Knowledge}
\label{subsec:exp2}

Following \cite{bu2017integrating}, we use one known Euclidean distance between human brain regions as additional knowledge $W$ and use it to generate multiple simulated datasets (details in Section~\ref{subsec:exp2more}). We compare JEEK with the baselines regarding (a) Scalability (computational time cost), and (b) effectiveness (F1-score, because NAK package does not allow AUC calculation).  For each simulation case, the computation time for each estimator is the summation of a method's execution time over all values of $\lambda_n$. 
Figure~\ref{fig:sim_brain}(a)(b) show clearly that JEEK outperforms its baselines. JEEK has a consistently higher F1-Score and is almost $6$ times faster than W-SIMULE in the high dimensional case. JEEK performs better than JEEK-NK, confirming the advantage of integrating additional distance knowledge. While NAK is fast, its F1-Score is nearly $0$ and hence, not useful for multi-sGGM structure learning.

\subsection{Experiment: Functional Connectivity Estimation from Real-World Brain fMRI Data}
\label{subsec:exp1}

We evaluate JEEK and relevant baselines for a classification task on one real-world publicly available resting-state fMRI dataset: ABIDE\cite{di2014autism}. The ABIDE data aims to understand human brain connectivity and how it reflects neural disorders \cite{van2013wu}. ABIDE includes two groups of human subjects: autism and control, and therefore we formulate it as $K=2$ graph estimation. We utilize the spatial distance between human brain regions as additional knowledge for estimating functional connectivity edges among brain regions. We use Linear Discriminant Analysis (LDA) for a downstream classification task aiming to assess the ability of a graph estimator to learn the differential patterns of the connectome structures.  (Details of the ABIDE dataset, baselines, design of the additional knowledge $W$ matrix, cross-validation and LDA classification method are in \sref{subsec:exp1more}.)

Figure~\ref{fig:sim_brain}(c) compares JEEK and three baselines: JEEK-NK, W-SIMULE and W-SIMULE with no additional knowledge (W-SIMULE-NK). JEEK yields a classification accuracy of 58.62\% for distinguishing the autism subjects versus the control subjects,  clearly outperforming JEEK-NK 
and W-SIMULE-NK. 
JEEK is roughly $7$ times faster than the W-SIMULE estimators, locating at the top left region in Figure~\ref{fig:sim_brain}(c) (higher classification accuracy and lower time cost).  We also experimented with variations of the $W$ matrix and found the classification results are fairly robust to the variations of $W$ (\sref{subsec:exp1more}).

\section{Conclusions}
\label{sec:concl}
\vspace{-3pt}

We propose a novel method, JEEK, to incorporate additional knowledge in estimating multi-sGGMs. JEEK achieves the same asymptotic convergence rate as the state-of-the-art. 
Our experiments has showcased using weights for describing pairwise knowledge among brain regions, for shared hub knowledge, for perturbed hub knowledge, for describing group information among nodes (e.g., genes known to be in the same pathway), and for using  known interaction edges as the knowledge.


\begin{figure*}[htb]
    \centering
    \includegraphics[width=0.32\textwidth]{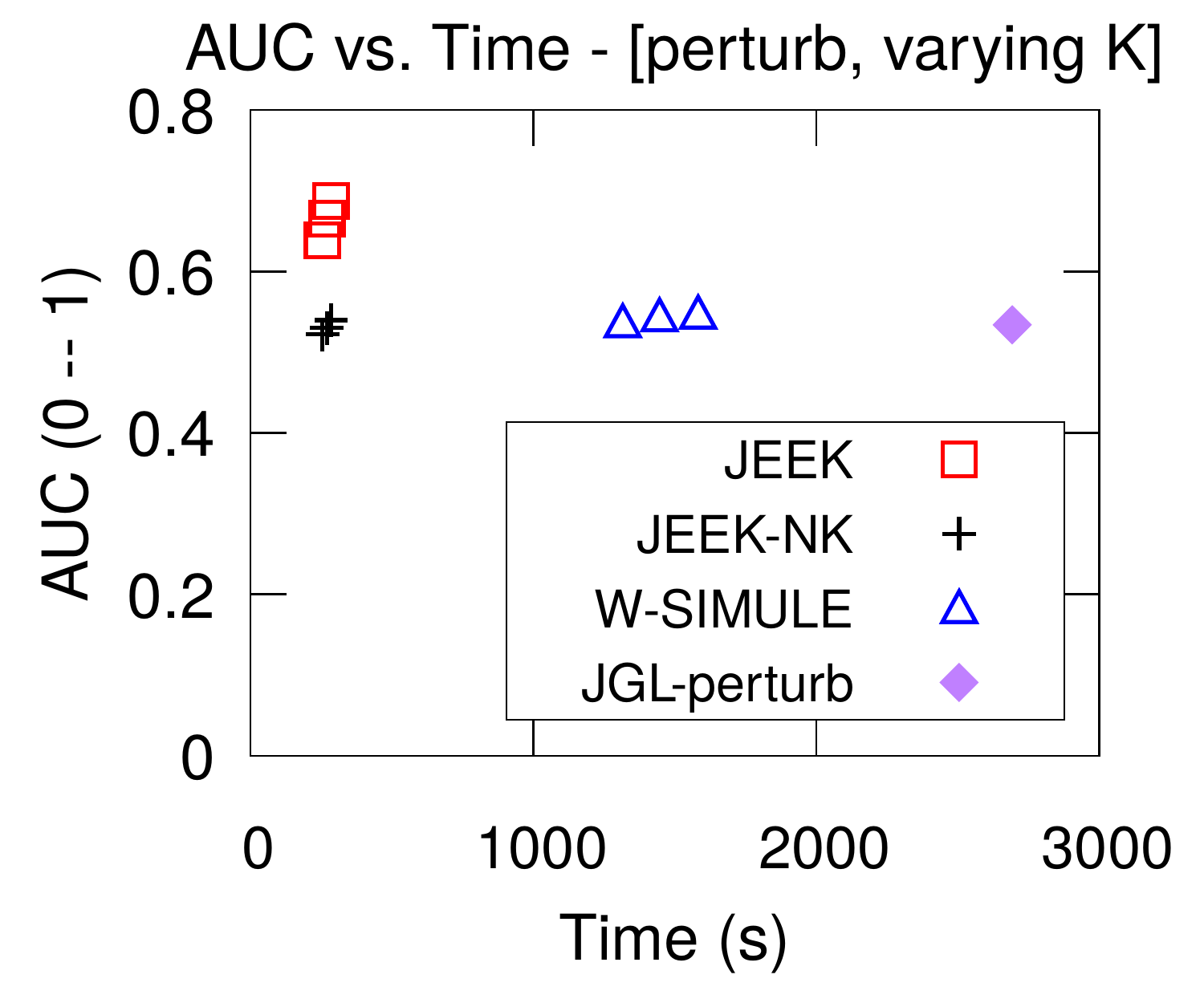}
    \includegraphics[width=0.32\textwidth]{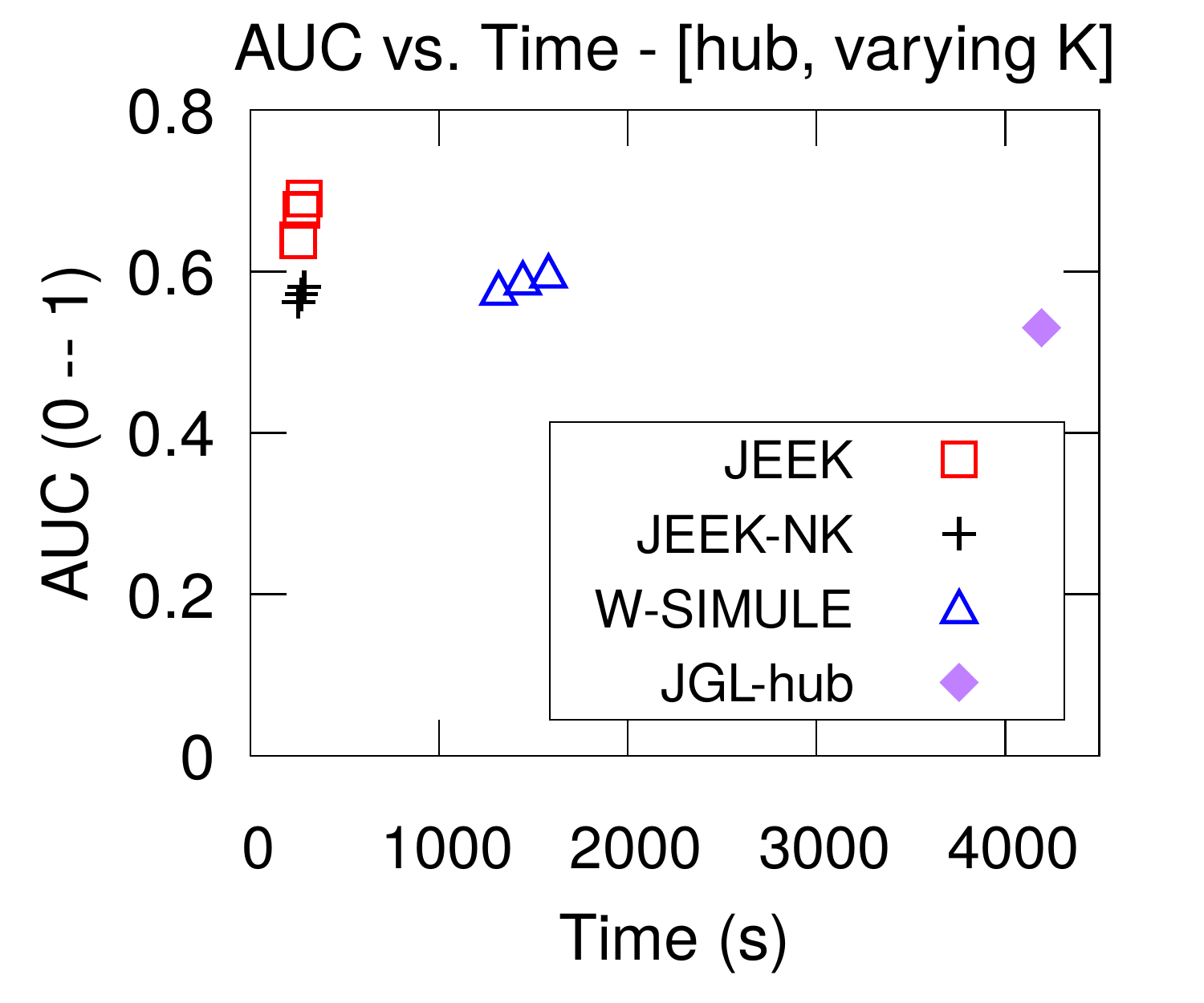}
    \includegraphics[width=0.35\textwidth]{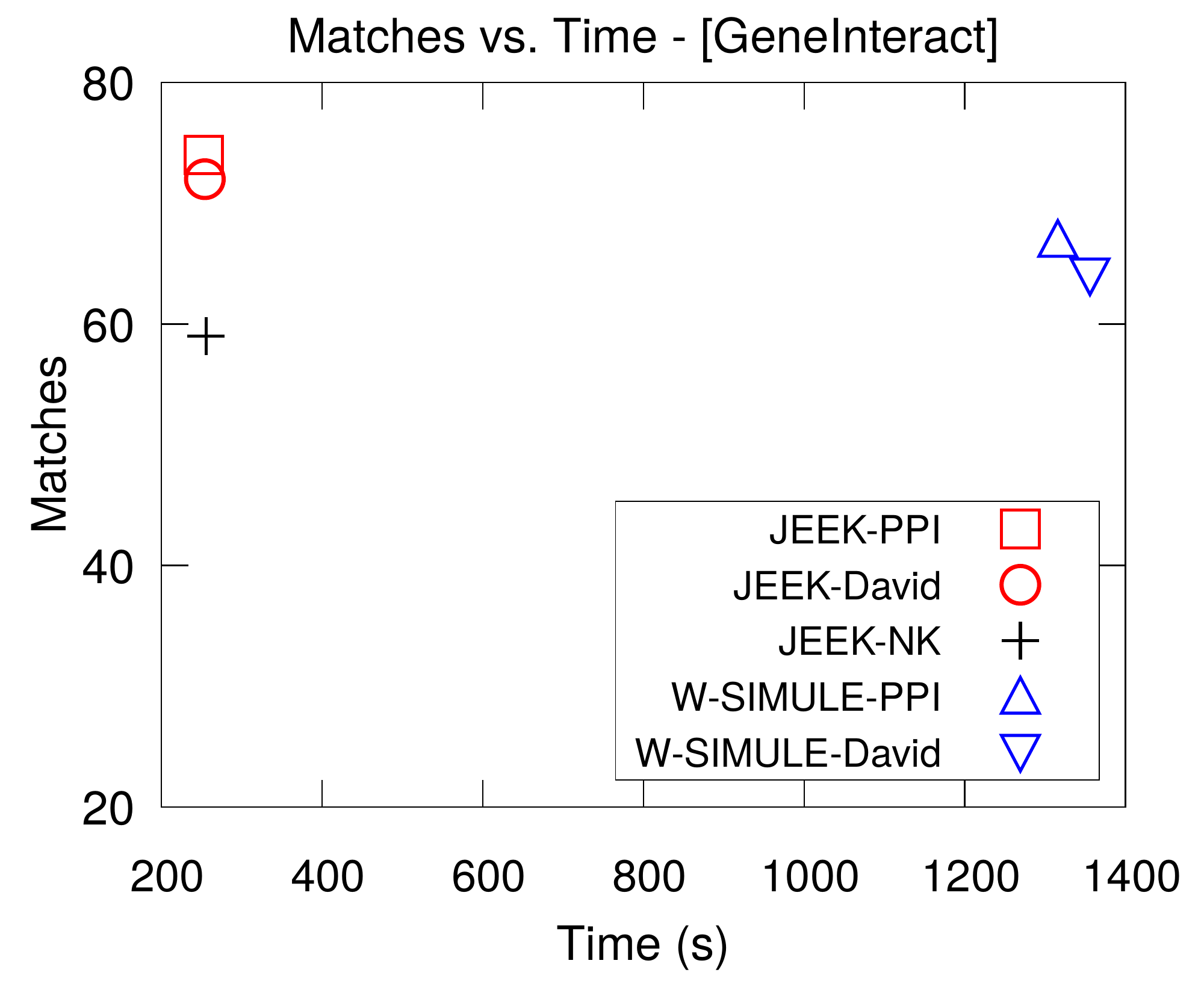}\\
    (a) \hspace{5cm} (b) \hspace{5cm} (c)\\   
    \vspace{-3mm}
    \caption{(a)(b) Performance comparison on simulation Datasets about hubs: AUC vs. Time when varying number of tasks $K$. (a) is the perturbed hub cases and (b) is for the co-hub cases. (c) Performance comparison on one real-world gene expression dataset with two cell types. Two type knowledge are used to cover one fifth of the nodes, therefore each method corresponds to two performance points.}
    \label{fig:sim_hubPPI}
\end{figure*}

\begin{figure*}[htb]
    \centering
    \includegraphics[width=0.33\textwidth]{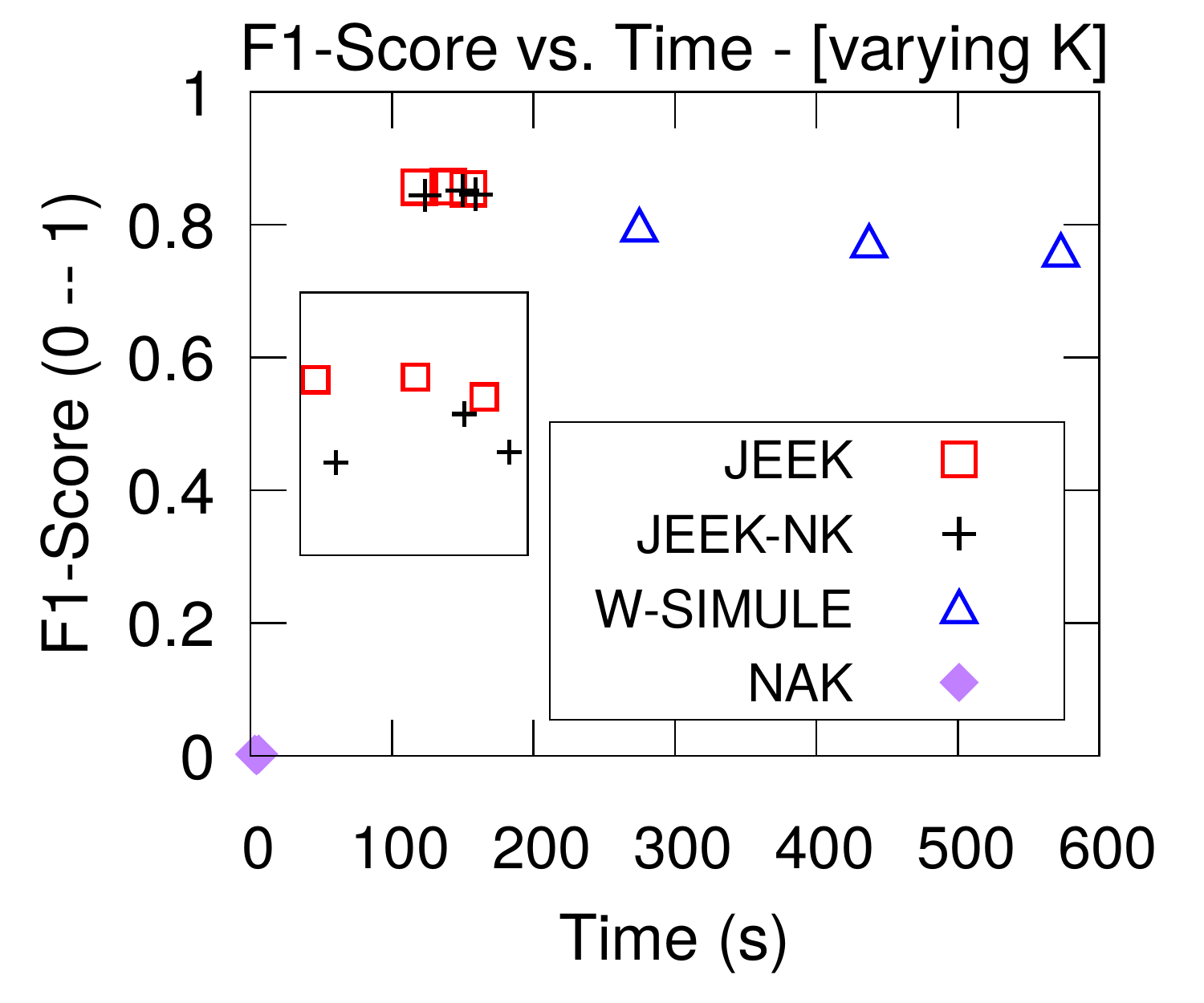}
    \includegraphics[width=0.33\textwidth]{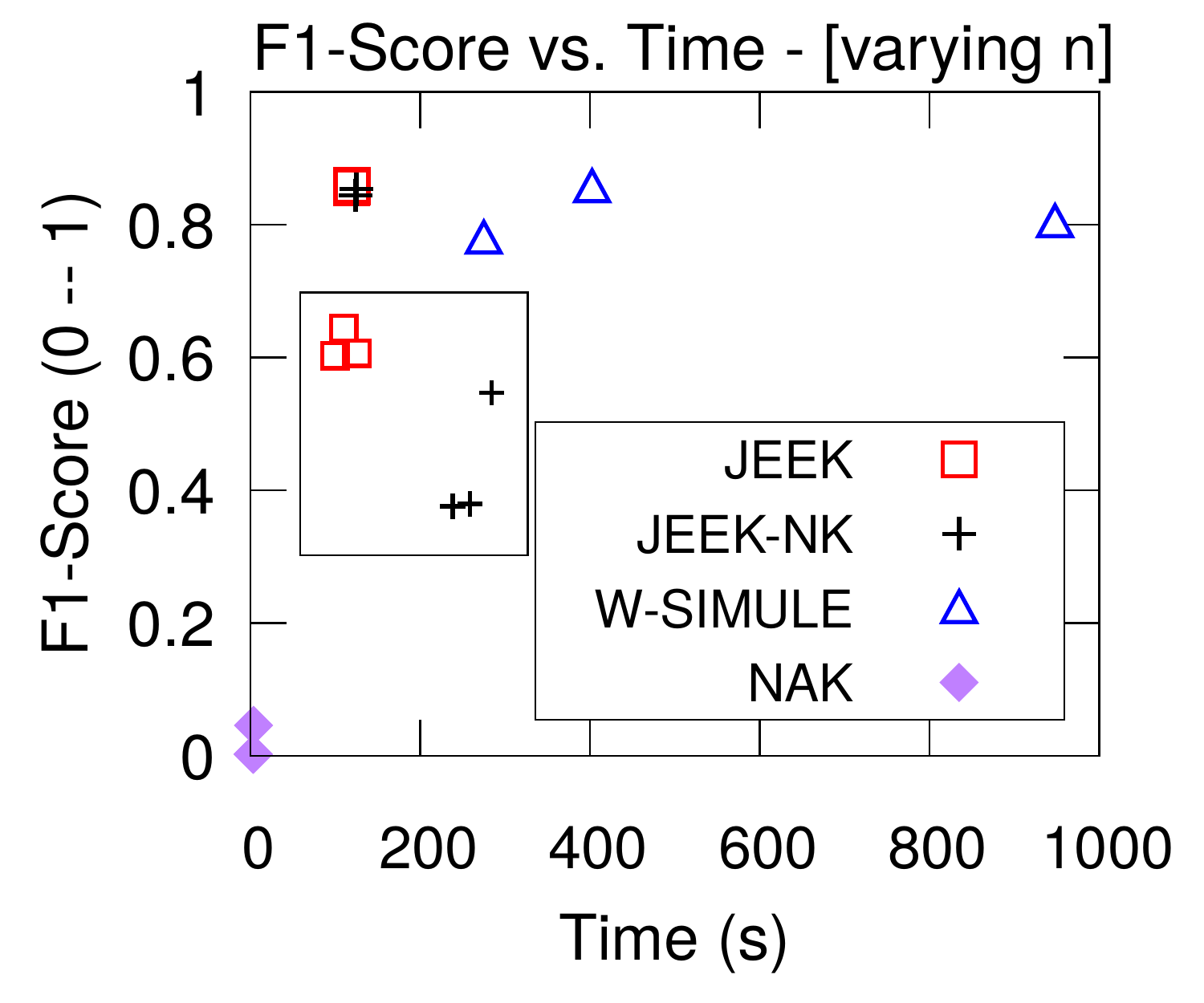}
    \includegraphics[width=0.33\textwidth]{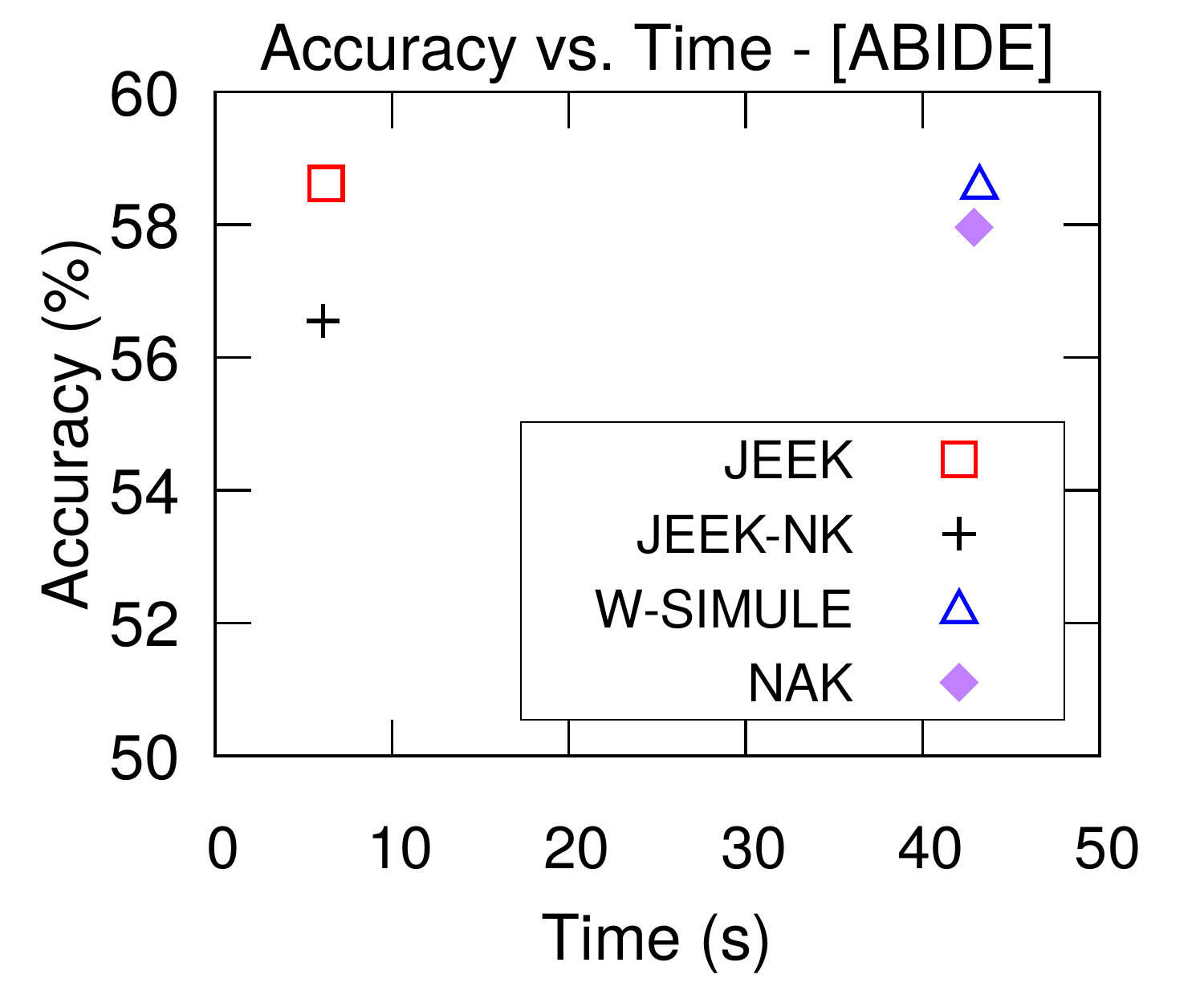}\\
    (a) \hspace{5cm} (b) \hspace{5cm} (c)\\
     \vspace{-3mm}
   \caption{Experimental Results on Simulated Brain Datasets and on ABIDE. (a) Performance obtained on  simulated brain samples with respect to F1-score vs. computational time cost when varying the number of tasks $K$.  (b) Performance obtained on  simulated brain samples with respect to F1-score vs. computational time cost when varying the number of samples $n$. In both (a) and (b) the smaller box shows an enlarged view comparing JEEK and JEEK-NK points. All JEEK points are in the top left region indicating higher F1-score and lower computational cost. (c). On ABIDE, JEEK outperforms the baseline methods in both classification accuracy and running time cost. JEEK and JEEK-NK points in the top left region and JEEK points are higher in terms of $y$-axis positions.}
    \label{fig:sim_brain}
\end{figure*}

\vspace{-1mm}


{\noindent \textbf{Acknowledgement:} \textit{This work was supported by the National Science Foundation under NSF CAREER award No. 1453580. Any Opinions, findings and conclusions or recommendations expressed in this material are those of the author(s) and do not necessarily reflect those of the National Science Foundation.}}


\textbf{\Large Appendix: }

\section{More about Method}
\label{sec:moremeth}

\textit{Notations: }  $X^{(i)}_{n_i \times p}$ is the data matrix
for the $i$-th task, which includes $n_i$
 data samples being described by $p$ different feature variables. Then $n_{tot} =\sum\limits_{i=1}^{K}n_i$ is the total number of data samples. We use notation $\Omega^{(i)}$ for the precision matrices and $\widehat{\Sigma}^{(i)}$ for the estimated covariance matrices. 
Given a $p$-dimensional vector $\x = (x_1, x_2, \dots, x_p)^T \in
\RR^p$, we denote the $l_1$-norm of
$x$ as $||\x||_1 = \sum\limits_i|x_i|$. $||\x||_{\infty} = \max\limits_i|x_i|$ is the $l_{\infty}$-norm
of $\x$. Similarly, for a matrix $X$, let $||X||_1 = \sum\limits_{i,j}|X_{i,j}|$ be the $\ell_1$-norm of $X$ and $||X||_{\infty} = \max\limits_{i,j}|X_{i,j}|$ be the $\ell_{\infty}$-norm of $X$. $||X||_F = \sqrt{\sum\limits_i \sum\limits_j X^2_{i,j}}$

\subsection{More about Solving JEEK}

In ~\eref{eq:JEEK-par}, let $a_i = a_i^+ - a_i^-$ and $b = b^+ - b^-$. If $a_i \ge 0$, then $a_i^+ = a_i$ and $a_i^- = 0$. If $a_i < 0$, then $a_i^+ = 0$ and $a_i^- = -a_i$. The $b^+$ and $b^-$ have the similar definition. Then~\eref{eq:JEEK-par} can be solved by the following small linear programming problem.
\begin{equation}
\label{eq:JEEK-lp}
    \begin{split}
    & \argmin\limits_{a_i,b}\sum\limits_i (w_i a_i^+ + w_i a_i^-) + Kw_s b^+ + Kw_sb^-  \\
    \text{Subject to:} \; & a_i^+ - a_i^- + b^+ - b^- \le c_i + \lambda_{n}\min(w_i,w_s), \\
    & a_i^+ - a_i^- + b^+ - b^- \ge c_i - \lambda_{n}\min(w_i,w_s), \\
    & a_i^+, a_i^-, b^+, b^- \ge 0
    \\& i =
    1,\dots,K \nonumber
    \end{split}
\end{equation}

\subsection{JEEK is Group entry-wise and parallelizing optimizable}
\label{sec:parallel}
JEEK can be easily paralleled. Essentially we just need to revise the ``For loop'' of step 6 and step 7 in Algorithm~\ref{alg:JEEK} into, for instance, ``entry per machine"  ``entry per core''.  Now We prove that JEEK is group entry-wise and parallelizing optimizable. We prove that our estimator can be optimized asynchronously in a group entry-wise manner. 
\begin{theorem}
(\textbf{JEEK is Group entry-wise optimizable})
 Suppose we use JEEK to infer multiple inverse of covariance matrices summarized as $\hat{\Omega}_{tot}$. $\{ [\hat{\Omega}_{I}^{(i)}]_{j,k}, [\hat{\Omega}_S]_{j,k} | i = 1,\dots,K \}$. describes a group of $K+1$ entries at $(j,k)$ position. Varying $j \in \{ 1,2,\dots,p \}$ and $k \in \{ 1,2,\dots,p \}$, we have a total of $p \times p$ groups. If these groups are independently estimated by JEEK, then we have,
\begin{equation}
 \bigcup\limits_{j = 1}^{p}\bigcup\limits_{k = 1}^{p} \{ ([\hat{\Omega}^{(i)}_I]_{j,k} + [\hat{\Omega}_S]_{j, k})|i = 1,\dots,K \} = \hat{\Omega}_{tot}. 
 \end{equation}
\label{entry-wise}
\end{theorem}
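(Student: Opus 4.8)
The plan is to prove the claim by a \emph{separability} argument: I will show that both the kw-norm objective of~\eref{eq:JEEK} and its two $\ell_\infty$ feasibility constraints decouple completely across the entry positions $(j,k)$, so that the global program~\eref{eq:JEEK} is exactly equivalent to the collection of $p\times p$ independent small programs~\eref{eq:JEEK-par}, one per position. Once that equivalence is established, reconstructing $\hat{\Omega}_{tot}$ by aggregating the per-group quantities $a_i+b$ over all $(j,k)$ is immediate. A key structural fact I will lean on is that the elementary-estimator form~\eref{eq:JEEK} carries \emph{no} positive-definiteness constraint on $\Omega^{tot}$ (unlike the MLE formulation~\eref{eq:sumsggm}); a PD constraint would couple all entries of each $\Omega^{(i)}$ and destroy separability, so its absence is precisely what makes the entry-wise decomposition possible.

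First I would expand the objective. Writing each matrix $\ell_1$-norm as the sum of absolute values of its entries, the term $\|W^{tot}_I\circ\Omega^{tot}_I\|_1$ becomes $\sum_{j,k}\sum_{i=1}^{K} w_i |a_i|$, and because $\Omega^{tot}_S=(\Omega_S,\dots,\Omega_S)$ repeats the same block $K$ times, the term $\|W^{tot}_S\circ\Omega^{tot}_S\|_1$ becomes $\sum_{j,k} K\,w_s|b|$. Hence $\mathcal{R}(\Omega^{tot})=\sum_{j,k}\big(\sum_i w_i|a_i| + K w_s|b|\big)$ is a sum over positions in which the summand at $(j,k)$ depends only on the group of variables $\{a_1,\dots,a_K,b\}$ attached to that position. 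This matches the objective of~\eref{eq:JEEK-par} exactly, including the factor $K$.

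Next I would decompose the constraints. Each $\ell_\infty$ bound is a bound on a maximum over all entries, so it is equivalent to the conjunction of the corresponding bounds at every individual position. Substituting $\Omega^{(i)}=\Omega_I^{(i)}+\Omega_S$ (so the $(j,k)$ entry of $\Omega^{tot}$ for task $i$ equals $a_i+b$) and recalling $c_i=[T_v(\hat{\Sigma}^{(i)})]^{-1}_{j,k}$, the first constraint yields $|a_i+b-c_i|\le\lambda_n w_i$ and the second yields $|a_i+b-c_i|\le\lambda_n w_s$ at each $(j,k)$. Requiring both simultaneously is exactly $|a_i+b-c_i|\le\lambda_n\min(w_i,w_s)$, the constraint of~\eref{eq:JEEK-par} (this is precisely where the corrected $\frac{1}{W}$ dual form and the resulting $\min$ enter). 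Consequently the feasible set is a Cartesian product over positions: the constraints at $(j,k)$ never reference variables from a different position.

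Finally I would invoke the elementary separability principle for constrained optimization: if an objective is a sum $\sum_{j,k} f_{j,k}$ and the feasible region is a product set $\prod_{j,k}\mathcal{C}_{j,k}$ with each $f_{j,k}$ depending only on the block constrained by $\mathcal{C}_{j,k}$, then a global minimizer is obtained by minimizing each $f_{j,k}$ over $\mathcal{C}_{j,k}$ independently. Applying this shows that the independently computed groups $\{[\hat{\Omega}_I^{(i)}]_{j,k},[\hat{\Omega}_S]_{j,k}\}$ solve~\eref{eq:JEEK}, and taking the union of $a_i+b=[\hat{\Omega}_I^{(i)}]_{j,k}+[\hat{\Omega}_S]_{j,k}$ over all $(j,k)$ reassembles each $\hat{\Omega}^{(i)}$ and hence $\hat{\Omega}_{tot}$, which is the stated identity; by symmetry of the precision matrices the $p\times p$ groups reduce to the $p(p-1)/2$ triangular groups actually solved in Algorithm~\ref{alg:JEEK}. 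The main obstacle I anticipate is not the separation itself but the careful bookkeeping of the two $\ell_\infty$ constraints collapsing to the single $\min(w_i,w_s)$ form together with the factor-$K$ accounting from the repeated shared block; everything else follows routinely once the product structure of the feasible set is made explicit.
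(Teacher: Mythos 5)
Your proposal is correct and follows essentially the same route as the paper: the paper's own (one-line) proof simply asserts that \eref{eq:JEEK} decomposes into the independent sub-problems \eref{eq:JEEK-par}, and your argument is a careful elaboration of exactly that separability — expanding the kw-norm as a sum over positions, splitting each $\ell_\infty$ constraint entry-wise so the two bounds collapse to the $\min(w_i,w_s)$ form, and invoking the product-structure principle for sums of objectives over Cartesian feasible sets. Your added observations (the absence of a positive-definiteness constraint as the enabler of separability, and the symmetry reduction to $p(p-1)/2$ groups) are consistent with the paper and fill in details the paper leaves implicit.
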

\begin{proof}
\eref{eq:JEEK-par} are the small sub-linear programming problems on each group of entries. 
\end{proof}

\subsection{Extending JEEK with Structured Norms}

We can add more flexibility into the JEEK by adding structured norms like those second normalization functions used in JGL. This will extend JEEK to the following formulation:
\begin{equation}
\begin{split}
    \label{eq:JEEK-ex}
     \argmin\limits_{\Omega^{tot}_I, \Omega^{tot}_S} & ||W^{tot}_I \circ \Omega^{tot}_I||_1 + ||W^{tot}_S\circ \Omega^{tot}_S|| + \epsilon \mathcal{R}'(\Omega^{tot}) \\
    \text{Subject to: } & ||\dfrac{1}{W^{tot}_I} \circ (\Omega^{tot} - inv(T_v(\hat{\Sigma}^{tot}))) ||_{\infty} \le \lambda_n\\
    & ||\dfrac{1}{W^{tot}_S} \circ (\Omega^{tot} - inv(T_v(\hat{\Sigma}^{tot}))) ||_{\infty} \le \lambda_n\\
    & {\mathcal{R}^*}'(\Omega^{tot}) \le \epsilon \lambda_n
\end{split}
\end{equation}
Here, $\mathcal{R}'$ needs to consider $\Omega^{tot}$.  We propose two ways to solve~\eref{eq:JEEK-ex}. 
(1) The first is to use the parallelized proximal algorithm directly. However, this requires the kw-norm has a closed-form proximity, which has not been discovered. 
(2) In the second strategy we assume each weighted $\ell_1$ norm (either $\Omega^{(i)_I}$ or $\Omega_S$) in the objective of ~\eref{eq:JEEK-ex}  as an indepedent regularizer. However, this increases the number of proximities we need to calculate per iteration to $K+1$. Both  two solutions make the extend-JEEK algorithm less fast or less scalable. Therefore, We choose not to introduce this work in this paper.

\section{Connecting to the Bayesian statistics}
\label{sec:Bayes}

Our approach has a close connection to a  hierarchical Bayesian model perspective. We show that the additional knowledge weight matrices are also the parameters of the prior distribution of $\Omega_I^{(i)},{\Omega_S}$.
In our formulation\eref{eq:JEEK}, $W_I^{(i)}, W_S$ are the additional knowledge weight matrices. From a hierarchical Bayesian view, the first level of the prior is a Gaussian distribution and the second level is a Laplace distribution. In the following section, we show that $W_I^{(i)}, W_S$ are also the parameters of Laplace distributions, which is a prior distribution of $\Omega_I^{(i)},{\Omega_S}$.

Since by the definition, ${\Omega_I^{(i)}}_{j,k}{\Omega_S}_{j,k} = 0$. There are only two possible situations:

Case I (${\Omega_I^{(i)}}_{j,k} = 0$):
\begin{equation}
\label{eq:bayes1}
    X^{(i)} | \mu^{(i)}, \Omega^{(i)} \sim N(\mu^{(i)}, (\Omega^{(i)})^{-1})
\end{equation}

\begin{equation}
\begin{split}
    \label{eq:bayes2}
    \Omega^{(i)}_{j,k} | \mu^{(i)}, {W_I^{(i)}}_{j,k}, {W_S}_{j,k} = {\Omega_S}_{j,k} | \mu^{(i)}, {W_S}_{j,k}
\end{split}
\end{equation}
\begin{equation}
\begin{split}
    \label{eq:bayes3}
    &p({\Omega_S}_{j,k}|\mu^{(i)}, {W_S}_{j,k}) \\ 
    &\propto e^{-({W_S}_{j,k} |{\Omega_S}_{j,k}|)}
\end{split}
\end{equation}
Here ${\Omega_S}_{j,k}|\mu^{(i)}, {W_S}_{j,k}$ follows a Laplace distribution with mean $0$. $1/{W_S}_{j,k} > 0$ is the diversity parameter. The larger ${W_S}_{j,k}$ is, the distribution of ${\Omega_S}_{j,k}|\mu^{(i)}, {W_S}_{j,k}$ more likely concentrate on the $0$. Namely, there will be the higher density for ${\Omega_S}_{j,k} = 0|\mu^{(i)}, {W_S}_{j,k}$.

Case II (${\Omega_S}_{j,k} = 0$):
\begin{equation}
\label{eq:bayes4}
    X^{(i)} | \mu^{(i)}, \Omega^{(i)} \sim N(\mu^{(i)}, (\Omega^{(i)})^{-1})
\end{equation}

\begin{equation}
\begin{split}
    \label{eq:bayes5}
    \Omega^{(i)}_{j,k} | \mu^{(i)}, {W_I^{(i)}}_{j,k}, {W_S}_{j,k} = {\Omega_I^{(i)}}_{j,k} | \mu^{(i)}, {W_I^{(i)}}_{j,k}
\end{split}
\end{equation}

\begin{equation}
\begin{split}
    \label{eq:bayes6}
    &p({\Omega_I^{(i)}}_{j,k}|\mu^{(i)}, {W_I^{(i)}}_{j,k}) \\ 
    &\propto e^{-({W_I^{(i)}}_{j,k}|{\Omega_I^{(i)}}_{j,k}|)}
\end{split}
\end{equation}
Here ${\Omega_I^{(i)}}_{j,k}|\mu^{(i)}, {W_I^{(i)}}_{j,k}$ follows a Laplace distribution with mean $0$. $1/{W_I^{(i)}}_{j,k} > 0$ is the diversity parameter. The larger ${W_I^{(i)}}_{j,k}$ is, the distribution of ${\Omega_I^{(i)}}_{j,k}|\mu^{(i)}, {W_I^{(i)}}_{j,k}$ more likely concentrate on the $0$. Namely, there will be the higher density for ${\Omega_I^{(i)}}_{j,k} = 0|\mu^{(i)}, {W_I^{(i)}}_{j,k}$.

Therefore, we can combine the above two cases into the following one equation.

\begin{equation}
\begin{split}
    \label{eq:bayes7}
    &p(\Omega^{(i)}_{j,k}|\mu^{(i)}, {W_I^{(i)}}_{j,k}, {W_S}_{j,k}) \\ 
    &\propto e^{-({W_I^{(i)}}_{j,k}|{\Omega_I^{(i)}}_{j,k}| + {W_S}_{j,k} |{\Omega_S}_{j,k}|)}
\end{split}
\end{equation}

Our final hierarchical Bayesian formulation consists of the~\eref{eq:bayes1} and~\eref{eq:bayes7}. 
This model is a generalization of the model considered in the seminal paper
on the Bayesian lasso\cite{park2008bayesian}. The parameters ${W_I^{(i)}}_{j,k}, {W_S}_{j,k}$ in our general
model are hyper-parameters that specify the shape of the prior distribution of each edges in $\Omega^{(i)}$. The negative log-posterior distribution of $\Omega^{(i)}$ is now given by:
\begin{equation}
\begin{split}
    \label{eq:bayes8}
    &-\log(\P(\Omega^{(i)}| X^{(i)}, \mu^{(i)}, {W_I^{(i)}}_{j,k}, {W_S}_{j,k})) \\
    &\propto -\log(det({\Omega^{(i)}}^{-1})) + <\Omega^{(i)}, \hat{\Sigma}^{(i)}>\\ 
    &+ \sum\limits_{j,k} ({W_I^{(i)}}_{j,k}|{\Omega_I^{(i)}}_{j,k}| + W_S |{\Omega_S}_{j,k}|)
\end{split}
\end{equation}
~\eref{eq:bayes8} follows a weighted variation  of~\eref{eq:ggm}.

\section{More about Theoretical Analysis}
\label{sec:proof}

\subsection{Theorems and Proofs of three properties of kw-norm}
\label{sec:proofkw}

In this sub-section, we prove the three properties of kw-norm used in Section~\ref{sec:wkn}. 
We then provide the convergence rate of our estimator based on these three properties. 

\begin{itemize}
    \item (i) kw-norm is a norm function if and only if any entries in $W^{tot}_I$ and $W^{tot}_S$ do not equal to $0$.
    \item (ii) If the condition in (i) holds, kw-norm is a decomposable norm.
    \item (iii) If the condition in (i) holds, the dual norm of kw-norm is $\mathcal{R}^*(u) = \max(||W^{tot}_I \circ     u||_{\infty}, ||W^{tot}_S \circ u||_{\infty})$.
\end{itemize}

\subsubsection{Norm:}~ First we prove the correctness of the argument that kw-norm is a norm function by the following theorem:
\begin{theorem}
\label{theo:norm}
~\eref{eq:kw-norm} is a norm if and only if $\forall 1\ge j,k\le p, {W^{(i)}_I}_{jk}\ne 0$, and ${W_{S}}_{j,k}\ne 0$.
\end{theorem}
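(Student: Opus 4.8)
The plan is to recognize that, once expanded over the underlying free parameters, the kw-norm is nothing but a \emph{weighted $\ell_1$ norm}, and then to invoke the elementary fact that a weighted $\ell_1$ norm is a genuine norm exactly when all of its weights are nonzero. Concretely, I would view $\mathcal{R}$ of \eref{eq:kw-norm} as a function of the free parameter tuple $v := (\Omega^{(1)}_I,\dots,\Omega^{(K)}_I,\Omega_S) \in \R^{(K+1)p^2}$ (recall that $\Omega^{tot}_S$ is just $\Omega_S$ repeated $K$ times). Using $\Omega^{tot}_S = (\Omega_S,\dots,\Omega_S)$ and $W^{tot}_S=(W_S,\dots,W_S)$, the shared term collapses to $||W^{tot}_S\circ\Omega^{tot}_S||_1 = K||W_S\circ\Omega_S||_1$, so that
\begin{equation}
\mathcal{R}(v) = \sum_{i=1}^{K}\sum_{j,k} |{W^{(i)}_I}_{j,k}|\,|{\Omega^{(i)}_I}_{j,k}| \;+\; K\sum_{j,k} |{W_S}_{j,k}|\,|{\Omega_S}_{j,k}|.
\end{equation}
This is a nonnegative weighted sum of absolute values of the coordinates of $v$, with weights $|{W^{(i)}_I}_{j,k}|$ and $K|{W_S}_{j,k}|$.

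For the ($\Leftarrow$) direction I would assume every entry of $W^{tot}_I$ and $W^{tot}_S$ is nonzero and verify the three norm axioms. Absolute homogeneity, $\mathcal{R}(\alpha v)=|\alpha|\mathcal{R}(v)$, and the triangle inequality, $\mathcal{R}(v+v')\le \mathcal{R}(v)+\mathcal{R}(v')$, hold for \emph{arbitrary} weights because each summand reduces entrywise to a property of $|\cdot|$ on $\R$ (namely $|\alpha x|=|\alpha||x|$ and $|x+y|\le|x|+|y|$, multiplied by the nonnegative weight). The only axiom that uses the hypothesis is point-separation: since $\mathcal{R}(v)$ is a sum of nonnegative terms, $\mathcal{R}(v)=0$ forces $|{W^{(i)}_I}_{j,k}|\,|{\Omega^{(i)}_I}_{j,k}|=0$ and $|{W_S}_{j,k}|\,|{\Omega_S}_{j,k}|=0$ for every $i,j,k$; because each weight is nonzero this yields ${\Omega^{(i)}_I}_{j,k}=0$ and ${\Omega_S}_{j,k}=0$, i.e.\ $v=0$. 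Together with $\mathcal{R}(v)\ge 0$ this establishes positive definiteness, so $\mathcal{R}$ is a norm.

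For the ($\Rightarrow$) direction I would argue by contraposition: if some weight vanishes, $\mathcal{R}$ fails point-separation. Suppose, say, ${W^{(i_0)}_I}_{j_0,k_0}=0$ (the shared case is identical). Take $v^\star$ to be the tuple whose only nonzero block is $\Omega^{(i_0)}_I$, set to the matrix with a $1$ in position $(j_0,k_0)$ and, to respect the symmetry of precision matrices, also in $(k_0,j_0)$ (here I use that the knowledge weight matrices are symmetric, so ${W^{(i_0)}_I}_{k_0,j_0}=0$ as well). Then $v^\star\ne 0$ yet every term of $\mathcal{R}(v^\star)$ carries a zero weight, so $\mathcal{R}(v^\star)=0$; hence $\mathcal{R}$ is not a norm. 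This completes the equivalence.

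I expect essentially no deep obstacle here: homogeneity and the triangle inequality are purely mechanical, and the entire content of the theorem is concentrated in the definiteness axiom, where the nonzero-weight hypothesis is used in one direction and negated to build the counterexample in the other. The only point demanding care is bookkeeping about the domain --- making explicit that $\mathcal{R}$ is a norm on the space of free parameters $(\Omega^{tot}_I,\Omega_S)$ rather than on $\Omega^{tot}=\Omega^{tot}_I+\Omega^{tot}_S$ (whose decomposition is only rendered unique by the disjoint-support \textbf{(IS-Sparsity)} assumption) --- and handling the symmetry convention consistently when exhibiting the counterexample.
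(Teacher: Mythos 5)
Your proposal is correct, and its core is the same as the paper's: both reduce the kw-norm to a weighted $\ell_1$ norm and check the norm axioms, with homogeneity and the triangle inequality holding for arbitrary weights and the nonzero-weight hypothesis doing all the work in the definiteness axiom (the paper's Lemma on replacing $W_{j,k}\ne 0$ by $W_{j,k}>0$ plays the role of your taking absolute values of the weights). The genuine difference is coverage of the biconditional: the paper's appendix proof only verifies the ``if'' direction --- it shows $f(x)=||W\circ x||_1$ is a norm when all $W_{i,j}\ne 0$ and concludes via closure of norms under sums --- and never establishes the ``only if''. You do, by contraposition, exhibiting a nonzero tuple $v^\star$ supported on a zero-weight entry so that $\mathcal{R}(v^\star)=0$, which is exactly the missing half of the stated equivalence; your attention to the symmetry convention (placing the $1$ at both $(j_0,k_0)$ and $(k_0,j_0)$, using symmetry of the weight matrices) is the right bookkeeping to make that counterexample airtight. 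Your other added observation --- that $\mathcal{R}$ should be read as a norm on the free parameters $(\Omega^{tot}_I,\Omega_S)$, where the shared term collapses to $K||W_S\circ\Omega_S||_1$, rather than on $\Omega^{tot}$ itself, whose $I/S$ decomposition is only pinned down by the \textbf{(IS-Sparsity)} disjoint-support assumption --- is implicit but unstated in the paper, and consistent with the factor $K$ appearing in \eref{eq:JEEK-par}. In short: same route, but your version is strictly more complete than the paper's written proof.
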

This theorem gives the sufficient and necessary conditions to make kw-norm (~\eref{eq:kw-norm}) a norm function.

\subsubsection{Decomposable Norm:}~ Then we show that kw-norm is a decomposable norm within a certain subspace. 
Before providing the theorem, we give the structural assumption of the parameter.

\textbf{(IS-Sparsity):} The 'true' parameter for  ${\Omega^{tot}}^*$  ( multiple GGM structures) can be decomposed into two clear structures--${\Omega^{tot}_I}^*$ and ${\Omega^{tot}_S}^*$. ${\Omega^{tot}_I}^*$ is exactly sparse with $k_i$ non-zero entries indexed by a support set $S_I$ and ${\Omega^{tot}_S}^*$ is exactly sparse with $k_s$ non-zero entries indexed by a support set $S_S$. $S_I\bigcap S_S = \emptyset$. All other elements  equal to $0$ (in $(S_I\bigcup S_S)^c$). 
\begin{definition}(IS-subspace)
\label{def:m}
\begin{equation}
   \mathcal{M}(S_I\bigcup S_S) = \{ \theta_{j} = 0| \forall j \notin S_I\bigcup S_S\} 
\end{equation}
\end{definition}

\begin{theorem}
\label{theo:decomp}
~\eref{eq:kw-norm} is a decomposable norm with respect to $\mathcal{M}$ and $\bar{\mathcal{M}}^{\perp}$
\end{theorem}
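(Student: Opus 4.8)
The plan is to show that the kw-norm satisfies the defining property of a decomposable regularizer in the sense of Negahban et al.~\cite{negahban2009unified}: namely, that for the model subspace $\mathcal{M} = \mathcal{M}(S_I\bigcup S_S)$ from Definition~\ref{def:m} and its perpendicular complement $\bar{\mathcal{M}}^{\perp}$, we have $\mathcal{R}(u + v) = \mathcal{R}(u) + \mathcal{R}(v)$ for all $u \in \mathcal{M}$ and all $v \in \bar{\mathcal{M}}^{\perp}$. Since Theorem~\ref{theo:norm} already guarantees (under the condition that all weight entries are nonzero) that kw-norm is a genuine norm, I only need to verify the additive splitting, so the main work is purely structural.

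First I would set up the coordinates carefully. The argument $\Omega^{tot}$ is decomposed as $\Omega^{tot}_I + \Omega^{tot}_S$, and by the (IS-Sparsity) assumption the supports $S_I$ and $S_S$ are disjoint, so every entry of $\Omega^{tot}$ belongs to exactly one of the index sets $S_I$, $S_S$, or $(S_I \cup S_S)^c$. The subspace $\mathcal{M}(S_I \cup S_S)$ collects parameters supported on $S_I \cup S_S$, while $\bar{\mathcal{M}}^{\perp}$ collects those supported on the complement. Because kw-norm is built from two weighted $\ell_1$ terms, $\|W^{tot}_I \circ \Omega^{tot}_I\|_1 + \|W^{tot}_S \circ \Omega^{tot}_S\|_1$, and the $\ell_1$ norm is itself decomposable entrywise, the key observation is that the weighting by $W^{tot}_I$ and $W^{tot}_S$ acts coordinatewise and therefore preserves the separability across disjoint support sets.

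The central computation is then to take $u \in \mathcal{M}$ and $v \in \bar{\mathcal{M}}^{\perp}$, split each into its $I$ and $S$ components, and observe that because $u$ is supported on $S_I \cup S_S$ and $v$ on $(S_I \cup S_S)^c$, the two have disjoint supports entry by entry. I would write $\mathcal{R}(u+v)$ as a sum of absolute values of weighted entries, and since no index simultaneously carries a nonzero contribution from both $u$ and $v$, each term $|W_{j,k}(u+v)_{j,k}| = |W_{j,k} u_{j,k}| + |W_{j,k} v_{j,k}|$ splits cleanly, yielding $\mathcal{R}(u) + \mathcal{R}(v)$. I would do this separately for the $I$-indexed block (weighted by $W^{tot}_I$) and the $S$-indexed block (weighted by $W^{tot}_S$) and then add, using disjointness of $S_I$ and $S_S$ to ensure the two blocks do not interfere.

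The main obstacle, and the step requiring the most care, is bookkeeping the two-level decomposition: each $\Omega^{tot}$ is first split into individual/shared parts ($\Omega^{tot}_I,\Omega^{tot}_S$) and then each part is supported on a support set ($S_I$ or $S_S$), so I must be precise that $\mathcal{M}$ and $\bar{\mathcal{M}}^{\perp}$ are defined jointly over both the individual and shared coordinates and that the disjointness $S_I \cap S_S = \emptyset$ is exactly what prevents cross terms between the two weighted $\ell_1$ pieces. Once the coordinate structure is pinned down, the additivity is a direct consequence of the entrywise additivity of $|\cdot|$ on disjoint supports, so no hard analytic estimate is needed — the difficulty is entirely in stating the subspaces and supports unambiguously.
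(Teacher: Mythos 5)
Your proposal is correct and matches the paper's proof, which performs exactly the same one-line computation: for $u \in \mathcal{M}$ and $v \in \bar{\mathcal{M}}^{\perp}$, split into the $I$ and $S$ components and use disjointness of supports to write $\mathcal{R}(u+v) = \|W_I^{tot}\circ u_I\|_1 + \|W_S^{tot}\circ u_S\|_1 + \|W_I^{tot}\circ v_I\|_1 + \|W_S^{tot}\circ v_S\|_1 = \mathcal{R}(u)+\mathcal{R}(v)$. Your write-up merely makes explicit the coordinatewise bookkeeping (entrywise additivity of $|\cdot|$ on disjoint supports, with $S_I \cap S_S = \emptyset$ preventing cross terms) that the paper leaves implicit in its middle equality.
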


\subsubsection{Dual Norm of kw-Norm:}~To obtain the final formulation~\eref{eq:JEEK} and 
its statistical convergence rate, we need to derive the dual norm formulation of kw-norm.
\begin{theorem}
\label{theo:dual}
The dual norm of kw-norm (~\eref{eq:kw-norm}) is
\begin{equation}
\label{eq:dual-norm}
\mathcal{R}^*(u) = \max(||\dfrac{1}{W^{tot}_I} \circ     u||_{\infty}, ||\dfrac{1}{W^{tot}_S} \circ u||_{\infty})
\end{equation}
\end{theorem}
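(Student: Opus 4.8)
The plan is to work directly from the definition of the dual norm, $\mathcal{R}^*(u) = \sup_{\mathcal{R}(\theta^{tot}) \le 1} <\theta^{tot}, u>$, and to exploit the superposition structure $\theta^{tot} = \theta_I^{tot} + \theta_S^{tot}$ built into the kw-norm. First I would record the dual of a single weighted $\ell_1$ norm as a lemma: for any weight matrix $W$ whose entries are all nonzero (guaranteed by property (i) of \lref{theo:norm}), the substitution $\phi = W \circ \theta$ is a bijection, so that $\sup_{||W \circ \theta||_1 \le t} <\theta, u> = \sup_{||\phi||_1 \le t} <\frac{1}{W}\circ \phi, u> = t\,||\frac{1}{W}\circ u||_\infty$ by the standard $\ell_1$--$\ell_\infty$ duality. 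This is the step where the weight gets inverted, and it is exactly where our earlier erroneous $\max(||W_I^{tot}\circ u||_\infty, ||W_S^{tot}\circ u||_\infty)$ came from mishandling the substitution; getting the $\frac{1}{W}$ factor right is the crux of the corrected statement.

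Next I would combine this lemma with the bilinearity of the inner product. Since $<\theta^{tot}, u> = <\theta_I^{tot}, u> + <\theta_S^{tot}, u>$ and the constraint $\mathcal{R}(\theta^{tot}) \le 1$ reads $||W_I^{tot}\circ\theta_I^{tot}||_1 + ||W_S^{tot}\circ\theta_S^{tot}||_1 \le 1$, I can optimize over $\theta_I^{tot}$ and $\theta_S^{tot}$ separately once the unit budget is split between them. Allocating mass $t \in [0,1]$ to the individual part and $1-t$ to the shared part, the lemma gives the value $t\,||\frac{1}{W_I^{tot}}\circ u||_\infty + (1-t)\,||\frac{1}{W_S^{tot}}\circ u||_\infty$; maximizing this linear function of $t$ over $[0,1]$ places all the budget on the larger of the two $\ell_\infty$ quantities, yielding $\mathcal{R}^*(u) = \max(||\frac{1}{W_I^{tot}}\circ u||_\infty, ||\frac{1}{W_S^{tot}}\circ u||_\infty)$.

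The main obstacle is conceptual rather than computational: I must justify that the supremum defining $\mathcal{R}^*$ genuinely decouples into independent optimizations over $\theta_I^{tot}$ and $\theta_S^{tot}$, i.e., that the kw-norm acts as the infimal convolution of the two weighted $\ell_1$ norms under the superposition $\theta^{tot} = \theta_I^{tot} + \theta_S^{tot}$, so that the \emph{same} $u$ is paired against both components. Once the budget-splitting is set up, the problem reduces to the two independent weighted-$\ell_1$ suprema supplied by the lemma, and the fact that a linear function on $[0,1]$ attains its maximum at an endpoint converts the sum into the maximum. I would also note explicitly that property (i) (no zero entries in $W_I^{tot}$ and $W_S^{tot}$) is precisely what makes $\frac{1}{W_I^{tot}}$ and $\frac{1}{W_S^{tot}}$ well-defined and the change of variables invertible, so that the dual norm is finite and the stated expression is meaningful.
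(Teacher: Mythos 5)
Your proposal is correct and takes essentially the same route as the paper: a lemma identifying the dual of a single weighted $\ell_1$ norm as $||\dfrac{1}{W}\circ u||_{\infty}$ (the paper's \lref{le:dual}), followed by decoupling the supremum across the superposition $\theta^{tot}=\theta_I^{tot}+\theta_S^{tot}$, which the paper does via a generic H\"older-type chain for $\mathcal{R}(\theta)=\sum_{\alpha}c_{\alpha}\mathcal{R}_{\alpha}(\theta_{\alpha})$ and you do via an explicit budget split $t,\,1-t$ with endpoint maximization of a linear function. If anything, your version is slightly more complete: the change of variables $\phi = W\circ\theta$ and the linear-in-$t$ argument give exact equality, whereas the paper's displayed chains for both the lemma and the theorem only establish the upper bound $\mathcal{R}^*(u)\le\max(||\dfrac{1}{W^{tot}_I}\circ u||_{\infty},\,||\dfrac{1}{W^{tot}_S}\circ u||_{\infty})$ and leave attainment implicit.
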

The details of the proof are as follows.

\subsubsection{Proof of~\rref{theo:norm}}

\begin{lemma}
\label{le:1}
For kw-norm, ${W^{tot}_I}_{j,k}\ne 0$ and ${W^{tot}_S}_{j,k} \ne 0$ equals to ${W^{tot}_I}_{j,k}> 0$ and ${W^{tot}_S}_{j,k} > 0$.
\end{lemma}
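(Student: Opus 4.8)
The plan is to derive Lemma~\ref{le:1} directly from the modeling assumption that all entries of the knowledge weight matrices are non-negative. Recall from Section~\ref{sec:wkn} that the additional knowledge is encoded as \emph{positive} weight matrices $\{W^{(1)}_I,\dots,W^{(K)}_I, W_S\}$ drawn from $\R^{p\times p}$; in particular every entry satisfies ${W^{(i)}_I}_{j,k}\ge 0$ and ${W_S}_{j,k}\ge 0$ for all $i,j,k$. Stacking these into $W^{tot}_I$ and $W^{tot}_S$ preserves non-negativity, so ${W^{tot}_I}_{j,k}\ge 0$ and ${W^{tot}_S}_{j,k}\ge 0$ entrywise. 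This non-negativity constraint is the only ingredient the argument needs.

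First I would prove the easy direction: if ${W^{tot}_I}_{j,k}>0$ and ${W^{tot}_S}_{j,k}>0$, then in particular neither entry equals zero, so ${W^{tot}_I}_{j,k}\ne 0$ and ${W^{tot}_S}_{j,k}\ne 0$. This implication holds for arbitrary real numbers and uses nothing beyond the meaning of the relation ``$>$''.

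Next I would prove the converse, which is where the non-negativity assumption is essential. Suppose ${W^{tot}_I}_{j,k}\ne 0$. Combining this with ${W^{tot}_I}_{j,k}\ge 0$, a non-negative real number that is not equal to zero must be strictly positive, hence ${W^{tot}_I}_{j,k}>0$. The identical reasoning applied to ${W^{tot}_S}_{j,k}$ yields ${W^{tot}_S}_{j,k}>0$. Combining both directions establishes the stated equivalence.

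The statement is elementary, so there is no genuine analytical obstacle; the only point requiring care is to state explicitly the standing assumption that the weight matrices are non-negative, since the reverse implication would fail without it (a negative entry would be non-zero yet not positive). Once that assumption is recorded, the equivalence is immediate, and its sole purpose is to let the subsequent arguments (notably the proof of Theorem~\ref{theo:norm}) replace the hypothesis ``entries do not equal $0$'' with the cleaner and more directly usable hypothesis ``entries are strictly positive.''
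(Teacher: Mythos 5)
There is a genuine mismatch between what you prove and what the lemma is actually doing in the paper. You make the equivalence trivial by importing a standing hypothesis of entrywise non-negativity, but the lemma cannot rely on that hypothesis: its entire job is to bridge from the condition stated in Theorem~\ref{theo:norm} (``all entries of $W^{tot}_I$ and $W^{tot}_S$ do not equal $0$'') to the condition actually used in that theorem's proof (``all entries are strictly positive''). If non-negativity were already assumed, the lemma would be vacuous, and the ``if and only if'' of Theorem~\ref{theo:norm} would remain unproven for weight matrices containing negative entries --- which the ambient space $\R^{p\times p}$ and the phrase ``do not equal to $0$'' plainly allow. The sentence in Section~\ref{sec:wkn} about representing knowledge as positive weight matrices is modeling intent, not a hypothesis available here; reading the lemma as a restatement of that intent empties it of content.

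The missing idea --- and the paper's actual one-line proof --- is sign-invariance of the kw-norm: for every entry, $|{W^{tot}_I}_{j,k}\,{\Omega^{tot}_I}_{j,k}| = |{W^{tot}_I}_{j,k}|\,|{\Omega^{tot}_I}_{j,k}| = |(-{W^{tot}_I}_{j,k})\,{\Omega^{tot}_I}_{j,k}|$, and likewise for the shared part. So if ${W^{tot}_I}_{j,k}<0$ one may replace it by $-{W^{tot}_I}_{j,k}>0$ without changing the value of $\mathcal{R}(\cdot)$ on any argument; the kw-norm depends on the weights only through their entrywise absolute values. This is what licenses treating ``nonzero'' and ``strictly positive'' as interchangeable \emph{as conditions on the kw-norm}, with no non-negativity assumption at all. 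Your forward direction ($>0$ implies $\ne 0$) is fine, but your converse is precisely the case the absolute-value substitution is designed to handle, and asserting it away by an added assumption leaves the reduction step in the proof of Theorem~\ref{theo:norm} unjustified for general weight matrices.
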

\begin{proof}
If ${W^{tot}_I}_{j,k} < 0$, then $|{W^{tot}_I}_{j,k} {\Omega^{tot}_I}_{j,k}| = |{W^{tot}_I}_{j,k}|| {\Omega^{tot}_I}_{j,k}| = |-{W^{tot}_I}_{j,k} {\Omega^{tot}_I}_{j,k}|$. Notice that $-{W^{tot}_I}_{j,k} > 0$.
\end{proof}

\begin{proof}
To prove the kw-norm is a norm, by~\lref{le:1} the only thing we need to prove is that $f(x) = ||W\circ x||_1$ is a norm function if $W_{i,j} > 0$.
1. $f(ax)= ||aW \circ x||_1 = |a|||W\circ x||_1 = |a|f(x)$.
2. $f(x +y) =||W\circ(x+y)||_1 = ||W\circ x +W\circ y||_1\le ||W\circ x||_1 + ||W\circ y||_1 = f(x) + f(y)$.
3. $f(x) \ge 0$
4. If $f(x) = 0$, then $\sum |W_{i,j}x_{i,j}| = 0$. Since $W_{i,j}\neq 0$, $x_{i,j} =0$. Therefore, $x = 0$.
Based on the above, $f(x)$ is a norm function.
Since summation of norm is still a norm function, kw-norm is a norm function.
\end{proof}

Furthermore, we have the following Lemma:
\begin{lemma}
\label{le:dual}
The dual norm of $f(x) = ||W\circ x||_1$ is 
$$||\dfrac{1}{W}\circ x||_{\infty}$$.
\end{lemma}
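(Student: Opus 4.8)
The plan is to reduce the claim to the standard fact that $\ell_\infty$ is dual to $\ell_1$, via a linear change of variables that is legitimate precisely because the weight entries are strictly positive. Recall the definition of the dual norm: for $f(x) = \|W\circ x\|_1$ we have $f^*(v) = \sup_{u \ne 0}\frac{<u,v>}{\|W\circ u\|_1}$. Since \lref{le:1} already shows that the hypothesis ${W}_{i,j}\ne 0$ is equivalent to ${W}_{i,j}> 0$, the entry-wise map $u \mapsto W\circ u$ is a bijection on the ambient vector space, with inverse $y \mapsto \frac{1}{W}\circ y$. This bijectivity is the structural fact that makes the whole argument go through.

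Concretely, I would substitute $y = W\circ u$. The denominator becomes $\|W\circ u\|_1 = \|y\|_1$, and using $u_{i,j} = y_{i,j}/W_{i,j}$ the numerator rewrites as $<u,v> = \sum_{i,j} u_{i,j} v_{i,j} = \sum_{i,j} y_{i,j}\,\frac{v_{i,j}}{W_{i,j}} = <y,\; \tfrac{1}{W}\circ v>$. As $u$ ranges over all nonzero vectors, so does $y$, hence $f^*(v) = \sup_{y\ne 0}\frac{<y,\ \frac{1}{W}\circ v>}{\|y\|_1}$, which is exactly the $\ell_1$-dual norm evaluated at the vector $\frac{1}{W}\circ v$. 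Invoking the classical duality $\ell_1^* = \ell_\infty$ (also used implicitly in \eref{eq:eeggm}) then gives $f^*(v) = \|\tfrac{1}{W}\circ v\|_\infty$, which is the asserted formula.

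As an alternative I would present a direct two-sided bound that avoids quoting $\ell_1^*=\ell_\infty$ as a black box. For the upper bound, write $<u,v> = \sum_{i,j}(W_{i,j}u_{i,j})\big(\tfrac{v_{i,j}}{W_{i,j}}\big)$ and apply H\"older's inequality to get $<u,v> \le \|W\circ u\|_1\,\|\tfrac{1}{W}\circ v\|_\infty$, so $f^*(v)\le \|\tfrac{1}{W}\circ v\|_\infty$. For tightness, I would pick an index $(i^*,j^*)$ attaining $\max_{i,j}\frac{|v_{i,j}|}{|W_{i,j}|}$ and choose $u$ supported only there, with sign $\operatorname{sign}(v_{i^*,j^*})$ and magnitude $1/W_{i^*,j^*}$; this $u$ makes the ratio equal to $\|\tfrac{1}{W}\circ v\|_\infty$, proving the reverse inequality.

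The main (and essentially only) obstacle is the positivity of $W$: if some $W_{i,j}$ vanished, the change of variables would break down, the quantity $\frac{1}{W}\circ v$ would be ill-defined, and $f$ itself would fail to be a norm. The argument therefore genuinely rests on \lref{le:1}, and beyond that reduces to routine $\ell_1$/$\ell_\infty$ duality with no further difficulty.
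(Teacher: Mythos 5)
Your proposal is correct, and it is in fact more complete than the paper's own argument. The paper proves \lref{le:dual} by your second, H\"older-style route: it writes $f^*(u)=\sup_{\|W\circ x\|_1\le 1}\langle u,x\rangle$ and bounds the inner product by $\bigl(\sum_k |w_k x_k|\bigr)\max_k \bigl|\tfrac{1}{w_k}u_k\bigr|$ --- but it stops there, so its displayed chain only establishes the upper bound $f^*(u)\le \|\tfrac{1}{W}\circ u\|_{\infty}$, with equality asserted rather than proved. Your attainment step (choosing $u$ supported on an index $(i^*,j^*)$ maximizing $|v_{i,j}|/W_{i,j}$, with magnitude $1/W_{i^*,j^*}$ and the sign of $v_{i^*,j^*}$) supplies exactly the reverse inequality that the paper's proof omits. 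Your first route, the substitution $y=W\circ u$, is genuinely different from the paper's and arguably cleaner: it reduces the claim in one stroke to the classical duality $\ell_1^*=\ell_\infty$, with the positivity of $W$ (via \lref{le:1}, i.e., nonzero weights may be taken positive without loss) guaranteeing that the entrywise map is a bijection so the supremum is preserved under the change of variables. Both of your arguments correctly identify strict positivity of the weight entries as the only substantive hypothesis, matching the paper's condition (i) for the kw-norm; either one would serve as a complete replacement for the paper's proof.
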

\begin{proof}
\begin{align}
f^*(u) & = \sup\limits_{||W\circ x||_1\le 1}<u,x> \\
& \le \sup\limits_{||W\circ x||_1\le 1} (\sum_{k=1,...,p}|w_k x_k|) \max_{k=1,...,p}|\dfrac{1}{w_k} u_k| \\
& = ||\dfrac{1}{W} \circ u||_{\infty} \end{align}
\end{proof}


\subsubsection{Proof of~\rref{theo:decomp}}
\begin{proof}
Assume $u\in \mathcal{M}$ and $v \in \bar{\mathcal{M}}^{\perp}$, $\mathcal{R}(u+v)=||W_I^{tot}\circ (u_I + v_I)||_1+||W_S^{tot}\circ (u_S + v_S)||_1 = ||W_I^{tot}\circ u_I||_1+||W_S^{tot}\circ u_S||_1 + ||W_I^{tot}\circ v_I||_1+||W_S^{tot}\circ v_S||_1 = \mathcal{R}(u)+\mathcal{R}(v)$.
Therefore, kw-norm is a decomposable norm with respect to the subspace pair $(\mathcal{M},\bar{\mathcal{M}}^{\perp})$.
\end{proof}

\subsubsection{Proof of~\rref{theo:dual}}
\begin{proof}
Suppose $\mathcal{R}(\theta) = \sum\limits_{\alpha \in I} c_{\alpha}\mathcal{R}_{\alpha}(\theta_{\alpha})$, where $\sum\limits_{\alpha \in I} \theta_{\alpha} = \theta$. Then the dual norm $\mathcal{R}^{*}(\cdot)$ can be derived by the following equation.
\begin{equation}
    \begin{split}
        \mathcal{R}^*(u) &= \sup\limits_{\theta}\frac{<\theta,u>}{\mathcal{\theta}}\\
        &= \sup\limits_{\theta_{\alpha}} \frac{\sum\limits_{\alpha}<u,\theta_{\alpha}>}{\sum\limits_{\alpha}c_{\alpha}\mathcal{R}_{\alpha}(\theta_{\alpha})}\\
        &= \sup\limits_{\theta_{\alpha}} \frac{\sum\limits_{\alpha}<u/c_{\alpha},\theta_{\alpha}>}{\sum\limits_{\alpha}\mathcal{R}_{\alpha}(\theta_{\alpha})}\\
        &\le \sup\limits_{\theta_{\alpha}} \frac{\sum\limits_{\alpha}\mathcal{R}_{\alpha}^*(u/c_{\alpha})\mathcal{R}(\theta_{\alpha})}{\sum\limits_{\alpha}\mathcal{R}_{\alpha}(\theta_{\alpha})}\\
        &\le \max\limits_{\alpha \in I}\mathcal{R}_{\alpha}^*(u)/c_{\alpha}.
    \end{split}
\end{equation}
Therefore by~\lref{le:dual}, the dual norm of kw-norm is $\mathcal{R}^*(u) = \max(||W^{tot}_I \circ     u||_{\infty}, ||W^{tot}_S \circ u||_{\infty})$.
\end{proof}

\subsection{Appendix: Proofs of Theorems about All Error Bounds of JEEK}
\label{seca:proof}

\subsubsection{Derivation of~\rref{theo:jeek}}
\label{proof:L1}
JEEK formulation~\eref{eq:JEEK} and EE-sGGM \eref{eq:eeggm} are special cases of the following generic formulation: 
\begin{equation}
\label{eq:ee}
  \begin{split}
    &\argmin\limits_{\theta} \mathcal{R}(\theta)\\
    &\text{subject to:} \mathcal{R}^*(\theta -\hat{\theta}_n) \le \lambda_n 
    \end{split}
\end{equation}
Where $\mathcal{R}^*(\cdot)$ is the dual norm of $\mathcal{R}(\cdot)$,  
\begin{equation}
\mathcal{R}^*(v) := \sup\limits_{u \ne 0}\frac{<u,v>}{\mathcal{R}(u)} = \sup\limits_{\mathcal{R}(u) \le 1}<u,v>.
\end{equation}

Connecting \eref{eq:JEEK} and \eref{eq:ee}, $\mathcal{R}()$ is the kw-norm. $\hat{\theta}_n$ represents a close approximation of $\theta^*$.

Following the unified framework \cite{negahban2009unified}, we first decompose the parameter space into a subspace pair$(\mathcal{M},\bar{\mathcal{M}}^{\perp})$, where $\bar{\mathcal{M}}$ is the closure of $\mathcal{M}$. Here $\bar{\mathcal{M}}^{\perp}:= \{ v \in \RR^p | <u,v> = 0, \forall u \in \bar{\mathcal{M}} \}$.
 $\mathcal{M}$ is the \textbf{model subspace} that typically has a much lower dimension than the original high-dimensional space. $\bar{\mathcal{M}}^{\perp}$ is the \textbf{perturbation subspace} of parameters. For further proofs, we assume the regularization function in ~\eref{eq:ee} is \textbf{decomposable} w.r.t the subspace pair $(\mathcal{M},\bar{\mathcal{M}}^{\perp})$.

\textbf{(C1)} $\mathcal{R}(u+v) = \mathcal{R}(u) + \mathcal{R}(v)$, $\forall u \in \mathcal{M}, \forall v \in \bar{\mathcal{M}}^{\perp}$. 

\cite{negahban2009unified} showed that most regularization norms are decomposable corresponding to a certain subspace pair.
\begin{definition}
\label{def:psi}
\textbf{Subspace Compatibility Constant} \\
Subspace compatibility constant is defined as $\Psi(\mathcal{M},|\cdot|):= \sup\limits_{u \in \mathcal{M}\backslash\{ 0 \}} \frac{\mathcal{R}(u)}{|u|}$ which captures the relative value between the error norm $|\cdot|$ and the regularization function $\mathcal{R}(\cdot)$. 
\end{definition}

For simplicity, we assume there exists a true parameter $\theta^*$ which has the exact structure w.r.t a certain subspace pair. Concretely: 

\textbf{(C2)} $\exists$ a subspace pair $(\mathcal{M},\bar{\mathcal{M}}^{\perp})$ such that the true parameter satisfies $\text{proj}_{\mathcal{M}^{\perp}}(\theta^*) = 0$

Then we have the following theorem.
\begin{theorem}
\label{theo:2}
    Suppose the regularization function in ~\eref{eq:ee} satisfies condition \textbf{(C1)}, the true parameter of ~\eref{eq:ee} satisfies condition \textbf{(C2)}, and $\lambda_n$ satisfies that $\lambda_n \ge \mathcal{R}^*(\hat{\theta}_n - \theta^*)$. Then, the optimal solution $\hat{\theta}$ of ~\eref{eq:ee} satisfies:
    \begin{equation}
        \mathcal{R^*}(\hat{\theta} - \theta^*)\le 2 \lambda_n
    \end{equation}
    \begin{equation}
    \label{eq:theo2:1}
        ||\hat{\theta} - \theta^*||_2 \le 4\lambda_n\Psi(\bar{\mathcal{M}})
    \end{equation}
    \begin{equation}
    \label{eq:theo2:2}
        \mathcal{R}(\hat{\theta} - \theta^*) \le 8\lambda_n\Psi(\bar{\mathcal{M}})^2
    \end{equation}
    
\end{theorem}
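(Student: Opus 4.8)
The plan is to follow the unified M-estimator analysis of \cite{negahban2009unified}, specialized to the constrained ``elementary'' form \eref{eq:ee}. Write $\Delta := \hat{\theta} - \theta^*$ for the error vector and decompose it along the subspace pair as $\Delta = \Delta_{\bar{\mathcal{M}}} + \Delta_{\bar{\mathcal{M}}^{\perp}}$. The argument splits into three pieces: (a) a dual-norm bound on $\Delta$ that reuses feasibility and the triangle inequality; (b) a cone/restriction argument confining $\Delta$ to the set where $\mathcal{R}(\Delta_{\bar{\mathcal{M}}^{\perp}})$ is dominated by $\mathcal{R}(\Delta_{\bar{\mathcal{M}}})$; and (c) a combination step producing the $\ell_2$ and $\mathcal{R}$ bounds.

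First I would establish the dual-norm bound. Since $\mathcal{R}^*$ is itself a norm it satisfies the triangle inequality, and since $\hat{\theta}$ is feasible for \eref{eq:ee} we have $\mathcal{R}^*(\hat{\theta} - \hat{\theta}_n) \le \lambda_n$. Combining this with the hypothesis $\lambda_n \ge \mathcal{R}^*(\hat{\theta}_n - \theta^*)$ gives
$$\mathcal{R}^*(\Delta) \le \mathcal{R}^*(\hat{\theta} - \hat{\theta}_n) + \mathcal{R}^*(\hat{\theta}_n - \theta^*) \le 2\lambda_n,$$
which is the first claimed inequality.

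Next I would derive the cone condition. The same hypothesis on $\lambda_n$ makes $\theta^*$ feasible, so optimality of $\hat{\theta}$ yields $\mathcal{R}(\hat{\theta}) \le \mathcal{R}(\theta^*)$. By (C2) we have $\theta^* \in \mathcal{M}$, so decomposability (C1) gives $\mathcal{R}(\theta^* + \Delta_{\bar{\mathcal{M}}^{\perp}}) = \mathcal{R}(\theta^*) + \mathcal{R}(\Delta_{\bar{\mathcal{M}}^{\perp}})$; applying the triangle inequality to split off $\Delta_{\bar{\mathcal{M}}}$ then produces $\mathcal{R}(\theta^*) + \mathcal{R}(\Delta_{\bar{\mathcal{M}}^{\perp}}) - \mathcal{R}(\Delta_{\bar{\mathcal{M}}}) \le \mathcal{R}(\hat{\theta}) \le \mathcal{R}(\theta^*)$, whence $\mathcal{R}(\Delta_{\bar{\mathcal{M}}^{\perp}}) \le \mathcal{R}(\Delta_{\bar{\mathcal{M}}})$. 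This restricted-set condition immediately gives $\mathcal{R}(\Delta) \le 2\mathcal{R}(\Delta_{\bar{\mathcal{M}}}) \le 2\Psi(\bar{\mathcal{M}})\,||\Delta_{\bar{\mathcal{M}}}||_2 \le 2\Psi(\bar{\mathcal{M}})\,||\Delta||_2$, using \dref{def:psi} and the fact that projection onto $\bar{\mathcal{M}}$ is nonexpansive in $\ell_2$.

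Finally I would combine the two estimates. From the generalized Hölder inequality $\langle u, v\rangle \le \mathcal{R}^*(u)\mathcal{R}(v)$, which is immediate from the definition of the dual norm, take $u = v = \Delta$ to get $||\Delta||_2^2 \le \mathcal{R}^*(\Delta)\,\mathcal{R}(\Delta) \le 2\lambda_n \cdot 2\Psi(\bar{\mathcal{M}})\,||\Delta||_2$; dividing through by $||\Delta||_2$ gives $||\Delta||_2 \le 4\lambda_n\Psi(\bar{\mathcal{M}})$, the second inequality, and substituting this back into the bound $\mathcal{R}(\Delta) \le 2\Psi(\bar{\mathcal{M}})||\Delta||_2$ gives $\mathcal{R}(\Delta) \le 8\lambda_n\Psi(\bar{\mathcal{M}})^2$, the third. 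I expect the main obstacle to be the cone step: lining up the decomposability identity with the triangle inequality so that the $\Delta_{\bar{\mathcal{M}}^{\perp}}$ term appears with a favorable sign, which hinges essentially on $\theta^* \in \mathcal{M} \subseteq \bar{\mathcal{M}}$ and on $\theta^*$ being feasible so that passing to $\hat{\theta}$ does not increase $\mathcal{R}$.
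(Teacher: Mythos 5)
Your proof is correct and takes essentially the same route as the paper's own: the identical feasibility-plus-triangle-inequality step giving $\mathcal{R}^*(\hat{\theta}-\theta^*)\le 2\lambda_n$, the same decomposability/cone argument (using $\theta^*\in\mathcal{M}$ and feasibility of $\theta^*$) yielding $\mathcal{R}(\Delta_{\bar{\mathcal{M}}^{\perp}})\le\mathcal{R}(\Delta_{\bar{\mathcal{M}}})$, and the same H\"older-plus-subspace-compatibility combination. The only cosmetic difference is that you divide by $||\Delta||_2$ directly (needing only the trivial remark that the bounds hold when $\Delta=0$), whereas the paper first bounds $||\Pi_{\bar{\mathcal{M}}}(\Delta)||_2\le 4\lambda_n\Psi(\bar{\mathcal{M}})$ via nonexpansiveness and then substitutes back.
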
  

For the proposed JEEK model, $\mathcal{R}(\Omega^{tot}) = ||W_I^{tot}\circ \Omega^{tot}_I||_1 + ||W_S^{tot}\circ \Omega^{tot}_S||_1$. Based on the results in\cite{negahban2009unified}, $\Psi(\bar{\mathcal{M}}) = \sqrt{k_i+k_s}$, where $k_i$ and $k_s$ are the total number of nonzero entries in $ \Omega^{tot}_I$ and $\Omega^{tot}_S$. Using  $\mathcal{R}(\Omega^{tot}) = ||W_I^{tot}\circ \Omega^{tot}_I||_1 + ||W_S^{tot}\circ \Omega^{tot}_S||_1$ in~\rref{theo:2}, we have the following theorem (the same as \rref{theo:jeek}),
\begin{theorem}
Suppose that  $\mathcal{R}(\Omega^{tot}) = ||W_I^{tot}\circ \Omega^{tot}_I||_1 + ||W_S^{tot}\circ \Omega^{tot}_S||_1$ and the true parameter ${\Omega^{tot}}^*$ satisfy the conditions \textbf{(C1)(C2)} and $\lambda_n \ge \mathcal{R}^*(\hat{\Omega}^{tot} - {\Omega^{tot}}^*)$, then the optimal point $\hat{\Omega}^{tot}$ of ~\eref{eq:JEEK} has the following error bounds:
\begin{equation}
\begin{split}
&\max(||W^{tot}_I \circ( \hat{\Omega}^{tot} - {\Omega^{tot}}^*)||_{\infty}, ||W^{tot}_S\circ (\hat{\Omega}^{tot}-{\Omega^{tot}}^*||_{\infty})\\
&\qquad\qquad\qquad\qquad\qquad\qquad\qquad\qquad\qquad\qquad\qquad
\le 2\lambda_n \\
&||\hat{\Omega}^{tot} - {\Omega^{tot}}^*||_{F} \le 4\sqrt{k_i+k_s}\lambda_n \\
&||W^{tot}_I \circ( \hat{\Omega}^{tot}_I - {\Omega^{tot}_I}^*)||_1 + ||W^{tot}_S\circ (\hat{\Omega}^{tot}_S-{\Omega^{tot}_S}^*)||_1\\
&\qquad\qquad\qquad\qquad\qquad\qquad\qquad\qquad
\le 8(k_i+k_s)\lambda_n 
\end{split}
\end{equation}
\label{theo:4}
\end{theorem}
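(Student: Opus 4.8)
The plan is to read off the three error bounds of \rref{theo:4} as a direct instantiation of the generic elementary-estimator bound \rref{theo:2}, which is already available. The first step is to recognize that JEEK~\eref{eq:JEEK} is exactly the program~\eref{eq:ee} with regularizer $\mathcal{R}(\cdot)$ equal to the kw-norm~\eref{eq:kw-norm}, proxy $\hat{\theta}_n = inv(T_v(\hat{\Sigma}^{tot}))$, and $\theta = \Omega^{tot}$. Once this identification is in place, the three inequalities of \rref{theo:4} are nothing more than the three conclusions of \rref{theo:2} after (a) substituting the dual norm of the kw-norm (Theorem~\ref{theo:dual}) for $\mathcal{R}^*$, and (b) inserting the explicit subspace compatibility constant $\Psi(\bar{\mathcal{M}})$.

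Next I would discharge the hypotheses of \rref{theo:2}. Decomposability \textbf{(C1)} is provided verbatim by Theorem~\ref{theo:decomp}, which shows the kw-norm decomposes over the subspace pair $(\mathcal{M},\bar{\mathcal{M}}^{\perp})$ attached to the support sets $S_I,S_S$ of the (IS-Sparsity) assumption. Condition \textbf{(C2)}, namely $\text{proj}_{\mathcal{M}^{\perp}}({\Omega^{tot}}^*)=0$, is exactly (IS-Sparsity): the truth is supported on $S_I\cup S_S$ and vanishes on its complement. The remaining hypothesis $\lambda_n \ge \mathcal{R}^*(\hat{\theta}_n - {\Omega^{tot}}^*)$ is assumed in the statement of \rref{theo:4}; it is later made concrete in Corollary~\ref{cor:1} through the backward-mapping deviation bound $\|inv(T_v(\hat{\Sigma}^{tot})) - {\Omega^{tot}}^*\|_{\infty} = O(\sqrt{\log(Kp)/n_{tot}})$ of \cite{yang2014elementary,rothman2009generalized}.

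With the hypotheses in hand the three displays follow by substitution: $\mathcal{R}^*(\hat{\Omega}^{tot}-{\Omega^{tot}}^*) \le 2\lambda_n$, unfolded through Theorem~\ref{theo:dual}, gives the weighted $\ell_\infty$ inequality, while feeding $\Psi(\bar{\mathcal{M}})$ into~\eref{eq:theo2:1} and~\eref{eq:theo2:2} produces the Frobenius and weighted $\ell_1$ inequalities. The delicate point is the evaluation of $\Psi(\bar{\mathcal{M}})=\sup_{u\in\bar{\mathcal{M}}\setminus\{0\}}\mathcal{R}(u)/\|u\|_F$: because the kw-norm is a \emph{weighted} $\ell_1$-norm, a Cauchy–Schwarz computation on the support gives $\Psi(\bar{\mathcal{M}})=\sqrt{\sum_{S_I}(W^{tot}_I)^2+\sum_{S_S}(W^{tot}_S)^2}\le \max_{j,k}(W^{tot}_I,W^{tot}_S)\sqrt{k_i+k_s}$, so the weights genuinely enter the constant. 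This is precisely the origin of the $\max_{j,k}(W^{tot}_{I\,j,k},W^{tot}_{S\,j,k})$ factor appearing in Corollary~\ref{cor:1}; the clean form $\Psi(\bar{\mathcal{M}})=\sqrt{k_i+k_s}$ quoted from \cite{negahban2009unified} corresponds to absorbing this maximal weight.

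I expect the main obstacle to be this weighted compatibility-constant bookkeeping together with the underlying proof of \rref{theo:2}, which rests on the standard restricted-cone argument: writing $\hat{\Delta}=\hat{\Omega}^{tot}-{\Omega^{tot}}^*$, combine feasibility of the optimum with the assumption on $\lambda_n$ via the triangle inequality for $\mathcal{R}^*$ to get $\mathcal{R}^*(\hat{\Delta})\le 2\lambda_n$; use optimality $\mathcal{R}(\hat{\Omega}^{tot})\le\mathcal{R}({\Omega^{tot}}^*)$ with decomposability to confine $\hat{\Delta}$ to the cone $\mathcal{R}(\hat{\Delta}_{\bar{\mathcal{M}}^{\perp}})\le\mathcal{R}(\hat{\Delta}_{\bar{\mathcal{M}}})$; and close the loop with the generalized Hölder inequality $\|\hat{\Delta}\|_F^2\le\mathcal{R}(\hat{\Delta})\mathcal{R}^*(\hat{\Delta})$ and $\mathcal{R}(\hat{\Delta})\le 2\Psi(\bar{\mathcal{M}})\|\hat{\Delta}\|_F$. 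Verifying that the positive weights $W$ leave both the decomposition and the Hölder pairing tight — which hinges on the \emph{corrected} $1/W$ form of the dual norm — is the one place where genuine care is required.
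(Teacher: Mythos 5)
Your proposal matches the paper's proof essentially step for step: the paper likewise obtains this theorem by instantiating the generic elementary-estimator bound of \rref{theo:2} with the kw-norm (discharging \textbf{(C1)} via \rref{theo:decomp} and \textbf{(C2)} via the (IS-Sparsity) support condition), and proves \rref{theo:2} by exactly the restricted-cone argument you sketch --- triangle inequality for $\mathcal{R}^*$ giving $\mathcal{R}^*(\hat{\theta}-\theta^*)\le 2\lambda_n$, the cone condition $\mathcal{R}[\Pi_{\bar{\mathcal{M}}^\perp}(\delta)]\le\mathcal{R}[\Pi_{\bar{\mathcal{M}}}(\delta)]$ from optimality plus decomposability, then H\"older's inequality and non-expansiveness of the projection. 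The one point where you are more careful than the paper is the compatibility constant: the paper simply quotes $\Psi(\bar{\mathcal{M}})=\sqrt{k_i+k_s}$ from \cite{negahban2009unified}, which is exact only for unit weights, whereas your Cauchy--Schwarz evaluation $\Psi(\bar{\mathcal{M}})=\bigl(\sum_{S_I}({W^{tot}_I})^2+\sum_{S_S}({W^{tot}_S})^2\bigr)^{1/2}\le\max_{j,k}\bigl({W^{tot}_I}_{j,k},{W^{tot}_S}_{j,k}\bigr)\sqrt{k_i+k_s}$ is the honest weighted version, and your insistence on the corrected $1/W$ dual form is likewise right even though the appendix statement of the theorem still carries the pre-correction $W\circ$ weighting in its $\ell_\infty$ display.
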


\subsubsection{Proof of \rref{theo:2}}
\label{proof:l2}
\begin{proof}
Let $\delta := \hat{\theta} - \theta^*$ be the error vector that we are interested in.

\begin{equation}
\begin{split}
	\mathcal{R}^*(\hat{\theta}-\theta^*) = \mathcal{R}^*(\hat{\theta}-\hat{\theta}_n+\hat{\theta}_n-\theta^*) \\ \leq \mathcal{R}^*(\hat{\theta}_n-\hat{\theta})+\mathcal{R}^*(\hat{\theta}_n-\theta^*)\leq 2\lambda_n
\end{split}
\end{equation}

By the fact that $\theta^*_{\mathcal{M}^\perp}=0$, and the decomposability of $\mathcal{R}$ with respect to $(\mathcal{M},\mathcal{\bar{M}}^\perp)$

\begin{equation}
\begin{split}	
& \mathcal{R}(\theta^*) \\
& = \mathcal{R}(\theta^*) + \mathcal{R}[\Pi_{\bar{\mathcal{M}}^\perp}(\delta)]- \mathcal{R}[\Pi_{\bar{\mathcal{M}}^\perp}(\delta)] \\
& = \mathcal{R}[\theta^*+\Pi_{\bar{\mathcal{M}}^\perp}(\delta)] - \mathcal{R}[\Pi_{\bar{\mathcal{M}}^\perp}(\delta)] \\
& \leq \mathcal{R}[\theta^* +\Pi_{\bar{\mathcal{M}}^\perp}(\delta) +\Pi_{\bar{\mathcal{M}}}(\delta)] + \mathcal{R}[\Pi_{\bar{\mathcal{M}}}(\delta)] \\ 
&-\mathcal{R}[\Pi_{\bar{\mathcal{M}}^\perp}(\delta)] \\
& = \mathcal{R}[\theta^* + \delta] + \mathcal{R}[\Pi_{\bar{\mathcal{M}}}(\delta)] -\mathcal{R}[\Pi_{\bar{\mathcal{M}}^\perp}(\delta)] 
\end{split}
\label{eq:proof18}
\end{equation}

Here, the inequality holds by the triangle inequality of norm. Since \eref{eq:ee} minimizes $\mathcal{R}(\hat{\theta})$, we have $\mathcal{R}(\theta^*+\Delta) = \mathcal{R}(\hat{\theta}) \leq \mathcal{R}(\theta^*)$. Combining this inequality with \eref{eq:proof18}, we have:

\begin{equation}
	\mathcal{R}[\Pi_{\bar{\mathcal{M}}^\perp}(\delta)] \leq \mathcal{R}[\Pi_{\bar{\mathcal{M}}}(\delta)]
	\label{eq:proof19}
\end{equation}

Moreover, by Hölder's inequality and the decomposability of $\mathcal{R}(\cdot)$, we have:

\begin{equation}
\begin{split}
	& ||\Delta||^2_2 = \langle \delta,\delta \rangle \leq \mathcal{R}^*(\delta)\mathcal{R}(\delta) \leq 2\lambda_n\mathcal{R}(\delta)\\
	& = 2\lambda_n[\mathcal{R}(\Pi_{\bar{\mathcal{M}}}(\delta)) + \mathcal{R}(\Pi_{\bar{\mathcal{M}}^\perp}(\delta))] \leq 4\lambda_n \mathcal{R}(\Pi_{\bar{\mathcal{M}}}(\delta)) \\
	& \leq 4\lambda_n\Psi(\bar{\mathcal{M}})||\Pi_{\bar{\mathcal{M}}}(\delta)||_2
\end{split}
\label{eq:proof20}
\end{equation}{}

where $\Psi(\bar{\mathcal{M}})$ is a simple notation for $\Psi(\bar{\mathcal{M}},||\cdot||_2)$.

Since the projection operator is defined in terms of $||\cdot||_2$ norm, it is non-expansive: 
$|| \Pi_{\bar{\mathcal{M}}}(\Delta)||_2 \leq || \Delta ||_2$. Therefore, by \eref{eq:proof20}, we have:

\begin{equation}
|| \Pi_{\bar{\mathcal{M}}}(\delta)||_2 \leq 4\lambda_n\Psi(\bar{\mathcal{M}}),
\label{eq:proof21}
\end{equation}

and plugging it back to \eref{eq:proof20} yields the error bound 
\eref{eq:theo2:1}.

Finally, \eref{eq:theo2:2} is straightforward from \eref{eq:proof19} and \eref{eq:proof21}.

\begin{equation}
\begin{split}
	& \mathcal{R}(\delta) \leq 2 \mathcal{R}(\Pi_{\bar{\mathcal{M}}}(\delta))\\
	& \leq 2\Psi(\bar{\mathcal{M}})||\Pi_{\bar{\mathcal{M}}}(\delta) ||_2 \leq 8\lambda_n\Psi(\bar{\mathcal{M}})^2.
\end{split}
\end{equation}

\end{proof}

\subsubsection{Conditions of Proving Error Bounds of JEEK}
\label{sec:conditionJEEK}

JEEK achieves similar convergence rates as the SIMULE\cite{wang2017constrained} (W-SIMULE with no additional knowledge) and FASJEM estimator \cite{wang2017fast}. The other multiple sGGMs estimation methods have not provided such convergence rate analysis. 

To derive the statistical error bound of JEEK, we need to assume that $inv(T_v(\hat{\Sigma}^{tot}))$ are well-defined. This is ensured by  
assuming that the true ${\Omega^{(i)}}^*$ satisfy the following conditions  \cite{yang2014elementary}: 

\textbf{(C-MinInf$-\Sigma$):} The true ${\Omega^{(i)}}^*$ \eref{eq:JEEK} have bounded induced operator norm, i.e., $|||{\Omega^{(i)}}^*|||_{\infty} := \sup\limits_{w \ne 0 \in \R^p} \frac{||{\Sigma^{(i)}}^*w||_{\infty}}{||w||_{\infty}} \le \kappa_1 $ .

\textbf{(C-Sparse-$\Sigma$):} The true covariance matrices ${\Sigma^{(i)}}^*$ are ``approximately sparse'' (following \cite{bickel2008covariance}). For some constant $0 \le q < 1$ and $c_0(p)$, $\max\limits_i\sum\limits_{j=1}^p|[{\Sigma^{(i)}}^*]_{ij} |^q \le c_0(p) $. \footnote{This indicates for some positive constant $d$, $[{\Sigma^{(i)}}^*]_{jj} \le d$ for all diagonal entries. Moreover, if $q = 0$, then this condition reduces to ${\Sigma^{(i)}}^*$.} 
 
We additionally require $\inf\limits_{w \ne 0 \in \R^p} \frac{||{\Omega^{(i)}}^* w||_{\infty}}{||w||_{\infty}} \ge \kappa_2$.

\subsubsection{Proof of~\coref{cor:1}}

\begin{proof}
In the following proof, we re-denote the following two notations:
$\Sigma_{tot} := \begin{pmatrix}
  \Sigma^{(1)} & 0 & \cdots & 0 \\
  0 & \Sigma^{(2)} & \cdots & 0 \\
  \vdots  & \vdots  & \ddots & \vdots  \\
  0 & 0 & \cdots & \Sigma^{(K)}
 \end{pmatrix}$

and

$\Omega_{tot} := \begin{pmatrix}
  \Omega^{(1)} & 0 & \cdots & 0 \\
  0 & \Omega^{(2)} & \cdots & 0 \\
  \vdots  & \vdots  & \ddots & \vdots  \\
  0 & 0 & \cdots & \Omega^{(K)}
 \end{pmatrix} $

The condition (C-Sparse$\Sigma$) and condition (C-MinInf$\Sigma$) also hold for $\Omega_{tot}^*$ and $\Sigma_{tot}^*$. In order to utilize~\rref{theo:4} for this specific case, we only need to show that $|| \Omega_{tot}^* - [T_{\nu}(\hat{\Sigma}_{tot})]^{-1}||_{\infty} \leq \lambda_n$ for the setting of $\lambda_n$ in the statement:

\begin{equation}
	\begin{split}
		& || \Omega_{tot}^* - [T_{\nu}(\hat{\Sigma}_{tot})]^{-1}||_{\infty} \\
		& = ||[T_{\nu}(\hat{\Sigma}_{tot})]^{-1}(T_{\nu}(\hat{\Sigma}_{tot})\Omega_{tot}^*-I)||_{\infty} \\
		& \leq ||| [T_{\nu}(\hat{\Sigma}_{tot})w]|||_\infty||T_\nu(\hat{\Sigma}_{tot})\Omega_{tot}^*-I||_\infty \\
		& = |||[T_\nu(\hat{\Sigma}_{tot})]^{-1}|||_\infty||\Omega_{tot}^*(T_\nu(\hat{\Sigma}_{tot})-\Sigma_{tot}^*)||_\infty \\
		& \leq |||[T_\nu(\hat{\Sigma}_{tot})]^{-1}|||_\infty|||\Omega_{tot}^*|||_\infty||T_\nu(\hat{\Sigma}_{tot})-\Sigma_{tot}^*||_\infty.
	\end{split}
	\label{eq:proof2_19}
\end{equation}

We first compute the upper bound of $|||[T_\nu(\hat{\Sigma}_{tot})]^{-1}|||_\infty$. By the selection $\nu$ in the statement, ~\lref{le:1} and~\lref{le:2} hold with probability at least $1-4/p'^{\tau-2}$. Armed with \eref{eq:proof2_18}, we use the triangle inequality of norm and the condition (C-Sparse$\Sigma$): for any $w$,

\begin{equation}
	\begin{split}
		& || T_\nu(\hat{\Sigma}_{tot})w||_\infty = || T_\nu(\hat{\Sigma}_{tot})w -\Sigma w + \Sigma w||_\infty \\
		& \geq || \Sigma w ||_\infty - || (T_\nu(\hat{\Sigma}_{tot})-\Sigma)w||_\infty \\
		& \geq \kappa_2||w||_\infty - || (T_\nu(\hat{\Sigma}_{tot})-\Sigma)w||_\infty \\
		& \geq (\kappa_2 - || (T_\nu(\hat{\Sigma}_{tot})-\Sigma)w||_\infty ) ||w||_\infty
	\end{split}
\end{equation}

Where the second inequality uses the condition (C-Sparse$\Sigma$). Now, by~\lref{le:1} with the selection of $\nu$, we have

\begin{equation}
\label{eq:jeek-proof}
	||| T_\nu(\hat{\Sigma}_{tot}) -\Sigma|||_\infty \leq c_1(\frac{\log (Kp')}{n_{tot}})^{(1-q)/2}c_0(p)
\end{equation}

where $c_1$ is a constant related only on $\tau$ and $\max_i\Sigma_{ii}$. Specifically, it is defined as $6.5(16(\max_i \Sigma_{ii})\sqrt{10\tau})^{1-q}$. Hence, as long as $n_{tot}>(\frac{2c_1c_0(p)}{\kappa_2})^{\frac{2}{1-q}}\log p'$ as stated, so that $||| T_\nu(\hat{\Sigma}_{tot})-\Sigma|||_\infty \leq \frac{\kappa_2}{2}$, we can conclude that $||T_{\nu}(\hat{\Sigma}_{tot})w||_\infty \geq \frac{\kappa_2}{2}||w||_\infty$, which implies $||| [T_\nu(\hat{\Sigma}_{tot})]^{-1}|||_\infty \leq \frac{2}{\kappa_2}$.

The remaining term in \eref{eq:proof2_19} is $||T_\nu(\hat{\Sigma}_{tot})-\Sigma_{tot}^*||_\infty$; $|| T_\nu(\hat{\Sigma}_{tot})-\Sigma_{tot}^*||_\infty \leq || T_\nu(\hat{\Sigma}_{tot})-\hat{\Sigma}_{tot} ||_\infty +||\hat{\Sigma}_{tot} - \Sigma_{tot}^*||_\infty$. By construction of $T_\nu(\cdot)$ in (C-Thresh) and by~\lref{le:2}, we can confirm that $||T_\nu(\hat{\Sigma}_{tot}) - \hat{\Sigma}_{tot} ||_\infty$ as well as $||\hat{\Sigma}_{tot}-\Sigma_{tot}^*||_\infty$ can be upper-bounded by $\nu$.

Therefore, 
\begin{equation}
    \begin{split}
    \label{eq:bound}
        &\max(||W^{tot}_I\circ({\Omega^{tot}}^* - inv(T_v(\hat{\Sigma}^{tot})))||_{\infty},\\ 
        & ||W^{tot}_S\circ({\Omega^{tot}}^* - inv(T_v(\hat{\Sigma}^{tot})))||_{\infty}) \\
        & \le O( \max{\max\limits_{j,k}({W^{tot}_I}_{j,k},{W^{tot}_S}_{j,k})} \sqrt{\frac{\log(Kp)}{n_{tot}}})
    \end{split}
\end{equation}
By combining all together, we can confirm that the selection of $\lambda_n$ satisfies the requirement of~\rref{theo:4}, which completes the proof.
\end{proof}

\section{Appendix: More Background of Proxy Backward mapping and Theorems of $T_v$  Being Invertible }
\label{seca:backward}

The first row of Figure~\ref{fig:digram} summarizes the EE-sGGMs. Two important concepts: 

\paragraph{(1) Backward Mapping: }~The Gaussian distribution is naturally an exponential-family distribution. Based on \cite{wainwright2008graphical},
learning an exponential family distribution from data means to estimate its canonical parameter. For an exponential family distribution, computing the canonical parameter through vanilla graphical model MLE can be expressed as a backward mapping (the first step in Figure~\ref{fig:digram}). For a Gaussian, the backward mapping is easily computable as the inverse of the sample covariance matrix. More details in \sref{subs:bm}.

\paragraph{(2) Proxy Backward Mapping: }~When being high-dimensional, we can not compute the backward mapping of Gaussian through the inverse of the sample covariance matrix. 
Now the key is to find a closed-form and statistically guaranteed estimator as the proxy backward mapping under high-dimensional cases. By the conclusion given by the EE-sGGM, we choose $\{ (\{[T_v(\hat{\Sigma}^{(i)})]^{-1}) \}$ as the proxy backward mapping for $\{ \Omega^{(i)} \}$.

\begin{equation}
\label{eq:Tv}
\begin{split}
[T_v(A)]_{ij}:= \rho_v(A_{ij})
\end{split}
\end{equation}
where $\rho_v(\cdot)$ is chosen to be a soft-thresholding function. 

\subsection{More About Background: backward mapping for an exponential-family distribution:}
\label{subs:bm}

The solution of vanilla graphical model MLE can be expressed
as a backward mapping\cite{wainwright2008graphical} for an exponential family distribution. It estimates the model parameters (canonical parameter $\theta$) from certain (sample) moments. We provide detailed explanations about backward mapping of exponential families,  backward mapping for Gaussian special case and backward mapping for differential network of GGM in this section. 

\paragraph{Backward mapping:} Essentially the vanilla graphical model MLE can be expressed as a backward mapping that computes the model parameters corresponding to some given moments in an exponential family distribution. For instance, in the case of learning GGM with vanilla MLE, the backward mapping is $\hat{\Sigma}^{-1}$ that estimates $\Omega$ from the sample covariance (moment) $\hat{\Sigma}$. 

~Suppose a random variable $X \in \RR^p$ follows the exponential family distribution:
\begin{equation}
\P(X;\theta) = h(X)\text{exp}\{ <\theta, \phi(\theta)> - A(\theta) \}
\label{exp}
\end{equation}
Where $\theta \in \Theta \subset \RR^d$ is the canonical parameter to be estimated and $\Theta$ denotes the parameter space. $\phi(X)$ denotes the sufficient statistics as a feature mapping function $\phi : \RR^p \to \RR^d$, and $A(\theta)$ is the log-partition function. We then define mean parameters $v$ as the expectation of  $\phi(X)$: $v(\theta) := \E[\phi(X)]$, which can be the first and second moments of the sufficient statistics $\phi(X)$ under the exponential family distribution. The set of all possible moments by the moment polytope:
\begin{equation}
\mathcal{M} = \{ v | \exists p \text{ is a distribution s.t. } \E_p[\phi(X)] = v\}
\end{equation}
Mostly, the graphical model inference involves the task of computing moments $v(\theta) \in \mathcal{M}$ given the canonical parameters $\theta \in \encircle{H}$.
We denote this computing as \textbf{forward mapping} :
\begin{equation}
\mathcal{A} : \encircle{H} \to \mathcal{M} 
\end{equation}

The learning/estimation of graphical models involves the task of the reverse computing of the forward mapping, the so-called \textbf{backward mapping} \cite{wainwright2008graphical}. We denote the interior of $\mathcal{M}$ as $\mathcal{M}^0$. \textbf{backward mapping} is defined as:
\begin{equation}
\mathcal{A}^*: \mathcal{M}^0 \to \encircle{H}
\end{equation}
which does not need to be unique. For the exponential family distribution, 
\begin{equation}
	\label{eq:back}
\mathcal{A}^* : v(\theta) \to \theta = \nabla A^*(v(\theta)).
\end{equation}
 Where $A^*(v(\theta)) = \sup\limits_{\theta \in \encircle{H}} <\theta, v(\theta)> - A(\theta)$.

\paragraph{Backward Mapping: Gaussian Case}

If a random variable $X \in \RR^p$ follows the Gaussian Distribution $N(\mu, \Sigma)$, then $\theta = (\Sigma^{-1}\mu, -\frac{1}{2}\Sigma^{-1})$. The sufficient statistics $\phi(X) = (X, XX^T)$, $h(x)=(2\pi)^{-\frac{k}{2}}$, and the log-partition function 
\begin{equation}
A(\theta) = \frac{1}{2}\mu^T\Sigma^{-1}\mu+\frac{1}{2}\log(|\Sigma|)
\label{eq:gauback}
\end{equation}

When performing the inference of Gaussian Graphical Models, it is easy to estimate the mean vector $v(\theta)$, since it equals to $\E[X,XX^T]$. 

When learning the GGM, we estimate its canonical parameter $\theta$  through vanilla MLE. 
Because $\Sigma^{-1}$ is one entry of $\theta$ we can use the backward mapping to estimate $\Sigma^{-1}$.  
\begin{equation} 
  \begin{split}
\theta = (\Sigma^{-1}\mu, -\frac{1}{2}\Sigma^{-1})= \mathcal{A}^*(v)= \nabla A^*(v)&\\ 
= ((\E_{\theta}[XX^T]-\E_{\theta}[X]\E_{\theta}[X]^T)^{-1}\E_{\theta}[X],&\\
 -\frac{1}{2}(\E_{\theta}[XX^T]-\E_{\theta}[X]\E_{\theta}[X]^T)^{-1}).
\end{split}
\end{equation}
By plugging in \eref{eq:gauback} into \eref{eq:back}, we get the backward mapping of $\Omega$ as $(\E_{\theta}[XX^T]-\E_{\theta}[X]\E_{\theta}[X]^T)^{-1}) = \hat{\Sigma}^{-1}$, easily computable from the sample covariance matrix.

\subsection{Theorems of $T_v$  Being Invertible}

Based on \cite{yang2014elementary} for any matrix A, the element wise operator $T_v$ is defined as:
\[
[T_v(A)]_{ij}=
\begin{cases}
 A_{ii} + v  & if\ i=j \\
 sign(A_{ij})(|A_{ij}|-v)& otherwise, i \neq j
\end{cases}
\]

Suppose we apply this operator $T_v$ to the sample covariance matrix $\dfrac{X^{T}X}{n}$ to obtain $T_v(\dfrac{X^{T}X}{n})$. Then, $T_v(\dfrac{X^{T}X}{n})$ under high dimensional settings will be invertible with high probability, under the following conditions:\\
\textbf{Condition-1} ($\Sigma$-Gaussian ensemble) Each row of the design matrix $X \in \RR^{n \times p}$ is i.i.id sampled from $N(0,\Sigma)$.\\
\textbf{Condition-2} The covariance $\Sigma$ of the $\Sigma$-Gaussian ensemble is strictly diagonally dominant: for all row i, $\delta_i := \Sigma_{ii}-\Sigma_{j \neq i} \geq \delta_{min} > 0 $ where $\delta_{min}$ is a large enough constant so that $||\Sigma||\infty \leq \dfrac{1}{\delta_{min}}$. 

This assumption guarantees that the matrix $T_v(\dfrac{X^{T}X}{n})$ is invertible, and its induced $\ell_{\infty}$ norm is well bounded. 
Then the following theorem holds:\\
\begin{theorem}
Suppose Condition-1 and Condition-2 hold. Then for any $v \geq 8(max_i \Sigma_{ii})\sqrt(\dfrac{10\tau\log p'}{n})$, the matrix $T_v(\dfrac{X^{T}X}{n})$ is invertible with probability at least $1-4/{p'}^{\tau-2}$ for $p' := max\{n,p\}$ and any constant $\tau > 2$.
\end{theorem}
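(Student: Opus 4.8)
The plan is to reduce this probabilistic invertibility claim to a deterministic structural fact—strict diagonal dominance—and to isolate the only random ingredient, namely the entrywise deviation of the sample covariance from $\Sigma$, into a single high-probability event. Concretely, writing $\hat\Sigma := \tfrac{1}{n}X^{T}X$, I would fix the event
\[
\mathcal{E} := \Big\{ \max_{i,j} |[\hat\Sigma]_{ij} - \Sigma_{ij}| \le v \Big\},
\]
and then show two things: (i) that $\mathcal{E}$ holds with probability at least $1 - 4/p'^{\tau-2}$ for the stated choice of $v$, and (ii) that on $\mathcal{E}$ the thresholded matrix $T_v(\hat\Sigma)$ is strictly diagonally dominant, hence nonsingular. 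Step (ii) is deterministic and short; step (i) is the analytic core.

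For step (i), I would use that under Condition-1 each $[\hat\Sigma]_{ij}-\Sigma_{ij}$ is a centered average of $n$ i.i.d.\ products of jointly Gaussian variables, which are sub-exponential. The classical tail bound for such products (as in \cite{rothman2009generalized} and invoked by \cite{yang2014elementary}) gives, for $t$ in a bounded range,
\[
\P\big(|[\hat\Sigma]_{ij}-\Sigma_{ij}| > t\big) \le 4\exp\!\Big(- c\, n\, t^2 / (\max_i \Sigma_{ii})^2\Big).
\]
Taking a union bound over the at most $p^2 \le p'^2$ distinct entries and substituting $t = v = 8(\max_i\Sigma_{ii})\sqrt{10\tau\log p'/n}$, the exponent becomes proportional to $\tau\log p'$, so that $p'^2\cdot 4\exp(-c\,n v^2/(\max_i\Sigma_{ii})^2)$ collapses to $4/p'^{\tau-2}$. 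I expect the calibration of the numerical constants $8$ and $10$ to be the \emph{main obstacle}: it requires tracking the exact sub-exponential tail constant $c$ and matching it against the union-bound factor $p'^2$, and it is precisely where the hypotheses $\tau>2$ and $p'=\max\{n,p\}$ are consumed.

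For step (ii), I would argue entrywise on $\mathcal{E}$. On the diagonal, $T_v$ inflates each entry by $v$, so $[T_v(\hat\Sigma)]_{ii} = [\hat\Sigma]_{ii}+v \ge (\Sigma_{ii}-v)+v = \Sigma_{ii}$. Off the diagonal, the soft-thresholding $\rho_v$ shrinks entries toward zero: either $[\hat\Sigma]_{ij}$ is below threshold and maps to $0$, or $|[T_v(\hat\Sigma)]_{ij}| = |[\hat\Sigma]_{ij}| - v \le (|\Sigma_{ij}|+v)-v = |\Sigma_{ij}|$; in both cases $|[T_v(\hat\Sigma)]_{ij}| \le |\Sigma_{ij}|$. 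Combining these bounds with the strict diagonal dominance of $\Sigma$ from Condition-2 (reading $\delta_i = \Sigma_{ii}-\sum_{j\ne i}|\Sigma_{ij}| \ge \delta_{\min}>0$) gives, for every row $i$,
\[
[T_v(\hat\Sigma)]_{ii} \ge \Sigma_{ii} = \sum_{j\ne i}|\Sigma_{ij}| + \delta_i \ge \sum_{j\ne i}|\Sigma_{ij}| + \delta_{\min} > \sum_{j\ne i}|[T_v(\hat\Sigma)]_{ij}|.
\]
Thus $T_v(\hat\Sigma)$ is strictly diagonally dominant, and by the Levy--Desplanques theorem (equivalently Gershgorin's circle theorem, since no eigenvalue disc can reach the origin) it is nonsingular. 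Intersecting this deterministic conclusion with the probability bound from step (i) yields invertibility with probability at least $1-4/p'^{\tau-2}$, as claimed.
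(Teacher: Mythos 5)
Your proof is correct, but there is no in-paper argument to match it against: the paper states this theorem by citation to \cite{yang2014elementary} and never reproduces a proof, so your write-up supplies the missing argument — and it is essentially the one the cited source uses, since Condition-2 exists precisely to enable the diagonal-dominance step. On your step (i), you do not need to re-derive the sub-exponential tail bound or chase the constants $8$ and $\sqrt{10\tau}$ (which you flag as the main obstacle): the paper already restates exactly the needed fact in its appendix as Lemma 1 of \cite{ravikumar2011high}, namely $\|\frac{X^TX}{n}-\Sigma\|_{\infty} \leq 8(\max_i \Sigma_{ii})\sqrt{10\tau\log p'/n}$ with probability at least $1-4/p'^{\tau-2}$, with the union bound over entries already internal to that statement; do note, however, that this lemma carries the side condition $n \geq 40\max_i\Sigma_{ii}$, which the theorem statement silently absorbs and your proof should record. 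Your step (ii) is exactly right — on your event the inflated diagonal satisfies $[T_v(\hat\Sigma)]_{ii}\geq \Sigma_{ii}$ while soft-thresholding forces $|[T_v(\hat\Sigma)]_{ij}|\leq|\Sigma_{ij}|$, so strict dominance with gap $\delta_{\min}$ survives and Levy--Desplanques applies — and it explains the otherwise odd-looking design of $T_v$ (add $v$ on the diagonal, subtract $v$ off it). It is worth contrasting this with the machinery the paper actually deploys nearby: in the proof of its corollary, control of $T_v(\hat\Sigma)$ is obtained by a perturbation argument, $\|T_v(\hat\Sigma)w\|_\infty \geq (\kappa_2 - |||T_v(\hat\Sigma)-\Sigma|||_\infty)\|w\|_\infty$, with the deviation bounded via Theorem 1 of \cite{rothman2009generalized} under (C-Sparse-$\Sigma$) and an extra sample-size requirement; that heavier route buys the quantitative bound $|||[T_v(\hat\Sigma)]^{-1}|||_\infty \leq 2/\kappa_2$ needed in the downstream error analysis, whereas your route is more elementary, uses only Conditions 1--2, and certifies invertibility alone — though if you want an inverse bound too, strict dominance gives it for free via Varah's bound, $|||[T_v(\hat\Sigma)]^{-1}|||_\infty \leq 1/\delta_{\min}$, consistent with the remark following Condition-2.
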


Then we provide the error bound of $T_v$ in the first lemma of \sref{sec:proofbm} and use it in deriving the error bound of JEEK.

\subsection{Useful lemma(s) of Error Bounds of (Proxy) Backward Mapping}
\label{sec:proofbm}
\begin{lemma}
\label{le:1}
(Theorem 1 of~\cite{rothman2009generalized}). Let $\delta$ be $\max_{ij}|[\frac{X^TX}{n}]_{ij}-\Sigma_{ij}|$. Suppose that $\nu > 2\delta$. Then, under the conditions (C-Sparse$\Sigma$), and as $\rho_v(\cdot)$ is a soft-threshold function, we can deterministically guarantee that the spectral norm of error is bounded as follows:

\begin{equation}
	||| T_v(\hat{\Sigma}) - \Sigma |||_\infty \leq 5\nu^{1-q}c_0(p)+3\nu^{-q}c_0(p)\delta
	\label{eq:proof2_18}
\end{equation}
\end{lemma}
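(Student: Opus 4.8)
The plan is to prove this as a purely deterministic, entry-wise estimate, exactly in the spirit of Rothman--Levina--Zhu. First I would recall that for any matrix $M$ the induced operator norm $|||M|||_\infty$ equals the maximum absolute row sum $\max_i \sum_j |M_{ij}|$, so it suffices to fix a row $i$ and bound $\sum_j |[T_\nu(\hat{\Sigma})]_{ij} - \Sigma_{ij}|$ uniformly in $i$. I would also record the three defining properties of the soft-threshold rule $\rho_\nu$ that drive everything: shrinkage $|\rho_\nu(z)| \le |z|$, hard cutoff $\rho_\nu(z)=0$ whenever $|z|\le \nu$, and bounded perturbation $|\rho_\nu(z)-z|\le \nu$. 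Throughout I use the hypothesis $|\hat{\Sigma}_{ij}-\Sigma_{ij}|\le\delta$ for all $i,j$ together with $\nu>2\delta$.

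The core step is the per-entry triangle split
\[
|\rho_\nu(\hat{\Sigma}_{ij}) - \Sigma_{ij}| \le |\rho_\nu(\hat{\Sigma}_{ij}) - \rho_\nu(\Sigma_{ij})| + |\rho_\nu(\Sigma_{ij}) - \Sigma_{ij}|,
\]
summed over $j$. The second term is the ``oracle thresholding bias'': by the cutoff and bounded-perturbation properties, $|\rho_\nu(\Sigma_{ij})-\Sigma_{ij}| \le \min(|\Sigma_{ij}|,\nu) \le |\Sigma_{ij}|^q \nu^{1-q}$, so summing and invoking (C-Sparse$\Sigma$) gives $\sum_j |\rho_\nu(\Sigma_{ij})-\Sigma_{ij}| \le \nu^{1-q} c_0(p)$ up to the stated constant.

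For the first term I would argue that it is nonzero only on the ``active'' coordinates where $\rho_\nu$ fails to annihilate at least one of $\hat{\Sigma}_{ij},\Sigma_{ij}$, and that there it is controlled by $\delta$ via the $1$-Lipschitz / bounded-perturbation behaviour of $\rho_\nu$. The decisive observation is that $\nu>2\delta$ forces every active coordinate to satisfy $|\Sigma_{ij}| > \nu/2$: indeed if $|\hat{\Sigma}_{ij}|>\nu$ then $|\Sigma_{ij}|\ge|\hat{\Sigma}_{ij}|-\delta>\nu-\delta>\nu/2$. Hence the number of active coordinates in row $i$ is at most $\#\{j: |\Sigma_{ij}|>\nu/2\}$, and the elementary counting inequality $(\nu/2)^q \#\{j:|\Sigma_{ij}|>\nu/2\} \le \sum_j|\Sigma_{ij}|^q \le c_0(p)$ bounds this cardinality by a constant multiple of $\nu^{-q}c_0(p)$. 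Multiplying by the per-entry size (of order $\delta$) yields the $\nu^{-q}c_0(p)\delta$ contribution, and adding the two pieces collects into the stated bound $5\nu^{1-q}c_0(p)+3\nu^{-q}c_0(p)\delta$.

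The main obstacle is not conceptual but bookkeeping: to reach the explicit constants $5$ and $3$ one must track every place where $\nu>2\delta$ is invoked and partition the active set carefully (according to whether $\hat{\Sigma}_{ij}$ alone, $\Sigma_{ij}$ alone, or both cross $\nu$), since the streamlined split above already gives smaller constants for the pure soft-threshold but the generalized-threshold statement needs the looser $5,3$. I would therefore follow Rothman--Levina--Zhu's exact case partition rather than the shortcut, relying only on the three abstract properties of $\rho_\nu$ so that the argument covers the general thresholding rule and not merely soft-thresholding.
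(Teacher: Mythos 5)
Your proof is correct and takes essentially the same route as the source: the paper itself offers no proof of this lemma, importing it verbatim as Theorem 1 of Rothman--Levina--Zhu, whose argument is exactly your row-sum reduction for $|||\cdot|||_\infty$, the oracle-bias bound $\sum_j |\rho_\nu(\Sigma_{ij})-\Sigma_{ij}| \le \nu^{1-q}c_0(p)$, and the active-set count $\#\{j : |\Sigma_{ij}|>\nu/2\} \le 2^{q}\nu^{-q}c_0(p)$ forced by $\nu > 2\delta$. Your remark that the streamlined soft-threshold split yields sharper constants than the stated $5$ and $3$ (which are loose enough to cover any thresholding rule satisfying shrinkage, cutoff, and bounded perturbation) is also accurate.
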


\begin{lemma}
\label{le:2}
(Lemma 1 of~\cite{ravikumar2011high}). Let $\mathcal{A}$ be the event that

\begin{equation}
	|| \frac{X^TX}{n} - \Sigma ||_\infty \leq  8(\max_i \Sigma_{ii})\sqrt{\frac{10\tau \log p'}{n}}
\end{equation}

where $p' := \max(n,p)$ and $\tau$ is any constant greater than 2. Suppose that the design matrix X is i.i.d. sampled from $\Sigma$-Gaussian ensemble with $n \geq 40\max_i\Sigma_{ii}$. Then, the probability of event $\mathcal{A}$ occurring is at least $1-4/p'^{\tau-2}$.

\end{lemma}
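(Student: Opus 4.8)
The plan is to reduce the matrix sup-norm deviation to a single-entry concentration statement, prove that statement with a sub-exponential (Bernstein / chi-squared) tail bound, and then pay for all entries with a union bound, choosing the threshold exactly so that the exponent scales like $\tau\log p'$.

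First I would fix indices $(i,j)$ and write
\begin{equation}
\left[\frac{X^TX}{n}\right]_{ij} - \Sigma_{ij} = \frac{1}{n}\sum_{s=1}^n\left(X_{s,i}X_{s,j} - \Sigma_{ij}\right),
\end{equation}
an average of $n$ i.i.d. mean-zero terms. Because each row is $N(0,\Sigma)$, the summands are products of jointly Gaussian variables and are therefore sub-exponential, not sub-Gaussian, so a Bernstein-type bound is required. A concrete elementary route is polarization: using $X_{s,i}X_{s,j} = \tfrac14[(X_{s,i}+X_{s,j})^2 - (X_{s,i}-X_{s,j})^2]$, the average becomes a difference of two centered chi-squared averages built from $X_{s,i}\pm X_{s,j} \sim N(0,\sigma_\pm^2)$, where $\sigma_\pm^2 = \Sigma_{ii}+\Sigma_{jj}\pm 2\Sigma_{ij} \le 4\max_i\Sigma_{ii}$; the diagonal entries are the even simpler single-chi-squared case. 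A cleaner alternative that sidesteps the possibility of a very small $\sigma_-^2$ is to compute the moment generating function of $X_{s,i}X_{s,j}-\Sigma_{ij}$ directly, obtaining uniform sub-exponential parameters of order $\max_i\Sigma_{ii}$.

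Next I would invoke the standard chi-squared deviation bound $P\big(|\tfrac1n\sum_s(W_s^2-\sigma^2)| > \sigma^2 u\big)\le 2\exp(-nu^2/8)$ for $W_s\sim N(0,\sigma^2)$ in the quadratic regime, apply it to each of the two pieces with $u = 2t/\sigma_\pm^2$, and combine by a union over the two pieces. With the variance bound $\sigma_\pm^2\le 4\max_i\Sigma_{ii}$ this yields a per-entry bound of the form
\begin{equation}
P\left(\left|\left[\tfrac{X^TX}{n}\right]_{ij}-\Sigma_{ij}\right| > t\right) \le 4\exp\!\left(-\frac{n\,t^2}{32\,(\max_i\Sigma_{ii})^2}\right),
\end{equation}
valid as long as $t$ stays inside the quadratic regime of the chi-squared tail (the complementary linear regime only makes the probability smaller); the hypothesis $n\ge 40\max_i\Sigma_{ii}$, together with the size of the threshold chosen below, is what keeps the estimate in the regime where this form applies.

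Finally I would substitute $t = 8(\max_i\Sigma_{ii})\sqrt{10\tau\log p'/n}$, so that $n t^2/(32(\max_i\Sigma_{ii})^2) = 20\tau\log p'$ and each entry fails with probability at most $4\,p'^{-20\tau}$. A union bound over the at most $p'^2$ entries then gives $P(\mathcal A^c)\le 4\,p'^{2-20\tau}\le 4\,p'^{2-\tau} = 4/p'^{\tau-2}$ for every $\tau>2$, which is the claim; the constants $8$ and $10\tau$ are deliberately generous, so the plain union bound already clears the target rate while preserving the leading factor $4$. The main obstacle is not conceptual but the bookkeeping of constants: carrying the factor $\tfrac14$ from polarization, the variance bound $4\max_i\Sigma_{ii}$, and the $\tfrac18$ inside the chi-squared exponent simultaneously, while verifying that the chosen $t$ indeed lies in the quadratic regime — this last regime check is the delicate point and is exactly what the lower bound on $n$ is there to secure.
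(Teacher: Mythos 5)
Your architecture --- polarization of each off-diagonal entry into two centered chi-squared averages, a quadratic-regime chi-squared/Bernstein tail per entry, then a union bound over at most $p'^2$ entries --- is sound, and it is essentially the standard proof of this result; note that the paper itself offers no proof at all here: the lemma is imported verbatim as Lemma~1 of \cite{ravikumar2011high}, so your self-contained argument is the right thing to attempt. Your constant bookkeeping checks out with slack: with $t = 8(\max_i\Sigma_{ii})\sqrt{10\tau\log p'/n}$ the per-entry exponent is $nt^2/(32(\max_i\Sigma_{ii})^2) = 20\tau\log p'$, and $4p'^{\,2-20\tau} \le 4/p'^{\,\tau-2}$ for all $\tau>2$. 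Your worry about a tiny $\sigma_-^2$ is also correctly dismissed, and for the right reason: at a fixed absolute threshold, shrinking the variance pushes Bernstein into its linear regime with exponent proportional to $n\cdot 2t/\sigma_-^2$, which only improves as $\sigma_-^2 \downarrow 0$ (and $\sigma_-^2=0$ makes that piece deterministic).

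The genuine gap is the regime check --- exactly the point you flagged as delicate and then waved through. The quadratic form of the tail bound requires $u_\pm = 2t/\sigma_\pm^2 \le 1$ in the binding case $\sigma_\pm^2 \approx 4\max_i\Sigma_{ii}$, i.e.\ $t \lesssim \max_i\Sigma_{ii}$, which after substituting $t$ is a condition of the form $n \gtrsim \tau\log p'$: a comparison of $n$ against $\tau\log p'$, not against the variance scale. The stated hypothesis $n \ge 40\max_i\Sigma_{ii}$ cannot deliver this --- it is not even scale-invariant (take $\Sigma_{ii} = 1/40$ and $n=1$, $\tau$ arbitrarily large, and the hypothesis is satisfied while $t$ is far outside the quadratic regime). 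Nor is this a removable artifact of your method: the summands are genuinely sub-exponential, so at this threshold the true tail is of order $\exp(-c\sqrt{\tau n \log p'})$, which for $\tau$ large relative to $n/\log p'$ exceeds the claimed $4/p'^{\,\tau-2}$; no argument can close the step from the hypothesis as stated. The resolution is that the statement is a slightly garbled import: in \cite{ravikumar2011high} the per-entry tail bound holds only for deviations $\delta \in (0,\,40\max_i\Sigma_{ii})$, and requiring the plugged-in threshold to respect that cap, $8(\max_i\Sigma_{ii})\sqrt{10\tau\log p'/n} \le 40\max_i\Sigma_{ii}$, is precisely $n \ge \tfrac{2}{5}\tau\log p'$. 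Replace the stated sample-size condition by this one and your proof goes through as written; this is also the regime the paper implicitly works in downstream, cf.\ the requirement $n_{tot} > c\log(Kp)$ in \coref{cor:1}.
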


\section{Design $W_S$ and $W_I^{(i)}$: connections with related work and real-world applications}
\label{sec:designW}

In this section, we showcase  with specific examples that our proposed model JEEK can easily incorporate edge-level (like distance)  as well as node-based (like hubs or groups)  knowledge for the joint estimation of multiple graphs. To this end, we introduce four different choices of $W^{tot}_S$ and $W^{tot}_I$ in our formulation~\eref{eq:JEEK}. 
By simply designing different choices of $W^{tot}_S$ and $W^{tot}_I$, we can express different kinds of additional knowledge explicitly without changing the optimization algorithm.

Specifically, we design $W_S$ and $W_I^{(i)}$ for cases like: 
\begin{itemize}
    \item (1)  the additional knowledge is available in the form of a $p*p$ matrix $W$. For instance distance matrix among brain regions in neuroscience study belongs to this type;
    \item  (2) the existing knowledge is not in the form of matrix about nodes. We need to design $W$ for such cases, for example the information of known hub nodes or the information of how nodes fall into groups (e.g., genes belonging to the same pathway or locations). 
\end{itemize}
For the second kind, we showcase three different designs of weight matrices for representing (a) known co-Hub nodes, (b) perturbed hub nodes, and (c) node grouping information. 

The design of knowledge matrices is loosely related to the different structural assumptions used by he JGL 
studies as  (\cite{mohan2013node}, \cite{danaher2013joint}). For example, JGL can use specially designed norms like the one proposed in ~\cite{mohan2013node} to push multiple graphs to have a similar set of nodes as hubs. However JGL can not model additional knowledge like a specific set of nodes are hub nodes (like we know node $j$ is a hub node). Differently, JEEK can design $\{ W_I^{(i)}, W_S \}$ for incorporating such knowledge. Essentially JEEK is complementary to JGL because they capture different type of prior information.

\subsection{Case study I: Knowledge as matrix form like a distance matrix or some known edges}

The first example we consider is exploiting a spatial prior to jointly estimate brain connectivity for multiple subject groups.
Over time, neuroscientists have gathered considerable knowledge regarding the spatial and anatomical information underlying brain connectivity (\textit{i.e.} short edges and certain anatomical regions are more likely to be connected \cite{watts1998collective}). Previous studies enforce these priors via a matrix of weights, $W$, corresponding to edges. To use our proposed model JEEK for such tasks, we can similarly choose $W = W_I^{(i)} = W_S$ in~\eref{eq:JEEK}).

\subsection{Case study II: Knowledge of co-hub nodes}

The structure assumption we consider is graphs with co-hub nodes. Namely, there exists a set of nodes $NId = \{j | j \in \{1,2,\dots,p\}\}$ such that $\Omega^{(i)}_{j,k} \ne 0,\forall i \in \{1,2,\dots,K\}\text{ and } k\in \{1,\dots,p\}$. The above sub-figure of Figure~\ref{fig:hub} is an example of the co-hub nodes. 

A so-called JGL-hub~\cite{mohan2013node} estimator chooses $\mathcal{R}'(\cdot) = \sum\limits_{i<i'}P_q(\Omega^{(i)} - \Omega^{(i')})$ in~\eref{eq:ll} to account for the co-hub structure assumption. Here $P_q(\Theta_1,\Theta_2,\dots, \Theta_k) = 1/2 ||\Theta_1,\dots,\Theta_k||_{\ell_1,\ell_q}$. $\Theta_i$ is a symmetric matrix and $||\cdot||_{\ell_1,\ell_q}$ is the notation of $\ell_1,\ell_q$-norm. JGL-hub formulation needs a complicated ADMM solution with computationally expensive SVD steps.

We design $W_S$ and $W_I^{(i)}$ for the co-hub type knowledge in JEEK via: (1) We initialize $\{ W_I^{(i)}, W_S \}$ with $\mathbf{1}_{p\times p}$; (2) ${W_S}_{j,k} = \frac{1}{\gamma}, \forall j \in NId \text{ and } k \in {1,\dots,p}$  where $\gamma$ is a hyperparameter. Therefore, the smaller weights for the edge connecting to the node $j$ of all the graphs enforce the co-hub structure.; (3). After this process, each entry of $\{ W_I^{(i)}, W_S \}$ equals to either $\frac{1}{\gamma}$ or $1$. The below sub-figure of Figure~\ref{fig:hub} is an example of the designed $W_S$.

\begin{figure}[ht]
    \centering
    \includegraphics[width=\linewidth]{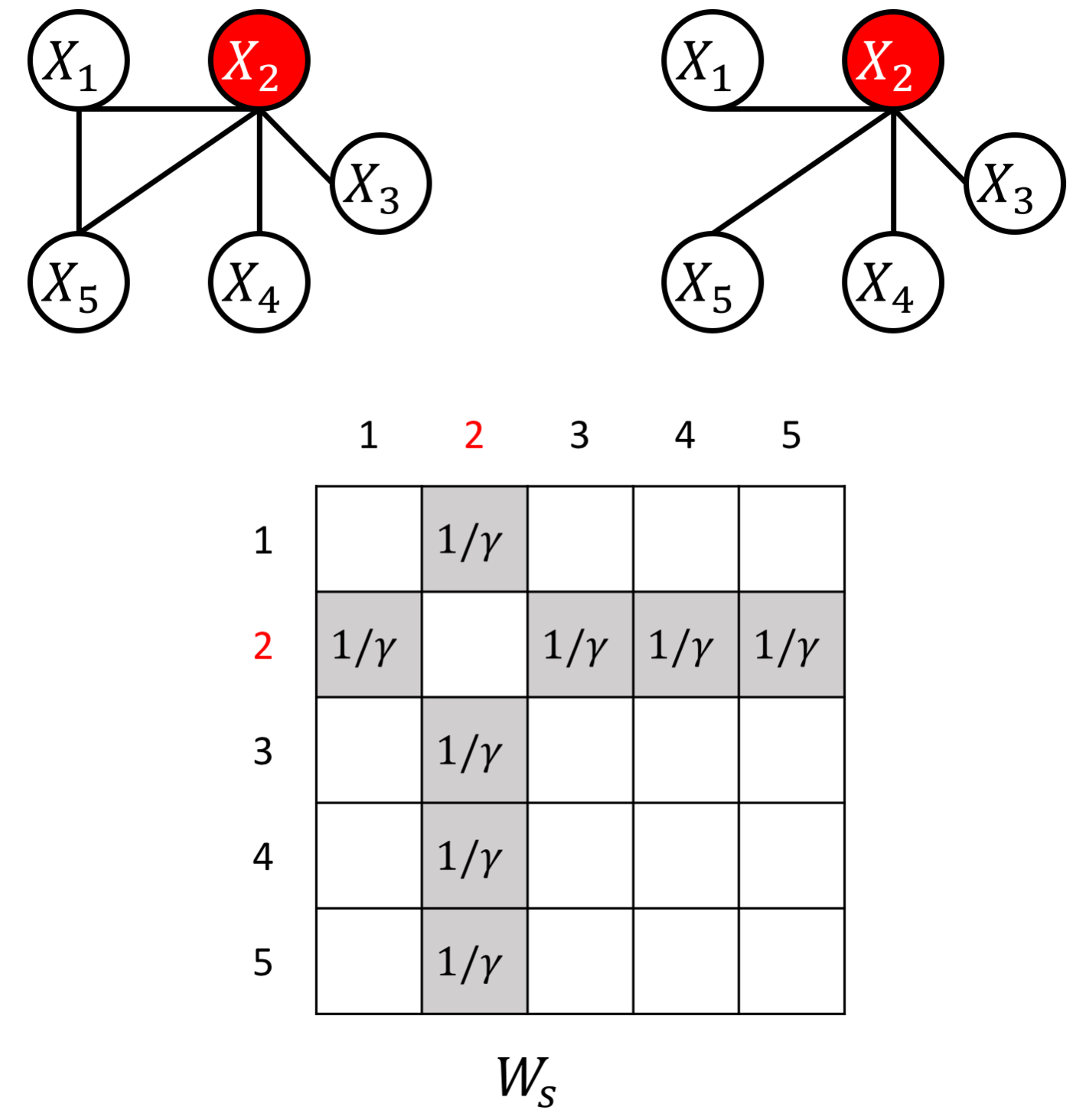}
    \caption{co-hub. Top: An example of the co-hub node structure. Bottom: The designed $W_S$ for the co-hub structure case (white off-diagonal entries are 1).}
    \label{fig:hub}
\end{figure}

\subsection{Case study III: Knowledge of the perturbed hub nodes}

Another structure assumption we study is graphs with perturbed nodes. Namely, there exists a set of nodes $NId = \{j | j \in \{1,2,\dots,p\}\}$ so that there exists $i,i'$ $\Omega^{(i)}_{j,k} \ne 0, \text{ and } \Omega^{(i')}_{j,k} = 0, \forall k\in \{1,\dots,p\}$. The above sub-figure of Figure~\ref{fig:perturb} is an example of the perturbed nodes. 
A so-called JGL-perturb~\cite{mohan2013node} estimator chose $\mathcal{R}'(\cdot) = \sum\limits_{i<i'}P_q((\Omega^{(1)} - \text{diag}(\Omega^{(1)})), \dots ,(\Omega^{(K)} - \text{diag}(\Omega^{(K)})))$ in~\eref{eq:ll}. Here $P_q(\cdot)$ has the same definition as mentioned previously. This JGL-perturb formulation also needs a complicated ADMM solution with computationally expensive SVD steps.

To design $W_S$ and $W_I^{(i)}$ for this type of knowledge in JEEK, we use a similar strategy as the above strategy: (1) We initialize $\{ W_I^{(i)}, W_S \}$ with $\mathbf{1}_{p\times p}$; we let ${W_I^{(i)}}_{j,k} = \frac{1}{\gamma}, {W_I^{(i')}}_{j,k} = \gamma, \forall j \in NId \text{ and } k \in {1,\dots,p}$. Therefore, the different weights for the edge connecting to the node $j$ in different $W_I^{(i)}$ enforce the node-perturbed structure. ; (3). After this process, each entry of $\{ W_I^{(i)}, W_S \}$ equals to either $\frac{1}{\gamma}$,$\gamma$ or $1$. The below sub-figure of Figure~\ref{fig:perturb} is an example of the designed $\{W_I^{(i)}\}$.

\begin{figure}[ht]
    \centering
    \includegraphics[width=\linewidth]{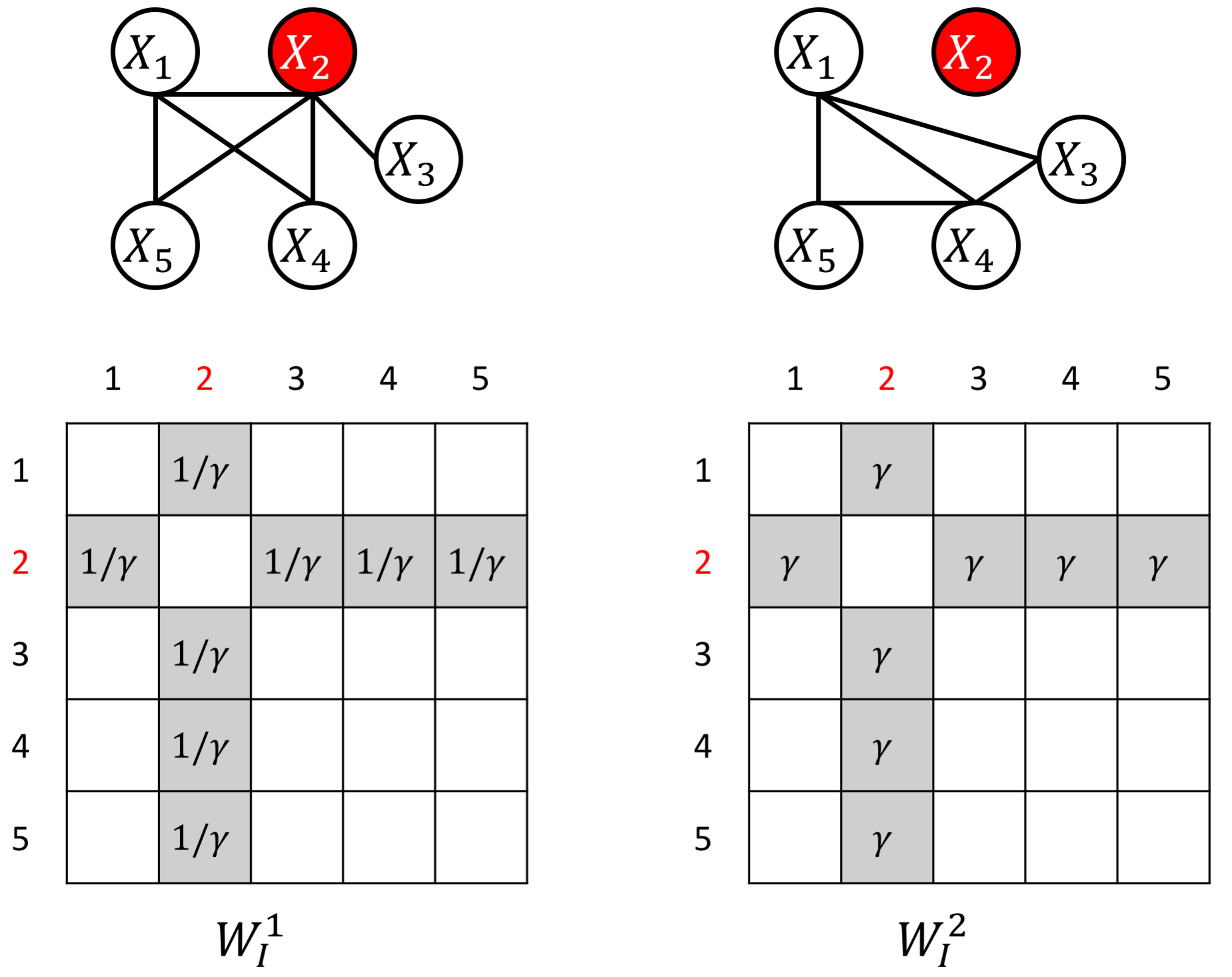}
    \caption{Perturb hub nodes. Top: An example of the perturbed node structure. Bottom: The designed $W_I$ for the perturbed case. (white off-diagonal entries are 1.)}
    \label{fig:perturb}
\end{figure}

\subsection{Case study IV: Knowledge of group information about nodes}

To design $W_S$ and $W_I^{(i)}$ for the group information about a set of nodes, we use a simple three-step strategy: (1) We initialize $\{ W_I^{(i)}, W_S \}$ with $\mathbf{1}_{p\times p}$; (2) We let ${W_S}_{j,k} =  \frac{1}{\gamma}, \forall (j,k) \in Id$ where $\gamma$ is a hyperparameter. Therefore, the smaller weights for the edge $(j,k)$ in all the graphs favors the edges among nodes in the same group. ; (3). After this process, each entry of $\{ W_I^{(i)}, W_S \}$ equals to either $ \frac{1}{\gamma}$ or $1$. The below sub-figure of Figure~\ref{fig:group} is an example of the designed $W_S$ (extra knowledge is that $X_2,X_3,X_4$ belong to the same group).

\begin{figure}[ht]
    \centering
    \includegraphics[width=\linewidth]{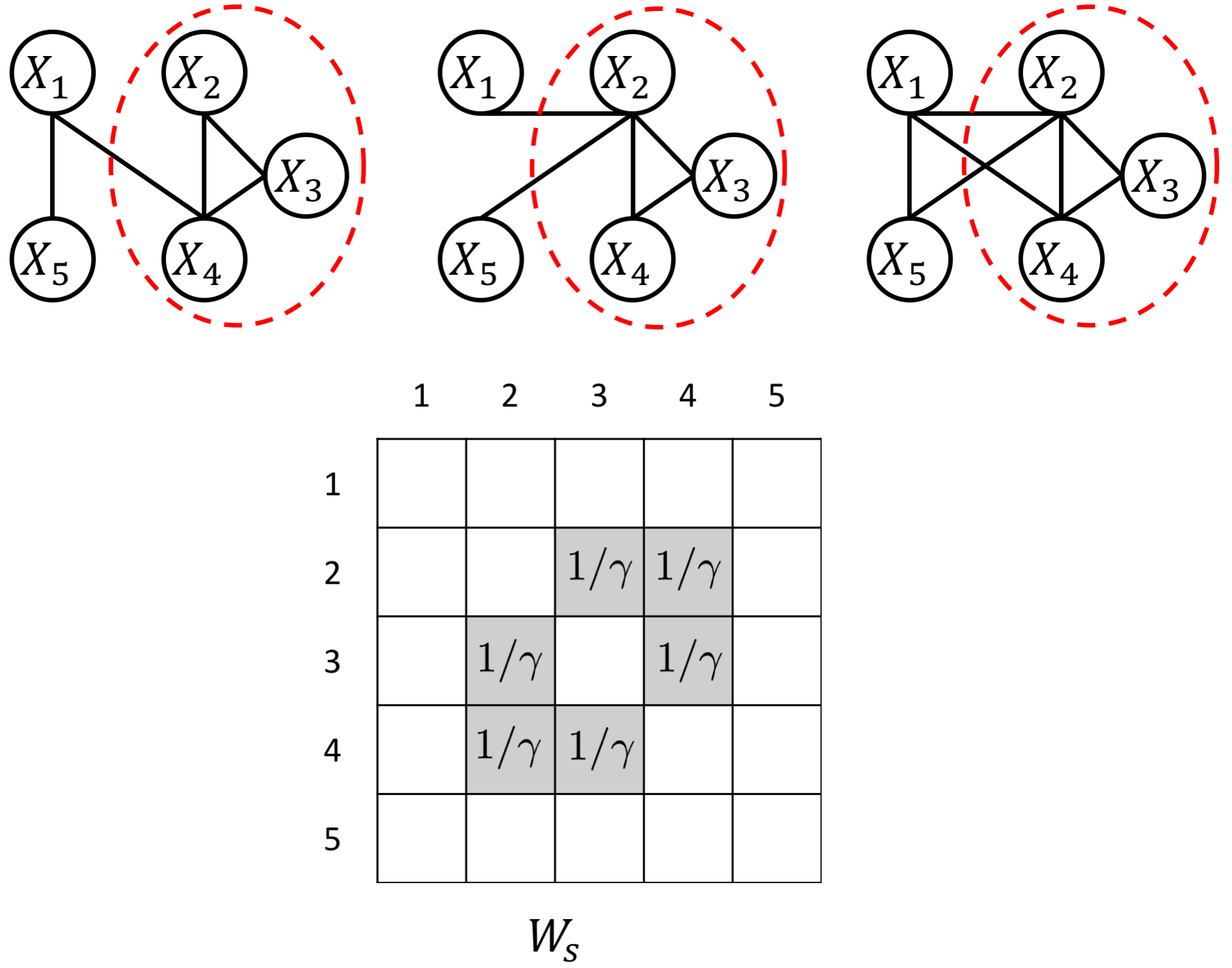}
    \caption{Co-group example case. Top: An example of the co-group node structure. Bottom: The designed $W_S$ for the case. (white off-diagonal entries are 1.)}
    \label{fig:group}
\end{figure}

\section{More about  Experimental Setup}
\label{sec:expS}
\subsection{Experimental Setup}
\label{sec:expset}
\label{sec:expsetMore}

On four types of datasets, we focus on empirically evaluating JEEK with regard to three aspects:  (i) effectiveness, computational speed and scalability in brain connectivity simulation data; (ii) flexibility in incorporating different types of knowledge of known hub nodes in graphs; (iii) effectiveness and computational speed for brain connectivity estimation from real-world fMRI.

\subsection{Evaluation Metrics}
\begin{itemize}
\item{AUC-score:}~The edge-level false positive rate (FPR) and true positive rate (TPR) are used to measure
the difference between the true graphs and the predicted graphs. We obtain FPR vs. TPR curve for each method by tuning  over a range of its regularization parameter. We use the area under the FPR~-TPR curve (AUC-Score) to compare the predicted versus true graph. Here, $\text{FPR} = \dfrac{\text{FP}}
{\text{FP} + \text{TN}}$ and $\text{TPR} = \dfrac{\text{TP}}
{\text{FN} + \text{TP}}$. TP (true positive) and TN (true negative) means the number of true edges and non-edges correctly estimated by the predicted network respectively. FP (false positive) and FN (false negative) are the number of incorrectly predicted nonzero entries and zero entries respectively. 
\item{F1-score:}~ We first use the edge-level F1-score to compare the predicted versus true graph. Here, $\text{F1} = \frac{2\cdot\text{Precision}\cdot\text{Recall}}{\text{Precision} + \text{Recall}}$, where $\text{Precision} = \frac{\text{TP}}{\text{TP} + \text{FP}}$ and $\text{Recall} = \frac{\text{TP}}{\text{TP}+\text{FN}}$. The better method achieves a higher F1-score. 
\item{Time Cost:}~We use the execution time (measured in seconds or log(seconds)) for a method as a measure of its scalability. To ensure a fair comparison, we try $30$ different $\lambda_n$ (or $\lambda_2$)  and measure the total time of execution for each method. The better method uses less time\footnote{The machine that we use for experiments is an AMD 64-core CPU with a 256GB memory.}
\end{itemize}


\paragraph{Evaluations: }~ For the first experiment on brain simulation data, we evaluate JEEK and the baseline methods on F1-score and running time cost. For the second experiment, we use AUC-score and running time cost.\footnote{We cannot use AUC-score for the first set of experiments as the baseline NAK only gives us the best adjacency matrix after tuning over their hyperparameters. It does not provide an option for tuning the $\lambda_n$.} For the third experiment, our evaluation metrics include classification accuracy, likelihood and running time cost.

\begin{itemize}
\item  The first set of experiments evaluates the speed and scalability of our model JEEK on simulation data imitating brain connectivity. We compare both the estimation performance and computational time of JEEK with the baselines in multiple simulated datasets. 
\item In the second experiment, we show JEEK's ability to incorporate knowledge of known hubs in multiple graphs. We also compare the estimation performance and scalability of JEEK with the baselines in multiple simulated datasets.
\item Thirdly, we evaluate the ability to import additional knowledge for enhancing graph estimation in a real world dataset. The dataset used in this experiment is a human brain fMRI dataset with two groups of subjects: autism and control. Our choice of this dataset is motivated by recent literature in neuroscience that has suggested many known weights between different regions in human brain as the additional knowledge.
\end{itemize}

\subsection{Hyper-parameters: }\ We need to tune four hyper-parameters $v$, $\lambda_n$, $\lambda_2$ and $\gamma$: 
\begin{itemize}
    \item $v$ is used for soft-thresholding in JEEK. We choose $v$ from the set $\{ 0.001i|i = 1,2,\dots,1000 \}$ and pick a value that makes $T_v(\hat{\Sigma}^{(i)})$ invertible.
    \item $\lambda_n$ is the main hyper-parameter that controls the sparsity of the estimated network. Based on our convergence rate analysis in Section~\ref{sec:theory}, $\lambda_n \ge C \sqrt{\frac{\log Kp}{n_{tot}}}$ where $n_{tot}=Kn$ and $n=n_i$. Accordingly, we choose 
    $\lambda_n$ from a range of $\{0.01 \times  \sqrt{\frac{\log Kp}{n_{tot}}} \times i| i \in \{ 1,2,3,\dots, 30 \} \}$. 
    \item $\lambda_2$ controls the regularization of the second penalty function in JGL-type estimators. We tune $\lambda_2$ from the set  $\{ 0.01, 0.05, 0.1\}$  for all experiments and pick the one that gives the best results. 
    \item $\gamma$ is a hyperparameter used to design the $W_I^{(i)}, W_S$ (\ref{sec:rel}). The value of $\gamma$  intuitively indicates the confidence of the additional knowledge weights. In the second experiment, we choose $\gamma = \{ 2, 4, 10\}$.
\end{itemize}

\section{More about  Experimental Results}
\label{sec:expR}
\subsection{More Experiment:  Simulate Samples with Known Hubs as Knowledge}
\label{subsec:exp3more}

In this set of experiments, we show empirically JEEK's ability to model knowledge of known hub nodes across multiple sGGMs and its advantages in scalability and effectiveness. We generate multiple simulated Gaussian datasets for both the co-hub and perturbed-hub graph structures.

\paragraph{Simulation Protocol to generate simulated datasets: }~ We generate multiple sets of synthetic multivariate-Gaussian datasets. First, we generate random graphs following the Random Graph Model \cite{rothman2008sparse}. This model assumes $\Omega^{(i)} = \Bb_I^{(i)} + \Bb_S + \delta I$, where each off-diagonal entry in $\Bb^{(i)}$ is generated independently and equals $0.5$ with probability $0.1i$ and 0 with probability $1-0.1i$. The shared part $\Bb_S$ is generated independently and equal to $0.5$ with probability $0.1$ and 0 with probability $0.9$. $\delta$ is selected large enough to guarantee positive definiteness. We generate cohub and perturbed structure simulations, using the following data generation models:
\begin{itemize}
    \item \textbf{Random Graphs with cohub nodes:}  After we generate the random graphs using the aforementioned Random Graph Model, we randomly generate a set of nodes $NId = \{j | j \in \{1,2,\dots,p\}\}$ as the cohub nodes among all the random graphs. The cardinal number of this set equals to $5\% p$. For each of these nodes $j$, we randomly select $90\%$ edges $E_j=\{(j,k)|k \in \{1,2,\dots,p\}\}$ to be included in the graph. Then we set $\Omega^{(i)}_{j,k} = \Omega^{(i)}_{k,j} = 0.5,\forall i \in \{1,2,\dots,K\}\text{ and } (j,k) \in E_j$.
    \item \textbf{Random Graphs with perturbed nodes:}  After we generate the random graphs using the aforementioned Random Graph Model, we randomly generate a set of nodes $NId = \{j | j \in \{1,2,\dots,p\}\}$ as the perturbed hub nodes for the random graphs. The cardinal number of this set equals to $5\% p$.  For all graphs $\{\Omega^{(i)}|\text{i is odd}\}$, for each of these nodes $j \in NId$, we randomly select $90\%$ edges $E_j=\{(j,k)|k \in \{1,2,\dots,p\}\}$ to be included in the graph. We set $\Omega^{(i)}_{j,k} =\Omega^{(i)}_{k,j} =  0.5,\forall \text{ odd } i \in \{1,2,\dots,K\}\text{ and } (j,k)\in E_j$. For all graphs $\{\Omega^{(i)}|\text{i is even}\}$ and nodes $j \in NId$, we randomly select $10\%$ edges $E^{'}_j=\{(j,k)|k \in \{1,2,\dots,p\}\}$ to be included in the graph. We set $\Omega^{(i)}_{j,k} =\Omega^{(i)}_{k,j} =  0.5,\forall \text{ even } i \in \{1,2,\dots,K\}\text{ and } (j,k)\in E^{'}_j$.  This creates a perturbed node structure in the multiple graphs. 
\end{itemize}

\paragraph{Experimental baselines:}
We employ JGL-node for cohub and perturbed hub node structure (JGL-hub and JGL-perturb respectively) and W-SIMULE as the baselines for this set of experiments. The weights in $\{W_I^{tot}, W_S^{tot}\}$ are designed by the strategy mentioned in Section~\ref{sec:designW}.


\paragraph{Experiment Results: }~We assess the performance of JEEK in terms of effectiveness (AUC score) and scalability (computational time cost) through baseline comparison as follows:

\paragraph{(a) Effectiveness:}~We plot the AUC-score for a number of multiple simulated datasets generated by varying the number of features $p$, the number of tasks $K$ and the number of samples $n$. We calculate AUC by varying $\lambda_n$. For the JGL estimator, we additionally vary $\lambda_2$ and select the best AUC (section \ref{sec:expset}). In Figure~\ref{fig:hub} (a) and Figure~\ref{fig:hub} (b), we plot the AUC-Score for the cohub node structure vs varying $p$ and $K$, respectively. Figure~\ref{fig:perturb} (a) and Figure~\ref{fig:perturb} (b) plot the same for the perturbed node structure. In Figure~\ref{fig:hub} (a) and Figure~\ref{fig:perturb} (a), we vary $p$ in the set $\{100,200,300,400,500\}$ and set $K=2$ and $n=p/2$. For $p>300$ and $n=p/2$, W-SIMULE takes more than one month and JGL takes more than one day. Therefore we can not show their results for $p>300$. For both the cohub and perturbed node structures, JEEK consistently achieves better AUC-score than the baseline methods as $p$ is increased. For Figure~\ref{fig:hub}(b) and Figure~\ref{fig:perturb} (b), we vary $K$  in the set $\{2,3,4\}$ and set $p=200$ and $n=p/2$. JEEK consistently has a higher AUC-score than the baselines JGL and W-SIMULE as $K$ is increased.

\begin{figure*}[t]
    \centering
    \includegraphics[width=\linewidth]{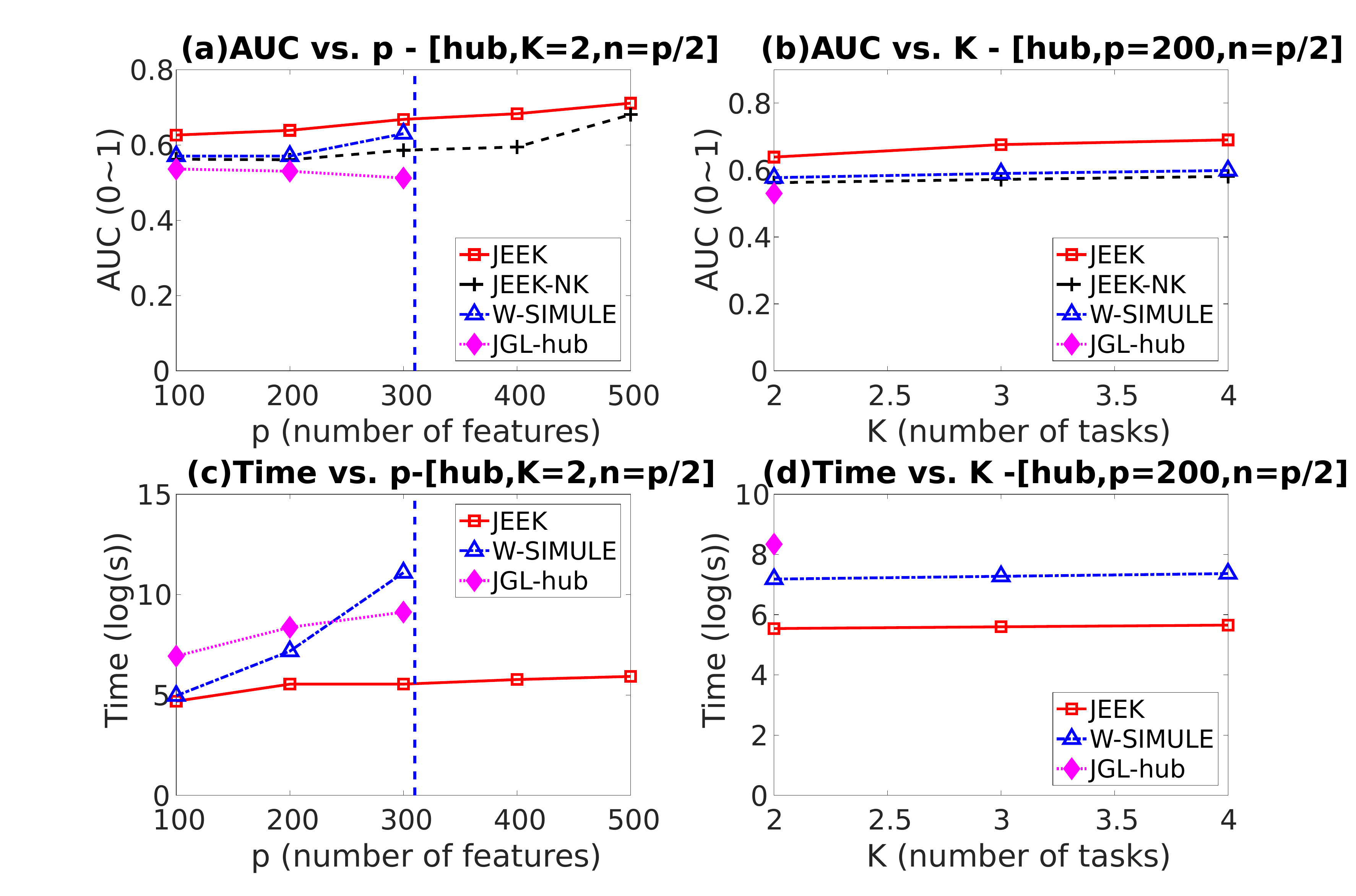}
    \caption{Cohub node structure: (a) AUC-score vs the number of features ($p$). (b) AUC-score vs the number of tasks ($K$). (c) Time cost (log(seconds)) vs the number of features ($p$). (d) Time cost (log(seconds)) vs the number of tasks ($K$). For $p>300$ and $n=p/2$ W-SIMULE takes more than one month and JGL takes more than one day (indicated by dotted blue line). JGL package can only run for $K=2$.}
    \label{fig:hub}
\end{figure*}

\paragraph{(b) Scalability:}~   In Figure~\ref{fig:hub} (c) and (d), we plot the computational time cost for the cohub node structure vs the number of features $p$ and the number of tasks $K$, respectively. Figure~\ref{fig:perturb} (c) and (d) plot the same for the perturbed node structure. We interpolate the points of computation time of each estimator into curves. For each simulation case, the computation time for each estimator is the summation of a method's execution time over all values of $\lambda_n$.  In Figure~\ref{fig:hub}(c) and Figure~\ref{fig:perturb}(c), we vary $p$  in the set $\{100,200,300,400,500\}$ and set $K=2$ and $n=p/2$. When $p>300$ and $n=p/2$, W-SIMULE takes more than one month and JGL takes more than one day. Hence, we have omitted their results for $p>300$. For both the cohub and perturbed node structures, JEEK is consistently more than 5 times faster as $p$ is increased. In Figure~\ref{fig:hub}(d) and Figure~\ref{fig:perturb} (d), we vary $K$ in the set $\{2,3,4\}$ and fix $p=200$ and $n=p/2$. JEEK is $~50$ times faster than the baselines for all cases with $p=200$ and as $K$ is increased. In summary, JEEK is on an average more than $~10$ times faster than all the baselines.

\begin{figure*}[t]
    \centering
    \includegraphics[width=\linewidth]{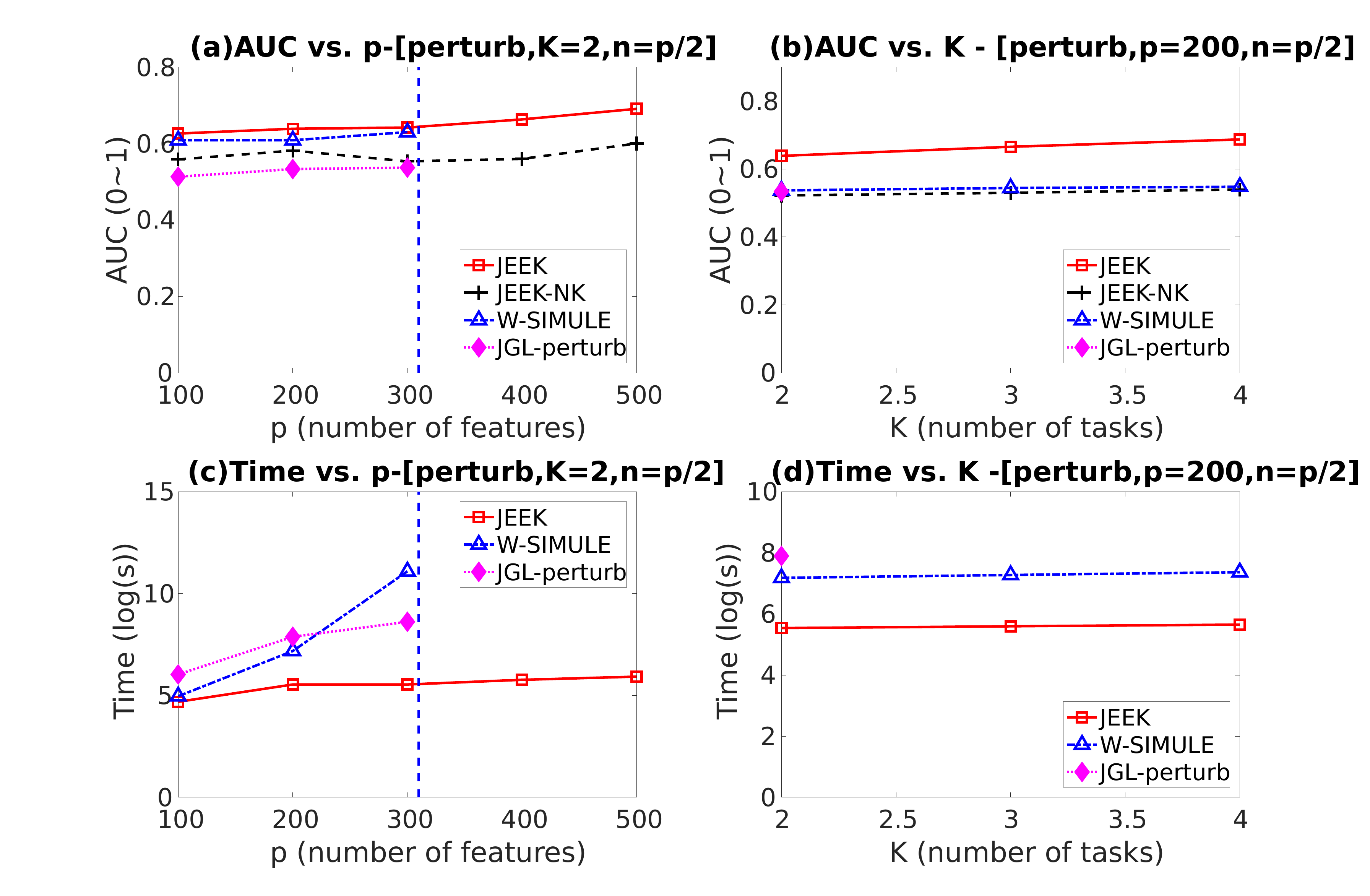}
    \caption{Perturbed node structure: (a) AUC-score vs the number of features ($p$). (b) AUC-score vs the number of tasks ($K$). (c) Time cost (log(seconds)) of JEEK and the baseline methods vs the number of features ($p$). (d) Time cost (log(seconds)) vs the number of tasks ($K$). for $p>300$ and $n=p/2$, W-SIMULE takes more than one month and JGL takes more than one day (indicated by dotted blue line). JGL package can only run for $K=2$.}
    \label{fig:perturb}
\end{figure*}

\paragraph{(c) Stability of Results when varying $W$ matrices:}~Additionally, to account for JEEK's explicit structure assumption, we also vary the ratio of known hub nodes to the total number of hub nodes. The known hub nodes are used to design the $\{W_I^{i},W_S\}$ matrices(details in Section~\ref{sec:rel}). In Figure~\ref{fig:ratio}(a) and (b), AUC for JEEK increases as the ratio of the number of known to total hub nodes increases. The initial increase in AUC is particularly significant as it confirms that JEEK is effective in harvesting additional knowledge for multiple sGGMs. The increase in AUC is particularly significant in the perturbed node case (Figure~\ref{fig:ratio}(b)). The AUC for the hub case does not have a correspondingly large increase with an increase in ratio because the total number of hub nodes are only $5\%$ of the total nodes. In comparison, an increase in this ratio leads to a more significant increase in AUC because the perturbed node assumption has more information than the cohub node structure. We show in Figure \ref{fig:ratio}(c) and (d) that the computational cost is largely unaffected by this ratio for both the cohub and perturbed node structure.

We also empirically check how the  parameter $r$ in the designed knowledge weight matrices influences the performance. In Figure~\ref{fig:gamma}(a) and (b), we show that the designed strategy for including additional knowledge as $W$ is not affected by variations of $\gamma$. We vary $\gamma$ in the set of $\{2,4,10\}$. In summary, the AUC-score(Figure~\ref{fig:gamma}(a),(b)) and computational time cost(Figure~\ref{fig:gamma}(c),(d)) remains relatively unaffected by the changes in $\gamma$ for both co-hub and perturbed-hub case.

\begin{figure*}[t]
    \centering
    \includegraphics[width=\linewidth]{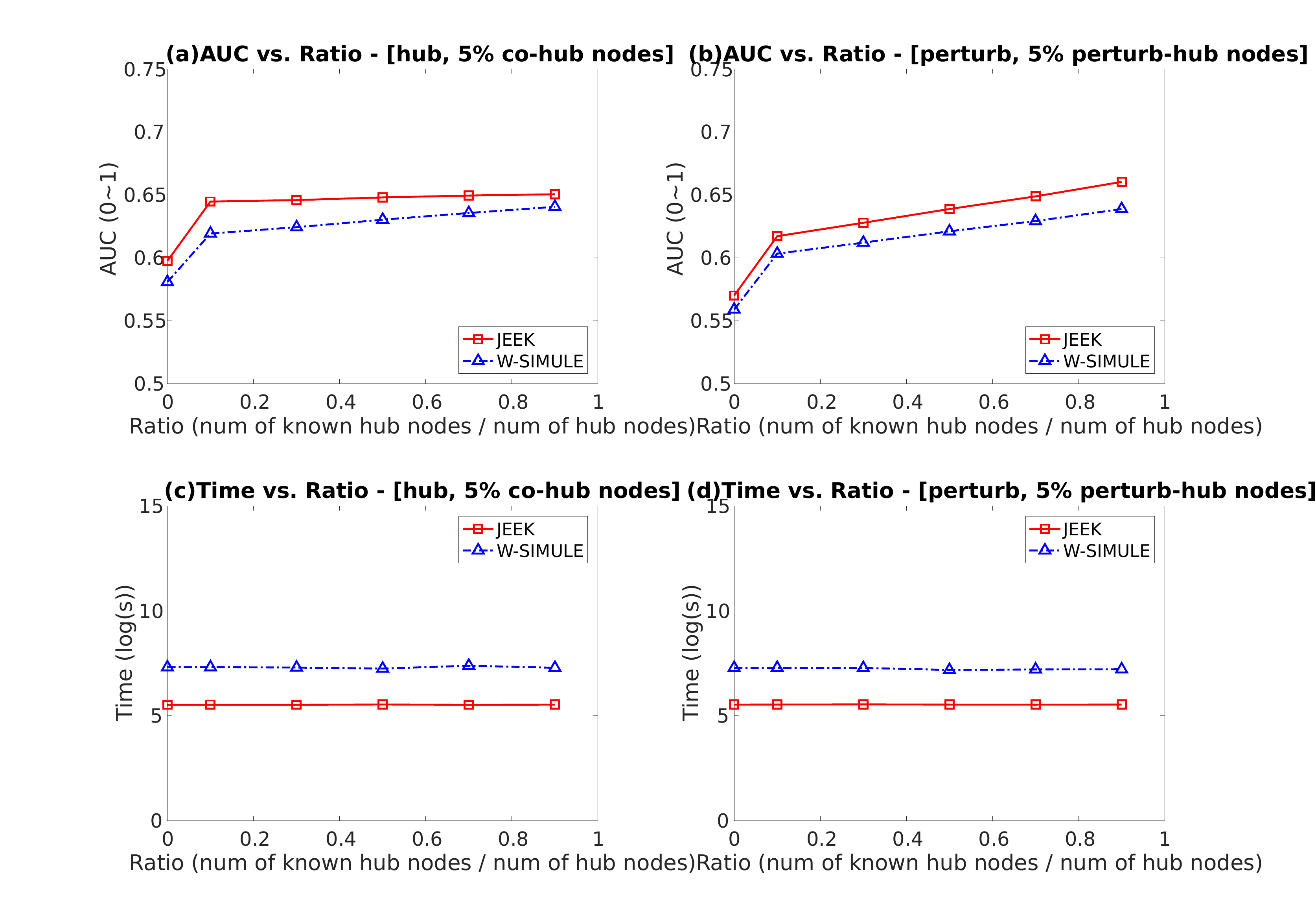}
    \caption{AUC-Score vs. ratio of number of known hub nodes to number of total hub nodes for (a) Cohub node structure (b) perturbed node structure. Computational Time Cost vs. ratio of number of known hub nodes to number of total hub nodes for (a) Cohub node structure (b) perturbed node structure.   }
    \label{fig:ratio}
\end{figure*}

\begin{figure*}[t]
    \centering
    \includegraphics[width=\linewidth]{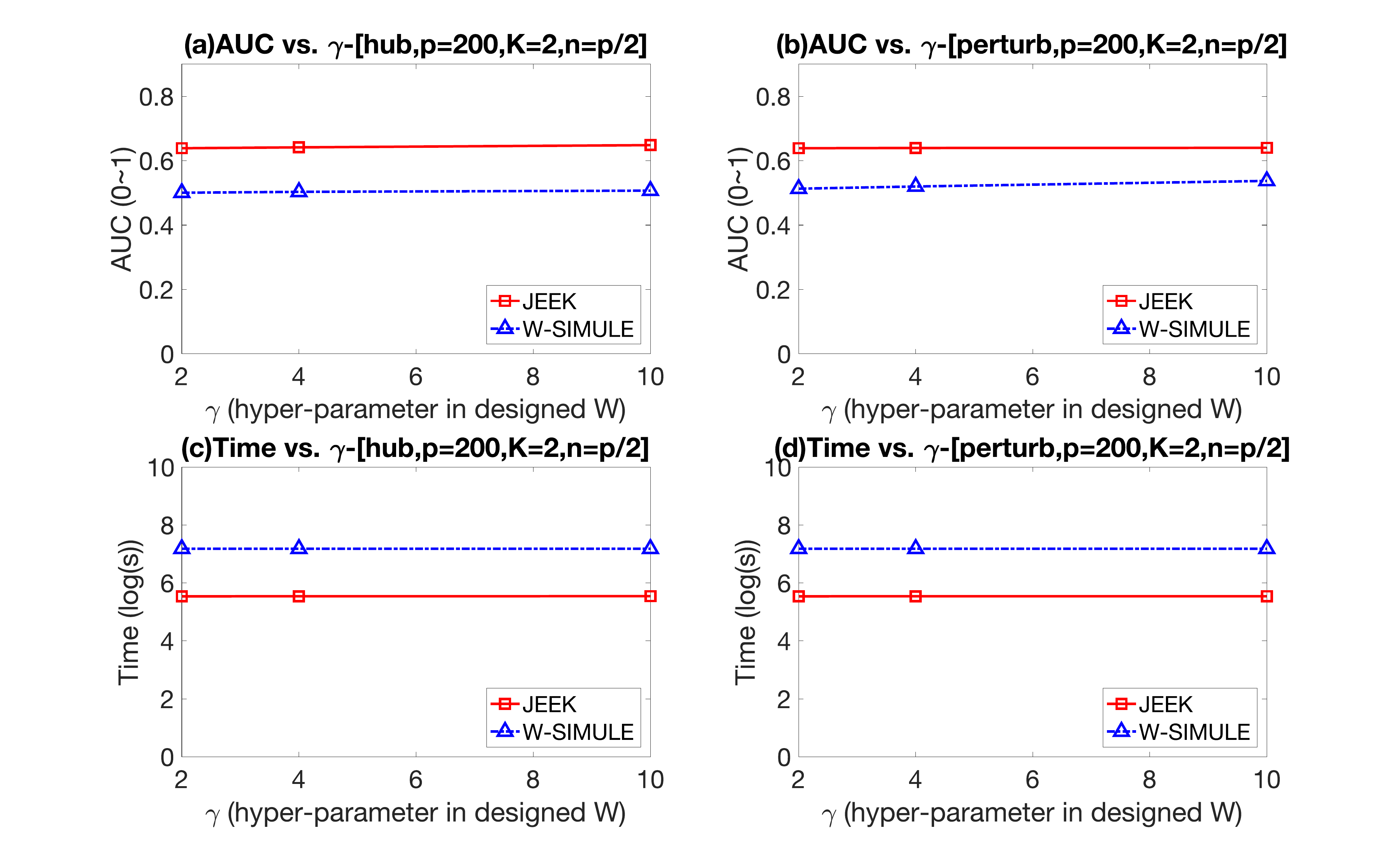}
    \caption{AUC-Score vs. $\gamma$ (a) Cohub node structure for (b) perturbed node structure. Computational Time Cost vs. $\gamma$ for (a) Cohub node structure (b) perturbed node structure.}
    \label{fig:gamma}
\end{figure*}

Figure~\ref{fig:varyn} empirically shows the performance of our methods and baselines when varying the number of samples. We vary $n$ in the set $\{100,200,400\}$ and fix $p=200$ and $K=2$. In Figure~\ref{fig:varyn} (c) and (d), we plot the time cost vs the number of samples $n$ for the cohub and perturbed node structures respectively.  JEEK is much faster than both JGL-node (JGL-hub and JGL-perturb) and W-SIMULE for all cases. Also, the time cost of JEEK does not vary significantly as $n$ increases. In Figure~\ref{fig:varyn} (a) and (b) we also present the AUC-score vs the varying number of samples $n$ for the cohub and perturbed node structures respectively.  For both the cohub and perturbed node structure, JEEK achieves a higher AUC-score compared to W-SIMULE and JGL-node (JGL-hub and JGL-perturb) when $p>n$. The only cases in which the W-SIMULE performs better in Figure~\ref{fig:varyn} (a) and (b) is the low dimensional case ($p=200$, $n=400$). This is as expected because JEEK is designed for high dimensional data situations.

\begin{figure*}[ht]
    \centering
    \includegraphics[width=\linewidth]{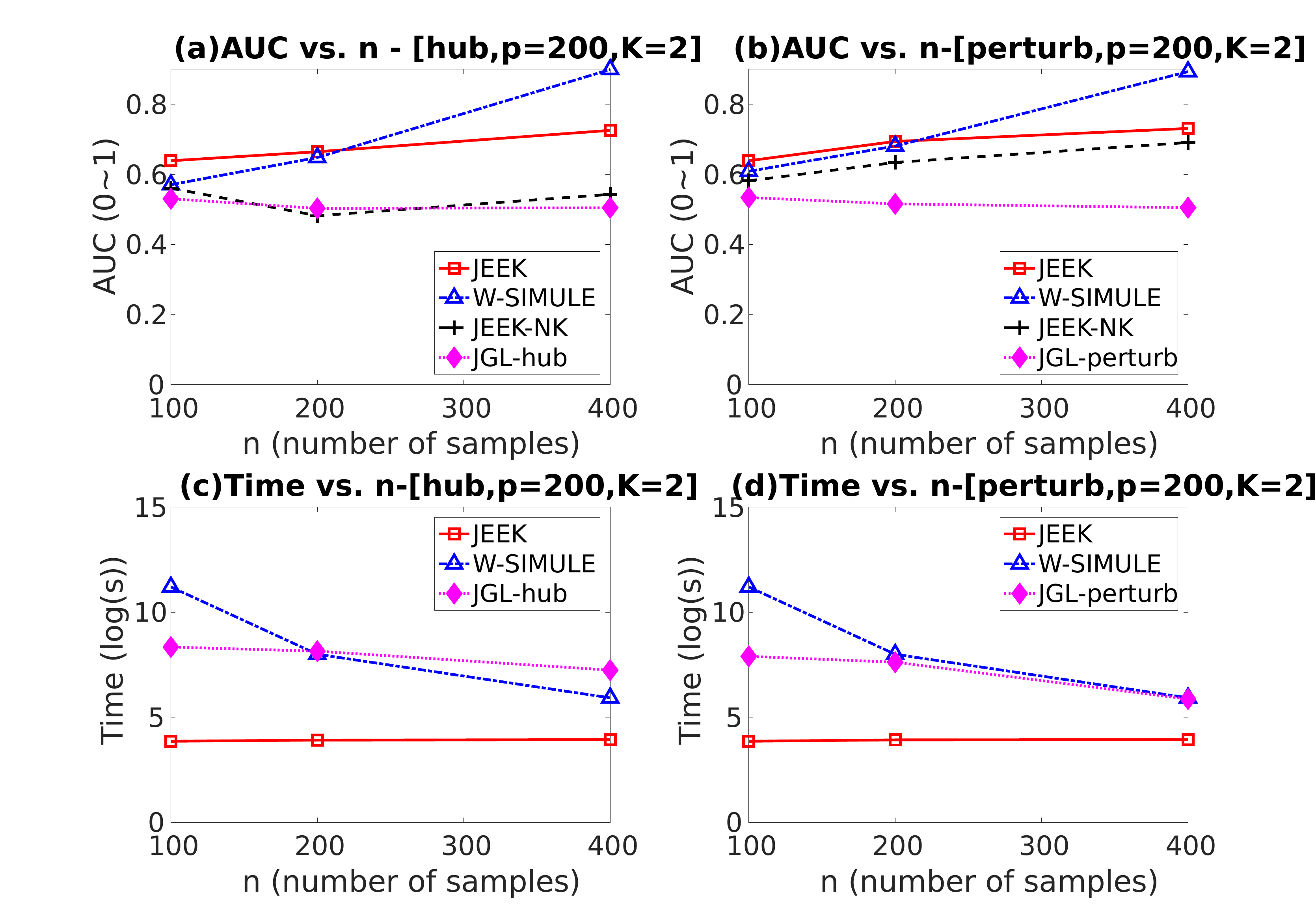}
    \caption{AUC vs. number of samples $n$ for (a) cohub node structure (b) perturbed node structure. Computational Time Cost vs. number of samples for (c) cohub node structure and (d) perturbed node structure.}
    \label{fig:varyn}
\end{figure*}

\subsection{More Experiment: Gene Interaction Network from Real-World Genomics Data}
\label{subsec:exp4more}

Next, we apply JEEK and the baselines on one real-world  biomedical data: gene expression profiles describing  many human samples across multiple cancer types aggregated by \cite{mccall2011gene}.

Advancements in genome-wide monitoring have resulted in enormous amounts of data across most of the common cell contexts, like  multiple common cancer types \cite{cancer2011integrated}.  Complex diseases such as cancer are the result of multiple genetic and epigenetic factors. Thus, recent research has shifted towards the identification of multiple
genes/proteins that interact directly or indirectly in contributing to certain disease(s). Structure learning of sGGMs on such heterogeneous datasets can uncover statistical dependencies among genes and understand how such dependencies vary from normal to abnormal or across different diseases. These structural variations are highly likely  to be contributing  markers that influence or cause the diseases.

Two major cell contexts are selected from the human expression dataset provided by \cite{mccall2011gene}: leukemia cells (including 895 samples and normal blood cells (including 227 samples). Then we choose the top 1000 features from the total 12,704 features (ranked by variance) and perform graph estimation on this two-task dataset. We explore two type of knowledge in the experiments.

The first kind (DAVID) is about the known group information about nodes, such as genes belonging to the same biological pathway or cellular location. We use the popular ``functional enrichment'' analysis tool DAVID  \cite{da2008systematic} to get a set of group information about the 1000 genes. Multiple different types of groups are provided by DAVID and we pick the co-pathway. We only use the grouping information covering 20\% of the nodes (randomly picked from 1000). 
The derived dependency graphs are compared by using the number of predicted edges being validated by three major existing protein/gene interaction databases \cite{prasad2009human,orchard2013mintact,stark2006biogrid}  (average over both  cell contexts).

The second type (PPI) is using existing known edges as the knowledge, like the known protein interaction databases for discovering gene networks (a semi-supervised setting for such estimations). We use three major existing protein/gene interaction databases \cite{prasad2009human,orchard2013mintact,stark2006biogrid}.  We only use the known interaction edge information covering 20\% of the nodes (randomly picked from 1000). 
The derived dependency graphs are compared by using the number of predicted edges that are not part of the known knowledge and are being validated by three major existing protein/gene interaction databases \cite{prasad2009human,orchard2013mintact,stark2006biogrid}  (average over both  cell contexts).

We would like to point out that the interactions JEEK and baselines find represent statistical dependencies between genes that vary across multiple cell types. There exist many possibilities for such interactions, including like physical protein-protein interactions, regulatory gene pairs or signaling relationships. Therefore, we combine multiple existing databases for a joint validation. The numbers of matches between interactions in databases and those edges predicted by each method have been shown as the $y$-axis in Figure~\ref{fig:sim_hubPPI}(c). 
It clearly shows that JEEK  consistently outperforms two baselines.

\subsection{More Experiment: Simulated Samples about Brain Connectivity with Distance as Knowledge}
\label{subsec:exp2more}

In this set of experiments, we confirm JEEK's ability to harvest additional knowledge using brain connectivity simulation data. Following \cite{bu2017integrating}, we employ the known Euclidean distance between brain regions as additional knowledge $W$ to generate simulated datasets. To generate the simulated graphs, we use $p_{j,k}=inv.logit(10-W_{j,k}/3)$ as the probability of an edge between nodes $j$ and $k$ in the graphs, where $W_{j,k}$  is the Euclidean distance between regions $j$ and $k$ of the brain.  

The generate datasets all have $p=116$ corresponding to the number of brain regions in the distance matrix shared by \cite{bu2017integrating}. We vary $K$ from the set $\{2,3,4\}$ with $n=p/2$.  The F1-scores  for JEEK, JEEK-NK and W-SIMULE is the best F1-score after tuning over $\lambda_n$. The hyperparameter tuning for NAK is done by the package itself.

\paragraph{Simulated brain data generation model: }~ We generate multiple sets of synthetic multivariate-Gaussian datasets. To imitate brain connectivity, we use the Euclidean distance between the brain regions as additional knowledge $W$ where $W_{j,k}$ is the Euclidean distance between regions $j$ and $k$. We fix $p=116$ corresponding to the number of brain regions \cite{bu2017integrating}. We generate the graph $\Omega^{(i)}$ following $\Omega^{(i)} = \Bb_I^{(i)} + \Bb_S + \delta I$, where each off-diagonal entry in $\Bb_I^{(i)}$ is generated independently and equals $0.5$ with probability $p_{j,k}=inv.logit(10-W_{j,k}/3)$ and $0$ with probability $1-p_{j,k}$ \cite{bu2017integrating}. Similarly, the shared part $\Bb_S$ is generated independently and equal to $0.5$ with probability $p_{j,k}=inv.logit(10-W_{j,k}/3)$ and $0$ with probability $1-p_{j,k}$. $\delta$ is selected large enough to guarantee the positive definiteness. This choice ensures there are more direct connections between close regions, effectively simulating brain connectivity. For each case of simulated data generation, we generate $K$ blocks of data samples following the distribution $N(0, (\Omega^{(i)})^{-1})$. Details see Section~\ref{sec:expsetMore}.

\paragraph{Experimental baselines:}
We choose W-SIMULE, NAK and JEEK with no additional knowledge(JEEK-NK) as the baselines. (see Section~\ref{sec:rel}).

\paragraph{Experiment Results: }~ We compare JEEK with the baselines regarding two aspects-- (a) Scalability (Computational time cost), and (b) Effectiveness (F1-score). Figure~\ref{fig:sim_brain}(a) and Figure~\ref{fig:sim_brain}(b) respectively show the F1-score vs. computational time cost with varying number of tasks $K$ and the number of samples $n$. In these experiments, $p=116$ corresponding to the number of brain regions in the distance matrix provided by \cite{bu2017integrating}. In Figure~\ref{fig:sim_brain}(a), we vary $K$ in the set $\{2,3,4\}$ with $n=p/2$. In Figure~\ref{fig:sim_brain}(b), we vary $n$ in the set $\{p/2,p,2p\}$ and fix $K=2$. The F1-score plotted for JEEK, JEEK-NK and W-SIMULE is the best F1-score after tuning over $\lambda_n$. The hyperparameter tuning for NAK is done by the package itself. For each simulation case, the computation time for each estimator is the summation of a method's execution time over all values of $\lambda_n$. The points in the top left region of Figure~\ref{fig:sim_brain} indicate higher F1-score and lower computational cost. Clearly, JEEK outperforms its baselines as all JEEK points are in the top left region of Figure~\ref{fig:sim_brain}. JEEK has a consistently higher F1-Score and is almost $6$ times faster than W-SIMULE in the high dimensional case. JEEK performs better than JEEK-NK, confirming the advantage of integrating additional knowledge in graph estimation. While NAK is fast, its F1-Score is nearly $0$ and hence, not useful for multi-sGGM estimation. 

\subsection{More Experiment:  Brain Connectivity Estimation from Real-World fMRI}
\label{subsec:exp1more}

\paragraph{Experimental Baselines:} We choose W-SIMULE as the the baseline in this experiment. We also compare JEEK to JEEK-NK and W-SIMULE-NK to demonstrate the need for additional knowledge in graph estimation. 


\paragraph{ABIDE Dataset:} This data is from the Autism Brain Imaging Data Exchange (ABIDE) \cite{di2014autism}, a publicly available resting-state fMRI dataset. The ABIDE data aims to understand human brain connectivity and how it reflects neural disorders \cite{van2013wu}. The data is retrieved from the Preprocessed Connectomes Project \cite{craddock2014preprocessed}, where preprocessing is performed using the Configurable Pipeline for the Analysis of Connectomes (CPAC) \cite{craddock2013towards} without global signal correction or band-pass filtering. After preprocessing with this pipeline, $871$ individuals remain ($468$ diagnosed with autism). Signals for the 160 (number of features $p=160$) regions of interest (ROIs) in the often-used Dosenbach Atlas \cite{dosenbach2010prediction} are examined.

\paragraph{Distance as Additional Knowledge:} To select the weights $\{ W_I^{(i)},W_S\}$, two separate spatial distance matrices $W$ were derived from the Dosenbach atlas. The first, referred to as \textit{anatomical\textsuperscript i}, gives each ROI one of 40 well-known, anatomic labels (\textit{e.g.} ``basal ganglia", ``thalamus"). Weights $W_{j,k}$ take the low value $i$ if two ROIs have the same label, and the high value $10-i$ otherwise. The second additional knowledge matrix, referred to as \textit{dist\textsuperscript i}, sets the weight of each edge ($W_{j,k}$) to its spatial length, in MNI space\footnote{MNI space is a coordinate system used to refer to analagous points on different brains.}, raised to the power $i$. Then $W_I^{(i)} = W_S = W$.

\paragraph{Cross-validation:} Classification is performed using the 3-fold cross-validation suggested by the literature \cite{poldrack2008guidelines}\cite{varoquaux2010brain}. The subjects are randomly partitioned into three equal sets: a training set, a validation set, and a test set. Each estimator produces $\hat{\Omega}^{(1)}- \hat{\Omega}^{(2)}$ using the training set. Then, these differential networks are used as inputs to linear discriminant analysis (LDA), which is tuned via cross-validation on the validation set. Finally, accuracy is calculated by running LDA on the test set. This classification process aims to assess the ability of an estimator to learn the differential patterns of the connectome structures. We cannot use NAK to perform classification for this task, as NAK outputs only an adjacency matrix, which cannot be used for estimation using LDA.

\paragraph{Parameter variation:}
The results are fairly robust to variations of the $W$. (see Table~\ref{tab:parameter_searching}). The effect of changing $W$ seems to have a fairly small effect on the log-likelihood of the model. This is likely because both penalize picking physically long edges, which agrees with observations from neuroscience. The \textit{dist} $W$ effectively encourages the selection of short edges, and the \textit{anatomical} $W$ also has substantial spatial localization.

\begin{table*}[hbt]
\centering
\caption{Variations of the $W$ and multi-task component yield fairly stable results.}
\label{tab:parameter_searching}

\begin{tabular}{|c|c|c|c|c|}
\hline
\textbf{Prior}                       & \multicolumn{2}{c|}{Sparsity=8\%} & \multicolumn{2}{c|}{Sparsity=16\%} \\ \hline
                                     & Log-Likelihood   & Test Accuracy  & Log-Likelihood   & Test Accuracy   \\ \hline
\textit{No Additional Knowledge}                    & -294.34          & 0.56           & -283.27          & 0.55            \\ \hline
\textit{dist}                        & -289.12          & 0.53           & -285.69          & 0.55            \\ \hline
\textit{dist\textsuperscript2}       & -283.78          & 0.54           & -282.92          & 0.54            \\ \hline
\textit{anatomical\textsuperscript1} & -292.42          & 0.56           & -289.34          & 0.57            \\ \hline
\textit{anatomical\textsuperscript2} & -291.29          & 0.58           & -285.63          & 0.56            \\ \hline
\end{tabular}
\end{table*}

\bibliographystyle{icml2018}
\bibliography{gm1801}

\end{document}